\newenvironment{myproof}[1]
{\par\noindent\textbf{Proof of #1.}\ \enspace\ignorespaces\begin{allowdisplaybreaks}}
{\end{allowdisplaybreaks}\hspace{\stretch{1}}$\square$}
\newtheorem{definition}{Definition}
\newtheorem{lemma}{Lemma}
\newtheorem{theorem}{Theorem}
\newtheorem{example}{Example}
\begin{document}

\ecjHeader{x}{x}{xxx-xxx}{201X}{Maximizing Monotone Approximately Submodular Minus Modular Functions}{C. Qian}
\title{\bf Multi-objective Evolutionary Algorithms are Still Good: Maximizing Monotone Approximately Submodular Minus Modular Functions}

\author{\name{\bf Chao Qian} \hfill \addr{qianc@nju.edu.cn}\\
        \addr{State Key Laboratory for Novel Software Technology, Nanjing University, Nanjing, 210023, China}
}

\maketitle

\begin{abstract}
As evolutionary algorithms (EAs) are general-purpose optimization algorithms, recent theoretical studies have tried to analyze their performance for solving general problem classes, with the goal of providing a general theoretical explanation of the behavior of EAs. Particularly, a simple multi-objective EA, i.e., GSEMO, has been shown to be able to achieve good polynomial-time approximation guarantees for submodular optimization, where the objective function is only required to satisfy some properties and its explicit formulation is not needed. Submodular optimization has wide applications in diverse areas, and previous studies have considered the cases where the objective functions are monotone submodular, monotone non-submodular, or non-monotone submodular. To complement this line of research, this paper studies the problem class of maximizing monotone approximately submodular minus modular functions (i.e., $g-c$) with a size constraint, where $g$ is a so-called non-negative monotone approximately submodular function and $c$ is a so-called non-negative modular function, resulting in the objective function $(g-c)$ being non-monotone non-submodular in general. Different from previous analyses, we prove that by optimizing the original objective function $(g-c)$ and the size simultaneously, the GSEMO fails to achieve a good polynomial-time approximation guarantee. However, we also prove that by optimizing a distorted objective function and the size simultaneously, the GSEMO can still achieve the best-known polynomial-time approximation guarantee. Empirical studies on the applications of Bayesian experimental design and directed vertex cover show the excellent performance of the GSEMO.
\end{abstract}

\begin{keywords}

Submodular optimization,
multi-objective evolutionary algorithms,
running time analysis,
computational complexity,
empirical study.

\end{keywords}

\section{Introduction}

As a kind of randomized metaheuristic optimization algorithm, evolutionary algorithms (EAs)~\citep{back:96} have been successfully applied to solve sophisticated optimization problems in diverse areas, e.g., data mining~\citep{mukhopadhyay2013survey}, machine learning~\citep{zhou2019evolutionary}, image processing~\citep{liang-fcs20} and controller optimization~\citep{qiao2020improved}, to name a few. One main advantage of EAs is the general-purpose property, i.e., EAs can be used to optimize any problem where solutions can be represented and evaluated. Meanwhile, the theoretical analysis, particularly running time analysis, of EAs has achieved progress during the past two decades. A lot of theoretical results (e.g.,~\citep{neumann2010bioinspired,auger2011theory}) have been derived, helping us understand the practical behaviors of EAs.

Previous analyses, however, mainly focused on isolated problems, which cannot reflect the general-purpose nature of EAs. To provide a general theoretical explanation of the behavior of EAs, there are some recent efforts~\citep{friedrich2015maximizing,qian2017generalsubset,friedrich2018heavy,qian2019maximizing,bian2020efficient} studying the running time complexity of EAs for solving general classes of submodular optimization problems.

Submodularity~\citep{nemhauser1978analysis} characterizes the diminishing returns property of a set function $f: 2^V \rightarrow \mathbb{R}$, i.e., $\forall X \subseteq Y \subseteq V, v \notin Y: f(X \cup \{v\}) -f(X) \geq f(Y \cup \{v\})-f(Y)$. It has several equivalent definitions, e.g., $\forall X \subseteq Y \subseteq V: f(Y)-f(X) \leq \sum\nolimits_{v \in Y \setminus X} (f(X \cup \{v\})-f(X))$. Submodular optimization has played an important role in many areas such as machine learning, data mining, natural language processing, computer vision, economics and operation research. On one hand, many of their applications involve submodular objective functions, e.g., active learning~\citep{golovin2011adaptive}, influence maximization~\citep{kempe2003maximizing}, document summarization~\citep{lin2011class}, image segmentation~\citep{jegelka2011submodularity} and maximum coverage~\citep{feige1998threshold}. On the other hand, the submodular property allows for polynomial-time approximation algorithms with theoretical guarantees. For example, a celebrated result by~\cite{nemhauser1978analysis} shows that for maximizing monotone submodular functions with a size constraint, the greedy algorithm, which iteratively adds one item with the largest marginal gain, can achieve an approximation ratio of $1-1/e$, which is optimal in general~\citep{nemhauser1978best}. Note that a set function $f: 2^V \rightarrow \mathbb{R}$ is monotone if $\forall X \subseteq Y \subseteq V: f(X) \leq f(Y)$, and a size constraint requires the size of a subset to be no larger than a budget $k$.

\cite{friedrich2015maximizing} first proved that for maximizing monotone submodular functions with a size constraint, the GSEMO, a simple multi-objective EA (MOEA) widely used in theoretical analyses, can achieve the optimal approximation ratio, i.e., $1-1/e$, in $O(n^2(\log n +k))$ expected running time, where $n$ is the size of the ground set $V$ and $k$ is the budget. They also considered a more general problem class, i.e., maximizing monotone submodular functions with $m$ matroid constraints. Note that a size constraint is actually a uniform matroid constraint. The (1+1)-EA, a simple single-objective EA with population size 1 and bit-wise mutation only, has been shown to be able to achieve a $(\frac{1}{m+1/p+\epsilon})$-approximation ratio in $O(\frac{1}{\epsilon}n^{2p(m+1)+1}m\log n)$ expected running time, where $p\geq 1$ and $\epsilon >0$.

Later,~\cite{qian2019maximizing} studied the problem class of maximizing monotone and approximately submodular functions with a size constraint. That is, the objective function to be maximized is not necessarily submodular, but approximately submodular, i.e., satisfies the submodular property to some extent. Several notions of approximate submodularity~\citep{krause2010submodular,das2011submodular,horel2016maximization} have been proposed to measure to what extent a set function $f$ has the submodular property in different ways. For example,~\cite{das2011submodular} and~\cite{bian2017guarantees} introduced the submodularity ratio $\gamma(f)=\min_{X \subseteq Y \subseteq V} \frac{\sum_{v \in Y\setminus X} (f(X \cup \{v\})-f(X))}{f(Y)-f(X)}$, and~\cite{zhang2016submodular} introduced another one $\alpha(f)=\min_{X \subseteq Y, v \notin Y} \frac{f(X \cup \{v\})-f(X)}{f(Y \cup \{v\})-f(Y)}$. For a monotone function $f$, $\gamma(f) \in [0,1]$, $\alpha(f) \in [0,1]$, and the larger $\gamma(f)$ or $\alpha(f)$, the more close to submodularity $f$ is. Using different notions of approximate submodularity,~\cite{qian2019maximizing} proved that the GSEMO can always achieve the best-known polynomial-time approximation guarantee, previously obtained by the greedy algorithm.~\cite{qian2017generalsubset} also considered the case with a general cost constraint, that is, $c(X) \leq k$, where $c(\cdot)$ is a monotone function. They proved that the GSEMO can achieve the best-known approximation ratio of $(\alpha/2)(1-e^{-\alpha})$, where $\alpha$ is the submodularity ratio of the objective function $f$, but the required running time is unbounded.~\cite{bian2020efficient} then proposed a single-objective EA to optimize a surrogate objective integrating $f$ and the cost function $c$, which can achieve this approximation ratio in $O(n^3)$ expected running time.

For non-monotone cases,~\cite{friedrich2015maximizing} studied a specific case, i.e., symmetric functions. They proved that for maximizing symmetric submodular functions with $m$ matroid constraints, the GSEMO can achieve a $(\frac{1}{(m+2)(1+\epsilon)})$-approximation ratio in $O(\frac{1}{\epsilon}n^{m+6}\log n)$ expected running time.~\cite{qian2019maximizing} considered the problem class of maximizing submodular and approximately monotone functions with a size constraint. A set function $f$ is approximately monotone if $\forall X \subseteq V, v \notin X: f(X \cup \{v\}) \geq f(X)-\epsilon$, where $\epsilon \geq 0$ captures the degree of approximate monotonicity. They proved that the GSEMO can find a subset $X$ with $f(X) \geq (1-1/e)\cdot (\mathrm{OPT}-k\epsilon)$ in $O(n^2(\log n +k))$ expected running time, where $\mathrm{OPT}$ denotes the optimal function value. The general non-monotone case has been studied recently.~\cite{friedrich2018heavy} and~\cite{qian2019maximizing} considered the problem class of maximizing non-monotone submodular functions without constraints.~\cite{friedrich2018heavy} proved that the (1+1)-EA using a heavy-tailed mutation operator can achieve an approximation ratio of $(\frac{1}{3}-\frac{\epsilon}{n})$ in $O(\frac{1}{\epsilon}n^3\log\frac{n}{\epsilon}+n^{\beta})$ expected running time, where $\epsilon >0$. The heavy-tailed mutation operator samples $l \in \{1,2,\ldots,n\}$ according to a power-law distribution with a parameter $\beta>1$, and then flips $l$ bits of a solution chosen uniformly at random.~\cite{qian2019maximizing} proved that a variant of GSEMO, called GSEMO-C, can also achieve the $(\frac{1}{3}-\frac{\epsilon}{n})$-approximation ratio in $O(\frac{1}{\epsilon}n^4\log n)$ expected running time. The difference between the GSEMO and GSEMO-C is that the GSEMO generates a new offspring solution by bit-wise mutation in each iteration, whereas the GSEMO-C generates this new solution (i.e., set) as well as its complement.

The above mentioned works, showing the good general approximation ability of EAs, are summarized in Table~\ref{table_runtime}, where each work is categorized according to whether the concerned objective functions satisfy the monotone and submodular property. A natural question is then whether EAs can still achieve good polynomial-time approximation guarantees when the objective functions are neither monotone nor submodular. In this paper, we thus consider the problem class of maximizing monotone approximately submodular minus modular functions with a size constraint, i.e.,
\begin{align}\label{eq-problem-1}
\arg\max\nolimits_{X \subseteq V} \quad g(X)-c(X) \qquad \text{s.t.} \qquad |X| \leq k,
\end{align}
where $g$ is a non-negative monotone approximately submodular function, and $c$ is a non-negative modular function, i.e., $\forall X \subseteq V: c(X)=\sum_{v\in X} c(\{v\})$. The objective function $(g-c)$ is non-submodular in general, and can be non-monotone and take negative values. It is known that monotone approximately submodular maximization with a size constraint has various applications, such as Bayesian experimental design~\citep{krause2008near}, dictionary selection~\citep{krause2010submodular} and sparse regression~\citep{das2011submodular}. The considered problem Eq.~(\refeq{eq-problem-1}) is a natural extension by encoding a cost for each item.~\cite{harshaw2019submodular} proposed the distorted greedy algorithm and proved that it can find a subset $X$ with $g(X)-c(X) \geq (1-e^{-\gamma})\cdot g(X^*) -c(X^*)$, where $X^*$ denotes an optimal solution of Eq.~(\refeq{eq-problem-1}), and $\gamma$ is the submodularity ratio of $g$ measuring how close $g$ is to submodularity. Note that this is the best-known polynomial-time approximation guarantee.

\begin{sidewaystable*}\centering
\caption{A summary of the works on analyzing the running time of EAs for solving problem classes of submodular optimization, where each work is categorized according to the property of considered objective functions.}\label{table_runtime}\vspace{1em}
\small
\begin{tabular}{c|l|c|c}
\hline\hline
& & &\\[-8pt]
Property of objectives & Problem & Approximation ratio & Expected running time  \\[2pt]
\hline
& &  &\\[-8pt]
\multirow{4}{*}{\makecell[c]{Monotone\\ submodular}}& \multirow{2}{*}{\makecell[l]{Monotone submodular maximization \\s.t. a size constraint~\citep{friedrich2015maximizing}}}  &  \multirow{2}{*}{$1-1/e$} & \multirow{2}{*}{$O(n^2(\log n +k))$} \\
&&&\\
\cline{2-4}
& & &\\[-8pt]
&\multirow{2}{*}{\makecell[l]{Monotone submodular maximization\\ s.t. $m$ matroid constraints~\citep{friedrich2015maximizing}}}  &\multirow{2}{*}{$1/(m+1/p+\epsilon)$} & \multirow{2}{*}{$O(\frac{1}{\epsilon}n^{2p(m+1)+1}m\log n)$}\\
&&&\\
\hline\hline
& & & \\[-8pt]
\multirow{4}{*}{\makecell[c]{Monotone\\ non-submodular}}& \multirow{2}{*}{\makecell[l]{Monotone approximately submodular maximization\\ s.t. a size constraint~\citep{qian2019maximizing}}}  &  \multirow{2}{*}{$1-e^{-\gamma}$} & \multirow{2}{*}{$O(n^2(\log n +k))$} \\
&&&\\
\cline{2-4}
& & &\\[-8pt]
&\multirow{2}{*}{\makecell[l]{Monotone approximately submodular maximization\\ s.t. a monotone cost constraint~\citep{bian2020efficient}}}  &\multirow{2}{*}{$(\alpha/2)(1-e^{-\alpha})$} & \multirow{2}{*}{$O(n^3)$}\\
&&&\\
\hline\hline
& & & \\[-8pt]
\multirow{6}{*}{\makecell[c]{Non-monotone\\ submodular}} & \multirow{2}{*}{\makecell[l]{Symmetric submodular maximization\\ s.t. $m$ matroid constraints~\citep{friedrich2015maximizing}}} & \multirow{2}{*}{$1/((m+2)(1+\epsilon))$} & \multirow{2}{*}{$O(\frac{1}{\epsilon}n^{m+6}\log n)$}\\
&&&\\
\cline{2-4}
& & &\\[-8pt]
& \multirow{2}{*}{\makecell[l]{Submodular approximately monotone maximization\\ s.t. a size constraint~\citep{qian2019maximizing}}} & \multirow{2}{*}{$(1-1/e)\cdot (\mathrm{OPT}-k\epsilon)$} & \multirow{2}{*}{$O(n^2(\log n +k))$}\\
&&&\\
\cline{2-4}
& & &\\[-8pt]
& \multirow{2}{*}{\makecell[l]{Non-monotone submodular maximization\\ without constraints~\citep{friedrich2018heavy,qian2019maximizing}}} & \multirow{2}{*}{$1/3-\epsilon/n$} & \multirow{2}{*}{\makecell[l]{$O(\frac{1}{\epsilon}n^3\log\frac{n}{\epsilon}+n^{\beta})$\\[1pt]$O(\frac{1}{\epsilon}n^4\log n)$}}\\
& & &\\[4pt]
\hline\hline
& & &\\[-8pt]
\multirow{2}{*}{\makecell[c]{Non-monotone\\ non-submodular}}&\multirow{2}{*}{\makecell[l]{Monotone approximately submodular minus modular\\ maximization s.t. a size constraint (This work)}}  &\multirow{2}{*}{$(1-e^{-\gamma})\cdot g(X^*) -c(X^*)$} & \multirow{2}{*}{$O(n^2(\log n+k))$}\\
&&&\\[2pt]
\hline\hline
\end{tabular}\vspace{0.8em}
\makecell[l]{Notes: $n$ is the problem size, $k$ is the budget of a size constraint $|X| \leq k$, $m$ is the number of matroid constraints, $p\geq 1$, $\epsilon >0$, $\gamma, \alpha \in [0,1]$ are the \\submodularity ratios, $\beta >1$, $\mathrm{OPT}$ and $X^*$ denote the optimal function value and an optimal solution, respectively.}
\end{sidewaystable*}

In this paper, we analyze the approximation performance of the GSEMO for solving Eq.~(\refeq{eq-problem-1}). We prove that by maximizing a distorted objective $(1-\gamma/k)^{k-|X|}g(X)-c(X)+(|X|/k)c(V)$ and minimizing the subset size $|X|$ simultaneously, the GSEMO can obtain a subset $X$ with $|X|\leq k$ and $g(X)-c(X) \geq (1-e^{-\gamma})\cdot g(X^*) -c(X^*)$ in $O(n^2(\log n+k))$ expected running time. Thus, the last row of Table~\ref{table_runtime} is filled. Our analysis together with~\citep{friedrich2015maximizing,friedrich2018heavy,qian2019maximizing} show that a simple MOEA, i.e., GSEMO, can achieve good polynomial-time approximation guarantees for diverse submodular optimization problems, disclosing the general-purpose property of EAs. Note that our analysis is different from previous ones. In previous analyses, the original objective function is often treated as one objective to be optimized in the bi-objective reformulation~\citep{friedrich2015maximizing,friedrich2018heavy,qian2019maximizing}, while we use a distorted one here. In fact, we prove that by maximizing the original objective function $(g(X)-c(X))$ and minimizing $|X|$ simultaneously, the GSEMO fails to achieve the approximation guarantee $g(X)-c(X) \geq (1-e^{-\gamma})\cdot g(X^*) -c(X^*)$ in polynomial running time.

As the distorted greedy algorithm is the existing algorithm with the best-known approximation guarantee~\citep{harshaw2019submodular}, we empirically compare the GSEMO with it as well as its stochastic version on the applications of Bayesian experimental design and directed vertex cover. The results show that the GSEMO can perform significantly better by using more running time. Compared with the running time bound (i.e., the worst-case running time) derived in the theoretical analysis, the GSEMO can be relatively efficient in practice. We also run the stochastic distorted greedy algorithm multiple times independently until the running time reaches that of the GSEMO. By comparing with the best solution found in the multiple runs, we find that the GSEMO is still significantly better.

To examine whether employing more advanced MOEAs can further bring performance improvement, we use the popular NSGA-II algorithm~\citep{deb2002fast} to maximize the distorted objective function and minimize the subset size simultaneously. Surprisingly, we observe that the GSEMO performs better. One reason may be that the population of the NSGA-II can contain dominated solutions, leading to the low efficiency, while the population of the GSEMO contains only non-dominated solutions. Another reason may be that in our experiments, the NSGA-II uses a much smaller probability of performing mutation than the GSEMO. Thus, an interesting future work is to investigate whether the NSGA-II can be better by varying its parameters, e.g., the population size and the probability of performing mutation.

Furthermore, we run the GSEMO to maximize the original objective function $(g(X)-c(X))$ and minimize the subset size $|X|$ simultaneously. We observe that it can achieve good performance in most cases, but can even be worse than the distorted greedy algorithm sometimes. This verifies the theoretical analysis that using $(g(X)-c(X))$ directly cannot guarantee a good approximation, i.e., can lead to bad performance in worst cases.

The rest of this paper is organized as follows. Section~\ref{sec-problem} introduces the considered problem class. Sections~\ref{sec-algorithm} to~\ref{sec-experiment} present the GSEMO, theoretical analysis and empirical study, respectively. Section~\ref{sec-conlusion} concludes the paper.

\section{Maximizing Monotone Approximately Submodular Minus Modular Functions with a Size Constraint}\label{sec-problem}

Let $\mathbb{R}$ and $\mathbb{R}^{+}$ denote the set of reals and non-negative reals, respectively. Given a ground set $V=\{v_1,v_2,\ldots,v_n\}$ of items, a set function $f:2^V \rightarrow \mathbb{R}$ is defined on subsets of $V$, and maps any subset to a real value. A set function $f:2^V \rightarrow \mathbb{R}$ is monotone if $\forall X \subseteq Y \subseteq V: f(X) \leq f(Y)$, implying that the function value will not decrease as a set extends.

\begin{definition}[Submodularity~\citep{nemhauser1978analysis}]
A set function $f:2^V \rightarrow \mathbb{R}$ is submodular if
\begin{align}\label{def-submodular}
\forall X,Y\subseteq V: f(X)+f(Y) \geq f(X\cup Y) +f(X \cap Y);
\end{align}
or equivalently
\begin{align}\label{def-submodular-1}
\forall X \subseteq Y \subseteq V, v \notin Y: f(X \cup \{v\})-f(X) \geq f(Y \cup \{v\}) - f(Y);
\end{align}
or equivalently
\begin{align}\label{def-submodular-2}
\forall X \subseteq Y \subseteq V: f(Y)-f(X) \leq \sum\nolimits_{v \in Y \setminus X} \big(f(X \cup \{v\})-f(X)\big).
\end{align}
\end{definition}

Eq.~(\refeq{def-submodular}) implies that the sum of the function values of any two sets is at least as large as that of their union and intersection. Eq.~(\refeq{def-submodular-1}) intuitively represents the diminishing returns property, i.e., the benefit of adding an item to a set will not increase as the set extends. Eq.~(\refeq{def-submodular-2}) implies that the benefit by adding a set of items to a set $X$ is no larger than the combined benefits of adding its individual items to $X$. A set function $f:2^V \rightarrow \mathbb{R}$ is modular if Eq.~(\refeq{def-submodular}), Eq.~(\refeq{def-submodular-1}) or Eq.~(\refeq{def-submodular-2}) holds with equality. For a modular function $f$, it holds that $\forall X \subseteq V: f(X)=\sum_{v \in X} f(\{v\})$; it is non-negative iff $\forall v \in V: f(\{v\}) \geq 0$.

For a general set function $f:2^V \rightarrow \mathbb{R}$, several notions of approximate submodularity~\citep{krause2010submodular,das2011submodular,zhang2016submodular,horel2016maximization,zhou2016causal} have been introduced to measure to what extent $f$ has the submodular property. Among them, the submodularity ratio as presented in Definition~\ref{def-approx-submodular-2} has been used most widely.
\begin{definition}[Submodularity Ratio~\citep{das2011submodular}]\label{def-approx-submodular-2}
Let $f: 2^V \rightarrow \mathbb{R}$ be a set function. The submodularity ratio of $f$ w.r.t. a set $X \subseteq V$ and a parameter $l \geq 1$ is
\begin{align}
\gamma_{X,l}(f)=\min_{L \subseteq X, S: |S|\leq l, S \cap L =\emptyset} \frac{\sum_{v \in S} (f(L \cup \{v\})-f(L))}{f(L \cup S)-f(L)}.
\end{align}
\end{definition}
The submodularity ratio is actually defined based on Eq.~(\refeq{def-submodular-2}), and captures how much more $f$ can increase by adding any set $S$ of size at most $l$ to any subset $L$ of $X$, compared with the combined benefits of adding the individual items of $S$ to $L$. For a monotone set function $f$, it holds that (1) $\forall X \subseteq V, l \geq 1: \gamma_{X,l}(f) \in [0,1]$; (2) $f$ is submodular iff $\forall X \subseteq V, l \geq 1: \gamma_{X,l}(f) = 1$. The submodularity ratio has been used to measure the closeness of the objective function to submodularity in diverse non-submodular applications, e.g., sparse regression~\citep{das2011submodular}, low rank optimization~\citep{khanna2017approximation}, sparse support selection~\citep{elenberg2018restricted}, and determinantal function maximization~\citep{qian2018approximation}, where the corresponding lower bounds of $\gamma_{X,l}(f)$ have been derived.

In this paper, we will use a slightly different definition of submodularity ratio as in~\citep{bian2017guarantees,bogunovic2018robust,harshaw2019submodular}, i.e.,
\begin{align}\label{eq-submodular}
\gamma(f)=\min_{X \subseteq Y \subseteq V} \frac{\sum_{v \in Y\setminus X} (f(X \cup \{v\})-f(X))}{f(Y)-f(X)}.
\end{align}
It is easy to see that $\forall X \subseteq V, l \geq 1: \gamma_{X,l}(f) \geq \gamma(f)$. For a monotone set function $f$, it holds that (1) $\gamma(f) \in [0,1]$, and (2) $f$ is submodular iff $\gamma(f) = 1$.

The studied problem class is presented in Definition~\ref{def-Prob}. The goal is to find a subset of size at most $k$ maximizing a given objective function, which is the difference between a non-negative monotone approximately submodular function $g$ and a non-negative modular function $c$. The approximately submodular degree of $g$ is characterized by its submodularity ratio $\gamma(g)$, which will be represented as $\gamma$ for short.

\begin{definition}[Maximizing Monotone Approximately Submodular Minus Modular Functions with a Size Constraint]\label{def-Prob}
Given a non-negative monotone approximately submodular function $g: 2^V \rightarrow \mathbb{R}^+$, a non-negative modular function $c: 2^V \rightarrow \mathbb{R}^+$, and a budget $k$, to find a subset $X\subseteq V$ of size at most $k$ such that
\begin{align}\label{eq-problem}
\arg \max\nolimits_{X \subseteq V}\quad g(X)-c(X) \qquad  \text{s.t.} \qquad |X| \leq k.
\end{align}
\end{definition}

This is a natural extension of the widely studied problem of maximizing monotone approximately submodular functions with a size constraint~\citep{das2018approximate} by considering the cost (modeled by the modular function $c$) for each item. Note that the objective function $(g-c)$ is non-submodular in general, because otherwise $g$ is submodular, making a contradiction. It is easy to see that $(g-c)$ can be non-monotone and take negative values.

For submodular optimization, it is well known that the greedy algorithm, which iteratively adds one item with the largest marginal gain on the objective function, is a good approximation solver in many cases.~\cite{harshaw2019submodular}, however, showed that the greedy algorithm fails to obtain an approximation guarantee for the considered problem Eq.~(\refeq{eq-problem}), and thus, proposed the distorted greedy algorithm as presented in Algorithm~\ref{alg:dg}. In the $i$-th iteration, rather than maximizing the marginal gain on $g-c$, i.e., $g(X_i \cup \{v\})-c(X_i \cup \{v\})-(g(X_i)-c(X_i))=(g(X_i \cup \{v\})-g(X_i))-c(\{v\})$, it maximizes a distorted one, $\left(1-\frac{\gamma}{k}\right)^{k-(i+1)}(g(X_i \cup \{v\})-g(X_i))-c(\{v\})$, which gradually increases the importance of $g$. It has been proved that the distorted greedy algorithm outputs a subset $X$ with $g(X)-c(X) \geq (1-e^{-\gamma})\cdot g(X^*)-c(X^*)$, where $X^*$ denotes an optimal solution of Eq.~(\refeq{eq-problem}). Note that this polynomial-time approximation guarantee is the best known one. Though its optimality is not yet known, the factor $(1-e^{-\gamma})$ w.r.t. $g(X^*)$ is optimal, because~\cite{harshaw2019submodular} have proved that when the cost for each item is 0, i.e., the objective function is just $g$, no polynomial-time algorithm can achieve $(1-e^{-\gamma}+\epsilon)$-approximation, where $\epsilon>0$.

\begin{algorithm}[h!]\caption{Distorted Greedy Algorithm}
\label{alg:dg}
\textbf{Input}: monotone approximately submodular $g: 2^V \rightarrow \mathbb{R}^+$ with the submodularity ratio $\gamma$, modular $c: 2^V \rightarrow \mathbb{R}^+$, and budget $k$\\
\textbf{Process}:
\begin{algorithmic}[1]
\STATE Let $X_0=\emptyset$;
\STATE \textbf{for} $i=0$ \textbf{to} $k-1$ \textbf{do}
\STATE \quad $v_i \leftarrow \arg\max_{v \in V} \left(1-\frac{\gamma}{k}\right)^{k-(i+1)}(g(X_i \cup \{v\})-g(X_i))-c(\{v\})$;
\STATE \quad \textbf{if} {$\left(1-\frac{\gamma}{k}\right)^{k-(i+1)}(g(X_i \cup \{v_i\})-g(X_i))-c(\{v_i\})>0$} \,\textbf{then}
\STATE \qquad $X_{i+1} \leftarrow X_{i} \cup \{v_i\}$
\STATE \quad \textbf{else}
\STATE \qquad $X_{i+1} \leftarrow X_{i}$
\STATE \quad \textbf{end if}
\STATE \textbf{end for}
\STATE \textbf{return} $X_k$
\end{algorithmic}
\end{algorithm}

For acceleration,~\cite{harshaw2019submodular} further proposed the stochastic distorted greedy algorithm by adopting the random sampling technique~\citep{Baharan.aaai15}. As presented in Algorithm~\ref{alg:sdg}, in each iteration, it selects an item from a random sample of size $\lceil\frac{n}{k}\ln(\frac{1}{\epsilon})\rceil$, instead of the whole set $V$. Thus, the running time (counted by the number of function evaluations) is reduced from $O(kn)$ to $O(n \log \frac{1}{\epsilon})$, while the output subset $X$ can keep an approximation guarantee as $\mathbb{E}(g(X)-c(X)) \geq (1-e^{-\gamma}-\epsilon)\cdot g(X^*)-c(X^*)$, where $\mathbb{E}(\cdot)$ denotes the expectation of a random variable.

\begin{algorithm}[h!]\caption{Stochastic Distorted Greedy Algorithm}
\label{alg:sdg}
\textbf{Input}: monotone approximately submodular $g: 2^V \rightarrow \mathbb{R}^+$ with the submodularity ratio $\gamma$, modular $c: 2^V \rightarrow \mathbb{R}^+$, budget $k$, and $\epsilon>0$\\
\textbf{Process}:
\begin{algorithmic}[1]
\STATE Let $X_0=\emptyset$;
\STATE \textbf{for} $i=0$ \textbf{to} $k-1$ \textbf{do}
\STATE \quad $V_i \leftarrow \lceil\frac{n}{k}\ln(\frac{1}{\epsilon})\rceil$ items uniformly sampled from $V$ with replacement;
\STATE \quad $v_i \leftarrow \arg\max_{v \in V_i} \left(1-\frac{\gamma}{k}\right)^{k-(i+1)}(g(X_i \cup \{v\})-g(X_i))-c(\{v\})$;
\STATE \quad \textbf{if} {$\left(1-\frac{\gamma}{k}\right)^{k-(i+1)}(g(X_i \cup \{v_i\})-g(X_i))-c(\{v_i\})>0$} \,\textbf{then}
\STATE \qquad $X_{i+1} \leftarrow X_{i} \cup \{v_i\}$
\STATE \quad \textbf{else}
\STATE \qquad $X_{i+1} \leftarrow X_{i}$
\STATE \quad \textbf{end if}
\STATE \textbf{end for}
\STATE \textbf{return} $X_k$
\end{algorithmic}
\end{algorithm}

These two algorithms require the submodularity ratio $\gamma$ of the function $g$. In cases where the exact value of $\gamma$ is unknown, lower bounds of $\gamma$ can be used, and the approximation guarantees change accordingly, i.e., $\gamma$ is replaced by its lower bound. Note that the value oracle model is assumed, i.e., for a subset $X$, an algorithm can query an oracle to obtain its function value.

\section{Multi-objective Evolutionary Algorithms}\label{sec-algorithm}

To examine the approximation performance of EAs optimizing the problem class in Definition~\ref{def-Prob}, we consider the GSEMO, a simple MOEA widely used in previous theoretical analyses~\citep{laumanns2004running,friedrich2010approximating,neumann2011computing,qian2013analysis}. As presented in Algorithm~\ref{algo:GSEMO}, the GSEMO is used for maximizing multiple pseudo-Boolean objective functions simultaneously. Note that a subset $X$ of $V$ can be represented by a Boolean vector $\bm{x} \in \{0,1\}^n$, where the $i$-th bit $x_i=1$ if $v_i \in X$, otherwise $x_i=0$. Thus, a pseudo-Boolean function $f: \{0,1\}^n \rightarrow \mathbb{R}$ naturally characterizes a set function $f: 2^{V} \rightarrow \mathbb{R}$. In the following, $\bm{x}\in \{0,1\}^n$ and its corresponding subset will not be distinguished for notational convenience.

Different from the scenario of single-objective optimization, solutions may be incomparable in multi-objective maximization $\max\, (f_1,f_2,\ldots,f_m)$, due to the conflicting of objectives. The \emph{domination}-based comparison is usually adopted.

\begin{definition}[Domination]\label{def_Domination}
For two solutions $\bm x$ and $\bm{x}'$,
\begin{enumerate}
  \item $\bm{x}$ \emph{weakly dominates} $\bm{x}'$ (i.e., $\bm{x}$ is \emph{better} than $\bm{x}'$, denoted by $\bm{x} \succeq \bm{x}'$) if \;$\forall 1 \leq i \leq m: f_i(\bm{x}) \geq f_i(\bm{x}')$;
  \item ${\bm{x}}$ \emph{dominates} $\bm{x}'$ (i.e., $\bm{x}$ is \emph{strictly better} than $\bm{x}'$, denoted by $\bm{x} \succ \bm{x}'$) if ${\bm{x}} \succeq \bm{x}' \wedge \exists i: f_i(\bm{x}) > f_i(\bm{x}')$;
  \item $\bm{x}$ and $\bm{x}'$ are \emph{incomparable} if neither $\bm{x} \succeq \bm{x}'$ nor $\bm{x}' \succeq \bm{x}$.
\end{enumerate}
\end{definition}

As presented in Algorithm~\ref{algo:GSEMO}, the GSEMO starts from a random initial solution (lines~1--2), and iteratively improves the quality of solutions in the population $P$ (lines~3--9). In each iteration, a parent solution $\bm{x}$ is selected from $P$ uniformly at random (line~4), and used to generate an offspring solution $\bm{x}'$ by bit-wise mutation (line~5), which flips each bit of $\bm{x}$ independently with probability $1/n$. The offspring solution $\bm{x}'$ is then used to update the population $P$ (lines~6--8). If $\bm{x}'$ is not dominated by any parent solution in $P$ (line~6), it will be included into $P$, and meanwhile those parent solutions weakly dominated by $\bm{x}'$ will be deleted (line~7). By this updating procedure, the solutions contained in the population $P$ are always incomparable.

\begin{algorithm}[t!]\caption{GSEMO Algorithm}\label{algo:GSEMO}
\textbf{Input}: $m$ pseudo-Boolean functions $f_1,f_2,\ldots,f_m$, where $f_i: \{0,1\}^n \rightarrow \mathbb{R}$\\
\textbf{Process}:
    \begin{algorithmic}[1]
    \STATE Choose $\bm{x} \in \{0,1\}^n$ uniformly at random;
    \STATE $P \gets \{\bm x\}$;
    \STATE \textbf{repeat}
    \STATE \quad Choose $\bm x$ from $P$ uniformly at random;
    \STATE \quad Create $\bm{x}'$ by flipping each bit of $\bm x$ with probability $1/n$;
    \STATE \quad \textbf{if} \, {$\nexists \bm z \in P$ such that $\bm z \succ \bm {x}'$} \,\textbf{then}
    \STATE \qquad $P \gets (P \setminus \{\bm z \in P \mid \bm {x}' \succeq \bm z\}) \cup \{\bm {x}'\}$
    \STATE \quad \textbf{end if}
    \STATE \textbf{until} some criterion is met
    \STATE \textbf{return} $\arg\max_{\bm{x} \in P, |\bm{x}|\leq k} \; g(\bm{x})-c(\bm{x})$
    \end{algorithmic}
\end{algorithm}

To employ the GSEMO, the problem Eq.~(\refeq{eq-problem}) is transformed into a bi-objective maximization problem
\begin{align}\label{eq-bi-problem}
&\arg\max\nolimits_{\bm{x} \in \{0,1\}^n} \quad  (f_1(\bm{x}),f_2(\bm{x})),\\
&\text{where}\quad\begin{cases}
f_1(\bm{x}) = (1-\frac{\gamma}{k})^{k-|\bm{x}|}g(\bm{x})-c(\bm{x})+\frac{|\bm{x}|}{k}c(\bm{1}),\\
f_2(\bm x) = -|\bm{x}|.
\end{cases}
\end{align}
Note that $\bm{1}$ denotes the all-1s vector (i.e., the whole set $V$), implying that $c(\bm{1})=c(V)=\sum_{v \in V} c(\{v\})$ is a constant. Thus, the GSEMO is to maximize the distorted objective function $f_1$ and minimize the subset size $|\bm{x}|=\sum^n_{i=1}x_i$ simultaneously. The setting of $f_1$ is inspired by the distorted greedy algorithm (i.e., Algorithm~\ref{alg:dg}). In line~10 of the GSEMO, the best solution w.r.t. the original single-objective constrained problem Eq.~(\refeq{eq-problem}) will be selected from the resulting population $P$ as the final solution; that is, the solution with the largest $(g-c)$ value satisfying the size constraint in $P$ (i.e., $\arg\max_{\bm{x} \in P, |\bm{x}|\leq k} \; g(\bm{x})-c(\bm{x})$) will be returned.

Note that bi-objective reformulation here is an intermediate process for solving single-objective constrained optimization problems, which has been shown helpful in several cases~\citep{neumann2006minimum,friedrich2010approximating,neumann2011computing,qian.ijcai15}. What we focus on is still the quality of the best solution w.r.t. the original single-objective problem, in the population found by the GSEMO, rather than the quality of the population w.r.t. the reformulated bi-objective problem. Thus, the running time of the GSEMO is measured by the number of function evaluations until the best solution w.r.t. the original single-objective problem in the population reaches some approximation guarantee for the first time.

\section{Theoretical Analysis}\label{sec-theory}

In this section, we first prove the approximation guarantee of the GSEMO in Theorem~\ref{theo-main}, showing that the returned solution can obtain at least $(1-e^{-\gamma})$ as much $g$ as an optimal solution by paying the same cost. This reaches the best known guarantee, obtained by the distorted greedy algorithm~\citep{harshaw2019submodular}. As in previous analyses for non-monotone submodular optimization, e.g.,~\citep{buchbinder2014submodular,friedrich2018greedy}, we may assume that there is a set $D$ of $k$ ``dummy" items whose marginal contribution to any set is 0, i.e., $\forall X \subseteq V: g(X)=g(X\setminus D) \wedge c(X)=c(X\setminus D)$. Otherwise, we can add $k$ such dummy items to the ground set $V$, and delete them from the returned solution of the GSEMO, effecting neither the objective value of an optimal solution, nor the objective value of the GSEMO's returned solution.

\begin{theorem}\label{theo-main}
For maximizing monotone approximately submodular minus modular functions with a size constraint, i.e., solving the problem Eq.~(\ref{eq-problem}), the expected running time of the GSEMO until finding a solution $\bm{x}$ with $|\bm{x}| \leq k$ and $g(\bm{x})-c(\bm{x}) \geq (1-e^{-\gamma}) \cdot g(\bm{x}^*)-c(\bm{x}^*)$ is $O(n^2(\log n+k))$, where $\gamma$ denotes the submodularity ratio of $g$ as in Eq.~(\refeq{eq-submodular}), and $\bm{x}^*$ denotes an optimal solution of Eq.~(\ref{eq-problem}).
\end{theorem}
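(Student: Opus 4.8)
The plan is to reproduce, inside the population of the GSEMO, the progress that the distorted greedy algorithm makes along its distorted potential, using the two correction terms in $f_1$ purely for bookkeeping. Write $\Phi_i(\bm x)=(1-\tfrac{\gamma}{k})^{k-i}g(\bm x)-c(\bm x)$, so that a solution $\bm x$ with $|\bm x|=i$ has $f_1(\bm x)=\Phi_i(\bm x)+\tfrac{i}{k}c(\bm 1)$, and set
\[
R_i=\sum_{j=0}^{i-1}\Bigl(\tfrac{\gamma}{k}\bigl(1-\tfrac{\gamma}{k}\bigr)^{k-j-1}g(\bm x^*)-\tfrac1k c(\bm x^*)\Bigr)=\Bigl(\bigl(1-\tfrac{\gamma}{k}\bigr)^{k-i}-\bigl(1-\tfrac{\gamma}{k}\bigr)^{k}\Bigr)g(\bm x^*)-\tfrac{i}{k}c(\bm x^*),
\]
together with $T_i=R_i+\tfrac{i}{k}c(\bm 1)$. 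Since the dummy items carry zero cost, $c(\bm 1)\ge c(\bm x^*)$, and since $(1-\gamma/k)^{k-i}$ is non-decreasing in $i$, the sequence $T_0\le T_1\le\dots\le T_k$ is non-decreasing; also $R_k=(1-(1-\gamma/k)^k)g(\bm x^*)-c(\bm x^*)\ge(1-e^{-\gamma})g(\bm x^*)-c(\bm x^*)$ because $(1-\gamma/k)^k\le e^{-\gamma}$. Moreover any $\bm x$ with $|\bm x|\le k$ and $f_1(\bm x)\ge T_k$ already meets the requirement of the theorem, since $g(\bm x)-c(\bm x)\ge\Phi_{|\bm x|}(\bm x)=f_1(\bm x)-\tfrac{|\bm x|}{k}c(\bm 1)\ge R_k+\tfrac{k-|\bm x|}{k}c(\bm 1)\ge R_k$. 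Thus it suffices to bound the expected time until such a solution enters the population $P$.

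The technical core is a one-step lemma of the distorted-greedy type: for every $\bm x$ with $|\bm x|=i<k$ there is a single $0$-bit of $\bm x$ whose flip yields $\bm x'$ with $|\bm x'|=i+1$ and
\[
\Phi_{i+1}(\bm x')-\Phi_i(\bm x)\ \ge\ \tfrac{\gamma}{k}\bigl(1-\tfrac{\gamma}{k}\bigr)^{k-i-1}g(\bm x^*)-\tfrac1k c(\bm x^*),
\]
equivalently $f_1(\bm x')-f_1(\bm x)\ge T_{i+1}-T_i$. To prove it, set $\delta_v=(1-\tfrac{\gamma}{k})^{k-i-1}(g(\bm x\cup\{v\})-g(\bm x))-c(\{v\})$, so that for $v\notin\bm x$ one has $\Phi_{i+1}(\bm x\cup\{v\})-\Phi_i(\bm x)=\delta_v+\tfrac{\gamma}{k}(1-\tfrac{\gamma}{k})^{k-i-1}g(\bm x)$. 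Summing $\delta_v$ over $v\in\bm x^*\setminus\bm x$, invoking the definition of the submodularity ratio $\gamma=\gamma(g)$ with the pair $\bm x\subseteq\bm x\cup\bm x^*$, using monotonicity of $g$, and using $|\bm x^*\setminus\bm x|\le k$, one obtains $\max\{0,\max_{v\notin\bm x}\delta_v\}\ge\tfrac{\gamma}{k}(1-\tfrac{\gamma}{k})^{k-i-1}(g(\bm x^*)-g(\bm x))-\tfrac1k c(\bm x^*)$; when this maximum is attained with a positive value it is attained at a genuine $0$-bit of $\bm x$, and otherwise one flips an as-yet-unused dummy $0$-bit (one exists because $|\bm x|<k$), which contributes $\delta=0$ and leaves $g$ and $c$ unchanged. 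Adding $\tfrac{\gamma}{k}(1-\tfrac{\gamma}{k})^{k-i-1}g(\bm x)\ge 0$ to both sides gives the claim.

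With the lemma in hand I would split the analysis into two phases. In the initialization phase, the minimum size in $P$ never increases, and when it equals $m\ge 1$ it decreases by one in the next iteration with probability at least $m/(en|P|)$ (select that solution and flip one of its $1$-bits); since the solutions in $P$ have pairwise distinct sizes we have $|P|=O(n)$, so summing the resulting (harmonic) waiting times shows $\bm 0\in P$ after $O(n|P|\log n)=O(n^2\log n)$ expected iterations, and $\bm 0$ witnesses $f_1(\bm 0)=(1-\gamma/k)^k g(\emptyset)\ge 0=T_0$. In the optimization phase I would track $i_{\max}(P)=\max\{i:\exists\,\bm x\in P,\ |\bm x|\le i,\ f_1(\bm x)\ge T_i\}$. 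Because $T_i$ is non-decreasing and domination compares $(f_1,-|\bm x|)$, any offspring that evicts a witness of level $i$ is itself a witness of level $i$, so $i_{\max}(P)$ never decreases; and the term $+\tfrac{|\bm x|}{k}c(\bm 1)$ in $f_1$ is what keeps the natural chain of solutions of sizes $0,1,\dots,i$ with $\Phi_j\ge R_j$ in $P$, so that while $i_{\max}(P)=i<k$ one ``good'' iteration — selecting the size-$i$ chain element (probability $\ge 1/|P|$) and flipping the one $0$-bit of the lemma (probability $\ge 1/(en)$) — produces a size-$(i+1)$ solution with $\Phi_{i+1}\ge R_{i+1}$, i.e.\ $f_1\ge T_{i+1}$, raising $i_{\max}(P)$ to $i+1$. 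Summing the geometric waiting times over $i=0,\dots,k-1$ bounds this phase by $O(k\cdot n|P|)=O(n^2 k)$ expected iterations; with the initialization phase this is $O(n^2(\log n+k))$, and once $i_{\max}(P)=k$ the reduction in the first paragraph certifies the stated guarantee.

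I expect the main obstacle to be the one-step lemma — pushing the distorted-greedy averaging through with the iteration-dependent factor $(1-\gamma/k)^{k-i-1}$ and the dummy-item padding, and checking that the maximizer (or the dummy fallback) is always a flippable $0$-bit, since, unlike in the monotone-submodular analyses, the per-step improvement here is not simply a multiplicative contraction of a size-independent gap. The secondary point needing care is the population bookkeeping: verifying that the extra term $+\tfrac{|\bm x|}{k}c(\bm 1)$ really does prevent a small but highly-valued offspring from evicting the size-$i$ chain element, so that each increment of $i_{\max}(P)$ costs only $O(n|P|)$ expected iterations rather than forcing a re-climb from a smaller solution.
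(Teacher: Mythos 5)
Your overall architecture --- the thresholds $T_i$ (which coincide exactly with the levels in the paper's definition of $J_{\max}$), the reduction from $f_1(\bm{x})\geq T_k$ to the stated guarantee, the one-step lemma with the dummy-item fallback, the $O(n^2\log n)$ initialization phase, and the observation that the potential $i_{\max}(P)$ never decreases --- is the same as the paper's, and those parts are correct. The genuine gap is in the per-level progress argument of the optimization phase. You assume that while $i_{\max}(P)=i$ the population contains a \emph{size-exactly-$i$} witness (your ``chain element'') with $f_1\geq T_i$, so that one application of the one-step lemma lands at $f_1\geq T_i+(T_{i+1}-T_i)=T_{i+1}$. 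But the definition of $i_{\max}$ only guarantees a witness $\hat{\bm{x}}$ with $|\hat{\bm{x}}|\leq i$, and nothing prevents the size-$i$ witness from being evicted by an offspring of strictly smaller size with at least the same $f_1$ value --- weak domination compares $(f_1,-|\bm{x}|)$, so the $+\frac{|\bm{x}|}{k}c(\bm{1})$ term, being already part of $f_1$, does nothing to stop this. When $|\hat{\bm{x}}|=j<i$, one successful flip only gains $T_{j+1}-T_j=\frac{\gamma}{k}(1-\frac{\gamma}{k})^{k-j-1}g(\bm{x}^*)+\frac{1}{k}(c(\bm{1})-c(\bm{x}^*))$, which is \emph{smaller} than the required $T_{i+1}-T_i$ because $(1-\gamma/k)^{k-j-1}\leq(1-\gamma/k)^{k-i-1}$; so a single good iteration per level does not suffice, and your $O(n^2k)$ count for the second phase is not established.

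The paper closes exactly this hole: it takes $\hat{\bm{x}}$ to be the largest-$f_1$ solution of size at most $i$ in $P$, notes that each successful event increases $f_1(\hat{\bm{x}})$ by at least the worst-case per-step gain $\frac{\gamma}{k}(1-\frac{\gamma}{k})^{k-1}g(\bm{x}^*)+\frac{1}{k}(c(\bm{1})-c(\bm{x}^*))$, and observes that the total increase needed to reach level $i+1$ is at most $\frac{\gamma}{k}(1-\frac{\gamma}{k})^{k-i-1}g(\bm{x}^*)+\frac{1}{k}(c(\bm{1})-c(\bm{x}^*))$, so at most $\lceil(1-\gamma/k)^{-i}\rceil\leq e+1$ successful events are needed per level and the total over all levels is still $O(n^2k)$. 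This ratio argument is where the correction term $\frac{|\bm{x}|}{k}c(\bm{1})$ actually earns its keep: it makes the common additive term $\frac{1}{k}(c(\bm{1})-c(\bm{x}^*))$ non-negative, so that dropping it from numerator and denominator only increases the ratio. Your proof becomes complete if you replace the ``chain'' claim by this counting step; everything else you wrote goes through.
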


In the proof, we first derive the expected running time upper bound $O(n^2\log n)$ of the GSEMO until finding the special solution $\bm{0}$, as shown in Lemma~\ref{lemma-first-phase}. The result actually can be applied to any situation where the GSEMO maximizes a bi-objective pseudo-Boolean problem with $(-|\bm{x}|)$ being one objective, and has been used in previous analyses, e.g., Theorem~2 of~\citep{friedrich2015maximizing} and Theorem~1 of~\citep{qian2019maximizing}. Here, we still give the proof for completeness.

\begin{lemma}\label{lemma-first-phase}
For maximizing monotone approximately submodular minus modular functions with a size constraint, i.e., solving the problem Eq.~(\ref{eq-problem}), the expected running time of the GSEMO until finding the all-0s solution $\bm{0}$ is $O(n^2\log n)$.
\end{lemma}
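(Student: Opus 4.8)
The plan is to use a standard potential-function argument based on the size objective $f_2(\bm{x}) = -|\bm{x}|$. The key observation is that the GSEMO's population $P$ maintains incomparable solutions, and since $|\bm{x}|$ can take only the $n+1$ values $0, 1, \ldots, n$, the population size $|P|$ is always at most $n+1$. Moreover, once a solution of a given size $i$ enters $P$, there will always be a solution of size at most $i$ in $P$ thereafter: solutions are only removed when weakly dominated by a new solution, and a weakly dominating solution has $f_2$ value at least as large, i.e., size at most $i$. Hence the quantity $i_{\min} := \min_{\bm{x} \in P} |\bm{x}|$ is non-increasing over the run, and our goal is reached once $i_{\min} = 0$.

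First I would track $i_{\min}$. In any iteration, consider the event that the parent $\bm{x}$ chosen in line~4 is a solution attaining $|\bm{x}| = i_{\min}$, and that bit-wise mutation flips exactly one of the $i_{\min}$ one-bits of $\bm{x}$ to zero while leaving all other bits unchanged. This produces an offspring $\bm{x}'$ with $|\bm{x}'| = i_{\min} - 1$. Such an $\bm{x}'$ cannot be dominated by any solution in $P$, because no solution in $P$ has size smaller than $i_{\min}$; therefore it is accepted into $P$ (line~7), which decreases $i_{\min}$ by at least one. The probability of choosing a size-$i_{\min}$ parent is at least $1/|P| \geq 1/(n+1)$; the probability that mutation flips a particular one-bit and no other bit is $(1/n)(1-1/n)^{n-1} \geq 1/(en)$; and since there are $i_{\min} \geq 1$ such one-bits to choose from, the probability of this combined event is at least $\frac{1}{n+1}\cdot\frac{i_{\min}}{en}$. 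So the expected number of iterations to go from $i_{\min} = i$ to some value $\leq i-1$ is at most $\frac{e n(n+1)}{i}$.

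Summing over $i$ from the initial value $i_{\min} \leq n$ down to $1$, the expected number of iterations until $i_{\min} = 0$ is at most $\sum_{i=1}^{n} \frac{e n(n+1)}{i} = e n(n+1) H_n = O(n^2 \log n)$, where $H_n$ is the $n$-th harmonic number. Since each iteration of the GSEMO performs $O(1)$ function evaluations, this gives the claimed $O(n^2 \log n)$ bound on the expected running time until $\bm{0}$ is found. I do not expect a genuine obstacle here; the only points requiring a little care are the bound $|P| \leq n+1$ (from incomparability plus the pigeonhole on sizes) and the monotonicity of $i_{\min}$ (from the deletion rule in line~7), both of which follow directly from the definitions in Section~\ref{sec-algorithm}.
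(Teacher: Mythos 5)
Your proposal is correct and follows essentially the same argument as the paper's proof: bound $|P|\leq n+1$ via incomparability and the $n+1$ possible values of $f_2$, track the minimum size $i_{\min}$ in the population, lower-bound the probability of decreasing it by $i_{\min}/(en(n+1))$ via selecting the minimal solution and flipping exactly one of its 1-bits, and sum the resulting harmonic series to get $O(n^2\log n)$. No differences worth noting.
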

\begin{proof}
According to the procedure of updating the population $P$ in the GSEMO, the solutions maintained in $P$ must be incomparable. Because two solutions with the same value on one objective are comparable, $P$ contains at most one solution for each value of one objective. As $f_2(\bm{x})=-|\bm{x}|$ can take values $0,-1,\ldots,-n$, it holds that $|P| \leq n+1$.

Let $i=\min\{|\bm{x}| \mid \bm{x} \in P\}$ denote the minimum number of 1-bits of the solutions in the population $P$, and $\bm{x}$ denote the corresponding solution, i.e., $|\bm{x}|=i$. First, $i$ will not increase, because solutions with more 1-bits cannot dominate $\bm{x}$. Second, $i$ can decrease in one iteration by selecting $\bm{x}$ in line~4 of Algorithm~\ref{algo:GSEMO} and flipping only one 1-bit of $\bm{x}$ in line~5, occurring with probability $(1/|P|)\cdot (i/n)(1-1/n)^{n-1} \geq i/(en(n+1))$ due to uniform selection and bit-wise mutation. Note that the generated offspring solution $\bm{x}'$ has $(i-1)$ number of 1-bits, and will be included into $P$, implying that $i$ decreases by 1. Thus, the expected running time until $i=0$ (i.e., finding the all-0s vector) is at most $\sum^{n}_{i=1} en(n+1)/i=O(n^2\log n)$.
\end{proof}

After finding the all-0s solution $\bm{0}$, we analyze the expected running time of the GSEMO until finding a solution with the desired approximation guarantee. This proof part is inspired by the analysis of the distorted greedy algorithm in~\citep{harshaw2019submodular}, and relies on Lemma~\ref{lemma-one-step}, that for any $\bm{x} \in \{0,1\}^n$ with $|\bm{x}|<k$, there always exists one item, whose inclusion can improve the objective $f_1$ by at least some quantity relating to $g(\bm{x}^*)$ and $c(\bm{x}^*)$.

\begin{lemma}\label{lemma-one-step}
For any $\bm{x} \in \{0,1\}^n$ with $|\bm{x}| < k$, there exists one item $v \notin \bm{x}$ such that
\begin{align}\label{eq-mid-4}
& f_1(\bm{x} \cup \{v\})-f_1(\bm{x}) \geq \frac{\gamma}{k} \left(1-\frac{\gamma}{k}\right)^{k-|\bm{x}|-1}g(\bm{x}^*)+\frac{1}{k}(c(\bm{1})-c(\bm{x}^*)),
\end{align}
where $f_1$ is defined in Eq.~(\refeq{eq-bi-problem}), and $k$ is the size constraint.
\end{lemma}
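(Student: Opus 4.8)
The plan is to reduce the claim to a short averaging inequality after an explicit expansion of $f_1$. Write $\rho:=1-\gamma/k$ and $j:=|\bm{x}|$ (so $k-j-1\ge 0$ since $j<k$). First I would expand, for any $v\notin\bm{x}$: using $|\bm{x}\cup\{v\}|=j+1$, $c(\{v\})=c(\bm{x}\cup\{v\})-c(\bm{x})$, $f_1(\bm{x})=\rho^{k-j}g(\bm{x})-c(\bm{x})+\tfrac{j}{k}c(\bm{1})$, and the identity $\rho^{k-j-1}-\rho^{k-j}=\rho^{k-j-1}\gamma/k$, one obtains $f_1(\bm{x}\cup\{v\})-f_1(\bm{x})=\rho^{k-j-1}\big(g(\bm{x}\cup\{v\})-g(\bm{x})\big)-c(\{v\})+\delta_{\bm{x}}$, where $\delta_{\bm{x}}:=\rho^{k-j-1}\tfrac{\gamma}{k}g(\bm{x})+\tfrac1k c(\bm{1})$ does not depend on $v$. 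So it remains to lower-bound $\max_{v\notin\bm{x}}\big(f_1(\bm{x}\cup\{v\})-f_1(\bm{x})\big)$ by the stated right-hand side.

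Second I would run an averaging argument over a reference multiset of exactly $k$ ``moves''. Without loss of generality $\bm{x}^*$ contains no dummy item (deleting dummies from $\bm{x}^*$ changes neither $g(\bm{x}^*)$ nor $c(\bm{x}^*)$ nor feasibility), so $\bm{x}^*\setminus\bm{x}$ is disjoint from the dummy set $D$ and $|\bm{x}^*\setminus\bm{x}|\le|\bm{x}^*|\le k$. I then combine: (i) the $|\bm{x}^*\setminus\bm{x}|$ genuine moves ``add $v$'', $v\in\bm{x}^*\setminus\bm{x}$, each legal since $v\notin\bm{x}$, hence with gain $\le\max_{v'\notin\bm{x}}(f_1(\bm{x}\cup\{v'\})-f_1(\bm{x}))$; and (ii) $k-|\bm{x}^*\setminus\bm{x}|$ copies of the dummy move, whose gain is exactly $\delta_{\bm{x}}$ (a dummy has zero $g$-marginal and zero cost), and which equals $f_1(\bm{x}\cup\{d\})-f_1(\bm{x})$ for \emph{any} still-unused dummy $d\notin\bm{x}$ — such a $d$ exists since $|\bm{x}|<k=|D|$ — hence is also $\le\max_{v'\notin\bm{x}}(f_1(\bm{x}\cup\{v'\})-f_1(\bm{x}))$. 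Since there are exactly $k$ terms, ``max $\ge$ average'' gives $\max_{v'\notin\bm{x}}(f_1(\bm{x}\cup\{v'\})-f_1(\bm{x}))\ge\tfrac1k\Big(\sum_{v\in\bm{x}^*\setminus\bm{x}}(f_1(\bm{x}\cup\{v\})-f_1(\bm{x}))+(k-|\bm{x}^*\setminus\bm{x}|)\,\delta_{\bm{x}}\Big)$.

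Third I would evaluate that average using the expansion from the first step. The cost terms contribute $-\sum_{v\in\bm{x}^*\setminus\bm{x}}c(\{v\})=-c(\bm{x}^*\setminus\bm{x})\ge-c(\bm{x}^*)$ by modularity and non-negativity of $c$; the $g$-marginal terms contribute $\rho^{k-j-1}\sum_{v\in\bm{x}^*\setminus\bm{x}}(g(\bm{x}\cup\{v\})-g(\bm{x}))\ge\rho^{k-j-1}\gamma\big(g(\bm{x}\cup\bm{x}^*)-g(\bm{x})\big)\ge\rho^{k-j-1}\gamma\big(g(\bm{x}^*)-g(\bm{x})\big)$, where the first inequality is the submodularity ratio applied to $\bm{x}\subseteq\bm{x}\cup\bm{x}^*$ and the second is monotonicity of $g$; and the $k$ copies of $\delta_{\bm{x}}$ sum to $\rho^{k-j-1}\gamma g(\bm{x})+c(\bm{1})$. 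Adding these, the $\pm\rho^{k-j-1}\gamma g(\bm{x})$ pieces cancel and what survives is $\tfrac1k\big(\rho^{k-j-1}\gamma g(\bm{x}^*)-c(\bm{x}^*)+c(\bm{1})\big)=\tfrac{\gamma}{k}(1-\tfrac{\gamma}{k})^{k-|\bm{x}|-1}g(\bm{x}^*)+\tfrac1k\big(c(\bm{1})-c(\bm{x}^*)\big)$, which is exactly the claimed bound.

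I expect the only subtle point to be the middle step: one would \emph{like} to average over a set $S\subseteq V\setminus\bm{x}$ of size $k$, but after several GSEMO iterations $\bm{x}$ may already contain most of the $k$ dummies, so there need not be $k-|\bm{x}^*\setminus\bm{x}|$ dummies left outside $\bm{x}$. The way around this is to not insist that the reference set avoid $\bm{x}$, but to treat the dummy contributions abstractly — every dummy move has the same gain $\delta_{\bm{x}}$, and it is always realized by \emph{some} legal move (adding an unused dummy), which is all the averaging needs. Once ``exactly $k$ terms'' is arranged this way, the remaining computation is routine; and, worth noting, it uses only $|\bm{x}^*|\le k$, not the optimality of $\bm{x}^*$.
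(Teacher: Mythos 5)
Your proof is correct and follows essentially the same route as the paper's: both hinge on the expansion $f_1(\bm{x}\cup\{v\})-f_1(\bm{x})=(1-\gamma/k)^{k-|\bm{x}|-1}(g(\bm{x}\cup\{v\})-g(\bm{x}))-c(\{v\})+\delta_{\bm{x}}$, an averaging over $\bm{x}^*$ padded to exactly $k$ terms via the dummy items, the submodularity ratio applied to $\bm{x}\subseteq\bm{x}\cup\bm{x}^*$, monotonicity of $g$, and modularity with non-negativity of $c$. The only difference is presentational: the paper first shows the maximum distorted marginal gain is non-negative (a dummy realizes zero) and averages that quantity over the $|\bm{x}^*|$ items before adding the $v$-independent term $\delta_{\bm{x}}$ at the end, whereas you carry $\delta_{\bm{x}}$ along and average the full $f_1$-increments directly.
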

\begin{proof}
Let $v^*=\arg \max_{v \in V \setminus \bm{x}} (1-\gamma/k)^{k-|\bm{x}|-1}(g(\bm{x} \cup \{v\})-g(\bm{x}))-c(\{v\})$. Due to the existence of $k$ dummy items and $|\bm{x}|<k$, it holds that $(1-\gamma/k)^{k-|\bm{x}|-1}(g(\bm{x} \cup \{v^*\})-g(\bm{x}))-c(\{v^*\}) \geq 0$. Note that $|\bm{x}^*| \leq k$. Thus, we have
\begin{align}
&k \cdot \left(\left(1-\frac{\gamma}{k}\right)^{k-|\bm{x}|-1}(g(\bm{x} \cup \{v^*\})-g(\bm{x}))-c(\{v^*\})\right)\\
&\geq |\bm{x}^*| \cdot \left(\left(1-\frac{\gamma}{k}\right)^{k-|\bm{x}|-1}(g(\bm{x} \cup \{v^*\})-g(\bm{x}))-c(\{v^*\})\right)\\
&\geq \sum_{v \in \bm{x}^*} \left(\left(1-\frac{\gamma}{k}\right)^{k-|\bm{x}|-1}(g(\bm{x} \cup \{v\})-g(\bm{x}))-c(\{v\})\right)\\
&=\left(1-\frac{\gamma}{k}\right)^{k-|\bm{x}|-1} \sum_{v \in \bm{x}^*} \left(g(\bm{x} \cup \{v\})-g(\bm{x})\right)-c(\bm{x}^*)\\
&\geq \gamma \left(1-\frac{\gamma}{k}\right)^{k-|\bm{x}|-1} \left(g(\bm{x} \cup \bm{x}^*)-g(\bm{x})\right)-c(\bm{x}^*)\\
&\geq \gamma \left(1-\frac{\gamma}{k}\right)^{k-|\bm{x}|-1} \left(g(\bm{x}^*)-g(\bm{x})\right)-c(\bm{x}^*),
\end{align}
where the second inequality holds by the definition of $v^*$ and $\forall v \in \bm{x}: (1-\gamma/k)^{k-|\bm{x}|-1}(g(\bm{x} \cup \{v\})-g(\bm{x}))-c(\{v\}) =-c(\{v\})\leq 0$, the equality holds by the modularity of $c$, the third inequality holds by the definition of $\gamma$ in Eq.~(\ref{eq-submodular}) and the monotonicity of $g$, and the last inequality holds by $\bm{x}^* \subseteq \bm{x} \cup \bm{x}^*$ and the monotonicity of $g$. This implies that
\begin{align}\label{eq-mid-1}
&\left(1-\frac{\gamma}{k}\right)^{k-|\bm{x}|-1}(g(\bm{x} \cup \{v^*\})-g(\bm{x}))-c(\{v^*\})\\
&\geq \frac{\gamma}{k} \left(1-\frac{\gamma}{k}\right)^{k-|\bm{x}|-1} \left(g(\bm{x}^*)-g(\bm{x})\right)-\frac{1}{k}c(\bm{x}^*).
\end{align}

According to the definition of $f_1$ in Eq.~(\refeq{eq-bi-problem}), we have
\begin{align}
& f_1(\bm{x} \cup \{v^*\})-f_1(\bm{x}) \\
&= \left(1-\frac{\gamma}{k}\right)^{k-|\bm{x}\cup \{v^*\}|}g(\bm{x} \cup \{v^*\})-c(\bm{x} \cup \{v^*\})+\frac{|\bm{x}\cup \{v^*\}|}{k}c(\bm{1})\\
&\quad - \left(\left(1-\frac{\gamma}{k}\right)^{k-|\bm{x}|}g(\bm{x})-c(\bm{x})+\frac{|\bm{x}|}{k}c(\bm{1})\right)\\
&=\left(1\!-\!\frac{\gamma}{k}\right)^{k-|\bm{x}|-1}\!\!(g(\bm{x} \cup \{v^*\})-g(\bm{x}))-c(\{v^*\})+\frac{1}{k}c(\bm{1})+\frac{\gamma}{k}\left(1\!-\!\frac{\gamma}{k}\right)^{k-|\bm{x}|-1}\!\!g(\bm{x})\\
&\geq \frac{\gamma}{k}\left(1-\frac{\gamma}{k}\right)^{k-|\bm{x}|-1}g(\bm{x}^*) +\frac{1}{k}(c(\bm{1})-c(\bm{x}^*)),
\end{align}
where the second equality holds by $|\bm{x}\cup \{v^*\}|=|\bm{x}|+1$ and the modularity of $c$, and the inequality holds by Eq.~(\refeq{eq-mid-1}). Thus, the lemma holds.
\end{proof}

\begin{myproof}{Theorem~\ref{theo-main}}
After finding the solution $\bm{0}$, it will always be kept in the population $P$. This is because $\bm{0}$ has the largest $f_2$ value (i.e., $f_2(\bm{0})=0$), and no solution can weakly dominate it. To analyze the expected running time until reaching the desired approximation guarantee, we consider a quantity $J_{\max}$, which is defined as
\begin{align}
&J_{\max}=\max\left\{j \in \{0,1,\ldots,k\} \mid \exists \bm{x} \in P: |\bm{x}| \leq j  \right.\\
& \qquad \quad \left. \wedge \; f_1(\bm{x}) \geq \left(1-\frac{\gamma}{k}\right)^{k-j} \left(1-\left(1-\frac{\gamma}{k}\right)^j\right) \cdot g(\bm{x}^*)+\frac{j}{k}(c(\bm{1})-c(\bm{x}^*))\right\}.
\end{align}
It can be seen that $J_{\max}=k$ implies that there exists one solution $\bm{x}$ in $P$ satisfying that $|\bm{x}| \leq k$ and
\begin{align}\label{eq-mid-2}
f_1(\bm{x}) &\geq \left(1-\frac{\gamma}{k}\right)^{k-k} \left(1-\left(1-\frac{\gamma}{k}\right)^k\right) \cdot g(\bm{x}^*)+\frac{k}{k}(c(\bm{1})-c(\bm{x}^*))\\
&\geq (1-e^{-\gamma})\cdot g(\bm{x}^*)+c(\bm{1})-c(\bm{x}^*).
\end{align}
According to the definition of $f_1$ in Eq.~(\refeq{eq-bi-problem}) and $|\bm{x}| \leq k$, we have
\begin{align}\label{eq-mid-3}
f_1(\bm{x})&=\left(1-\frac{\gamma}{k}\right)^{k-|\bm{x}|}g(\bm{x})-c(\bm{x})+\frac{|\bm{x}|}{k}c(\bm{1})\\
&\leq g(\bm{x})-c(\bm{x})+c(\bm{1}).
\end{align}
Combining Eqs.~(\refeq{eq-mid-2}) and~(\refeq{eq-mid-3}) leads to
\begin{align}
g(\bm{x})-c(\bm{x}) \geq (1-e^{-\gamma})\cdot g(\bm{x}^*)-c(\bm{x}^*).
\end{align}
Thus, $J_{\max}=k$ implies that there exists one solution $\bm{x}$ in $P$ satisfying that $|\bm{x}| \leq k$ and $g(\bm{x})-c(\bm{x}) \geq (1-e^{-\gamma})\cdot g(\bm{x}^*)-c(\bm{x}^*)$; that is, the desired approximation guarantee is reached. Next, we only need to analyze the expected running time until $J_{\max}=k$.

As the population $P$ contains the solution $\bm{0}$, which satisfies that $|\bm{0}|=0$ and $f_1(\bm{0})=(1-\gamma/k)^kg(\bm{0})\geq 0$, $J_{\max}$ is at least 0. Assume that currently $J_{\max}=i <k$, implying that $P$ contains solutions satisfying that $|\bm{x}|\leq i$ and
\begin{align}\label{eq-mid-5}
f_1(\bm{x}) \geq \left(1-\frac{\gamma}{k}\right)^{k-i} \left(1-\left(1-\frac{\gamma}{k}\right)^i\right) \cdot g(\bm{x}^*)+\frac{i}{k}(c(\bm{1})-c(\bm{x}^*)).
\end{align}
Let $\hat{\bm{x}}$ be the one with the largest $f_1$ value among these solutions, which is actually the solution with size at most $i$ and the largest $f_1$ value in $P$. First, $J_{\max}$ will not decrease. If $\hat{\bm{x}}$ is deleted from $P$ in line~7 of Algorithm~\ref{algo:GSEMO}, the newly included solution $\bm{x}'$ must weakly dominate $\hat{\bm{x}}$, implying that $|\bm{x}'| \leq |\hat{\bm{x}}|$ and $f_1(\bm{x}')\geq f_1(\hat{\bm{x}})$.

Second, we analyze the expected time required to increase $J_{\max}$. We consider such an event in one iteration of Algorithm~\ref{algo:GSEMO}: $\hat{\bm{x}}$ is selected for mutation in line~4, and only one specific 0-bit corresponding to the item $v$ in Lemma~\ref{lemma-one-step} is flipped in line~5. This event is called ``a successful event", occurring with probability $(1/|P|)\cdot (1/n)(1-1/n)^{n-1} \geq 1/(en(n+1))$ due to uniform selection and bit-wise mutation. Note that the size $|P|$ of population is always no larger than $n+1$, as shown in the proof of Lemma~\ref{lemma-first-phase}. According to Lemma~\ref{lemma-one-step}, the offspring solution $\bm{x}'$ generated by a successful event satisfies
\begin{align}\label{eq-mid-6}
& f_1(\bm{x}')-f_1(\hat{\bm{x}}) \geq \frac{\gamma}{k} \left(1-\frac{\gamma}{k}\right)^{k-|\hat{\bm{x}}|-1}g(\bm{x}^*)+\frac{1}{k}(c(\bm{1})-c(\bm{x}^*)).
\end{align}
We next consider two cases according to the value of $|\hat{\bm{x}}|$, satisfying $|\hat{\bm{x}}| \leq i$.

(1) $|\hat{\bm{x}}| = i$. Combining Eqs.~(\refeq{eq-mid-5}) and~(\refeq{eq-mid-6}) leads to
\begin{align}\label{eq-mid-7}
f_1(\bm{x}') &\geq \left(1-\frac{\gamma}{k}\right)^{k-i} \left(1-\left(1-\frac{\gamma}{k}\right)^i\right) \cdot g(\bm{x}^*)+\frac{i}{k}(c(\bm{1})-c(\bm{x}^*))\\
&\quad + \frac{\gamma}{k} \left(1-\frac{\gamma}{k}\right)^{k-i-1}g(\bm{x}^*)+\frac{1}{k}(c(\bm{1})-c(\bm{x}^*))\\
&=\left(1-\frac{\gamma}{k}\right)^{k-i-1} \left(1-\left(1-\frac{\gamma}{k}\right)^{i+1}\right) \cdot g(\bm{x}^*)+\frac{i+1}{k}(c(\bm{1})-c(\bm{x}^*)).
\end{align}
Note that $|\bm{x}'|=|\hat{\bm{x}}|+1 = i+1$. Then, $\bm{x}'$ will be added into $P$; otherwise, $\bm{x}'$ must be dominated by one solution in $P$ (line~6 of Algorithm~\ref{algo:GSEMO}), and this implies that $J_{\max}$ has already been larger than $i$, contradicting the assumption $J_{\max}=i$. After including $\bm{x}'$, $J_{\max} \geq i+1$, i.e., $J_{\max}$ increases.

(2) $|\hat{\bm{x}}| < i$. It holds that $|\bm{x}'|=|\hat{\bm{x}}|+1 \leq i$, and by Eq.~(\refeq{eq-mid-6}),
\begin{align}\label{eq-mid-8}
f_1(\bm{x}')-f_1(\hat{\bm{x}}) \geq \frac{\gamma}{k} \left(1-\frac{\gamma}{k}\right)^{k-1}g(\bm{x}^*)+\frac{1}{k}(c(\bm{1})-c(\bm{x}^*)).
\end{align}
Note that $\bm{x}'$ will be added into $P$; otherwise, $\bm{x}'$ must be dominated by one solution in $P$, contradicting the definition of $\hat{\bm{x}}$, which is the solution with size at most $i$ and the largest $f_1$ value in $P$. If $f_1(\bm{x}') \geq (1-\gamma/k)^{k-i-1} (1-(1-\gamma/k)^{i+1}) \cdot g(\bm{x}^*)+((i+1)/k)(c(\bm{1})-c(\bm{x}^*))$, $J_{\max}$ increases. Otherwise, the solution $\hat{\bm{x}}$ now becomes $\bm{x}'$, and $f_1(\hat{\bm{x}})$ increases by at least $(\gamma/k)(1-\gamma/k)^{k-1}g(\bm{x}^*)+(1/k)(c(\bm{1})-c(\bm{x}^*))$ according to Eq.~(\refeq{eq-mid-8}).

Based on the above analysis, a successful event will either increase $J_{\max}$ directly or increase $f_1(\hat{\bm{x}})$ by at least $(\gamma/k)(1-\gamma/k)^{k-1}g(\bm{x}^*)+(1/k)(c(\bm{1})-c(\bm{x}^*))$. It is easy to see that $f_1(\hat{\bm{x}})$ will not decrease due to the domination-based comparison. It is also known from Eq.~(\refeq{eq-mid-7}) that $f_1(\hat{\bm{x}})$ needs to increase at most $(\gamma/k) (1-\gamma/k)^{k-i-1}g(\bm{x}^*)+(1/k)(c(\bm{1})-c(\bm{x}^*))$ for increasing $J_{\max}$. Thus, the number of successful events required to increase $J_{\max}$ is at most
\begin{align}\label{eq-mid-9}
&\left\lceil\frac{(\gamma/k) (1-\gamma/k)^{k-i-1}g(\bm{x}^*)+(1/k)(c(\bm{1})-c(\bm{x}^*))}{(\gamma/k)(1-\gamma/k)^{k-1}g(\bm{x}^*)+(1/k)(c(\bm{1})-c(\bm{x}^*))}\right\rceil\\ &\leq \left\lceil \frac{(\gamma/k) (1-\gamma/k)^{k-i-1}g(\bm{x}^*)}{(\gamma/k)(1-\gamma/k)^{k-1}g(\bm{x}^*)}\right\rceil=\left\lceil \left(1-\frac{\gamma}{k}\right)^{-i} \right\rceil.
\end{align}
A successful event occurs with probability at least $1/(en(n+1))$ in one iteration, implying that the expected time of one successful event is at most $en(n+1)$. Thus, the expected time to make $J_{\max} \geq i+1$ (i.e., increase $J_{\max}$) is at most $\lceil (1-\gamma/k)^{-i} \rceil \cdot en(n+1)$.

To make $J_{\max}=k$, it is sufficient to increase $J_{\max}$ from $0$ to $k$ step-by-step, implying that the expected running time until $J_{\max}=k$ is at most
\begin{align}
\sum^{k-1}_{i=0}\left\lceil \left(1-\frac{\gamma}{k}\right)^{-i} \right\rceil \cdot en(n+1) &\leq en(n+1) \cdot \sum^{k-1}_{i=0} \left(\left(1-\frac{\gamma}{k}\right)^{-i} +1\right)\\
&\leq en(n+1) \cdot (ek+1)=O(n^2k).
\end{align}

Lemma~\ref{lemma-first-phase} gives the expected running time $O(n^2 \log n)$ of the GSEMO for finding the solution $\bm{0}$. Thus, the total expected running time of the GSEMO for finding a solution $\bm{x}$ with $|\bm{x}| \leq k$ and $g(\bm{x})-c(\bm{x}) \geq (1-e^{-\gamma})\cdot g(\bm{x}^*)-c(\bm{x}^*)$ is $O(n^2(\log n+k))$. The theorem holds.\vspace{0.5em}
\end{myproof}

From the proof, we can find the reason of adding $(|\bm{x}|/k)\cdot c(\bm{1})$ into $f_1$ in Eq.~(\ref{eq-bi-problem}). The term $(|\bm{x}|/k)\cdot c(\bm{1})$ can increase the benefit of adding a single item, and make the derived lower bound of the benefit in Lemma~\ref{lemma-one-step} positive. Without this term, the lower bound of the benefit in Lemma~\ref{lemma-one-step} will become $(\gamma/k)(1-\gamma/k)^{k-|\bm{x}|-1}g(\bm{x}^*)-c(\bm{x}^*)/k \geq (\gamma/k)(1-\gamma/k)^{k-1}g(\bm{x}^*)-c(\bm{x}^*)/k$, which is not necessarily positive, and the required increment on $f_1$ for $J_{\max}$ increasing from $i$ to $(i+1)$ will become $(\gamma/k) (1-\gamma/k)^{k-i-1}g(\bm{x}^*)-c(\bm{x}^*)/k$; thus, the analysis of Eq.~(\refeq{eq-mid-9}) will fail.

\subsection{GSEMO Fails When $f_1=g-c$}

We have proved that by using the distorted objective function, i.e., $f_1(\bm{x}) = (1-\frac{\gamma}{k})^{k-|\bm{x}|}g(\bm{x})-c(\bm{x})+\frac{|\bm{x}|}{k}c(\bm{1})$, the GSEMO can find a solution $\bm{x}$ with $|\bm{x}| \leq k$ and $g(\bm{x})-c(\bm{x}) \geq (1-e^{-\gamma}) \cdot g(\bm{x}^*)-c(\bm{x}^*)$, in $O(n^2(\log n+k))$ expected running time. A natural question is whether the GSEMO using the original objective function (i.e., $f_1(\bm{x})=g(\bm{x})-c(\bm{x})$) can obtain the same polynomial-time approximation guarantee. To answer this question, we first introduce the application, i.e., directed vertex cover with costs, of the considered general problem in Definition~\ref{def-Prob}.

\begin{definition}[Directed Vertex Cover with Costs~\citep{harshaw2019submodular}]\label{def-vertex}
Given a directed graph $G=(V,E)$ with non-negative vertex weights $w: V \rightarrow \mathbb{R}^+$ and costs $c: V \rightarrow \mathbb{R}^+$, and a budget $k$, to find a subset $X \subseteq V$ of at most $k$ vertices such that
\begin{align}
\arg\max\nolimits_{X \subseteq V}\quad g(X)-c(X) \qquad \text{s.t.}\qquad |X|\leq k,
\end{align}
where $g(X)=\sum_{v \in N(X) \cup X} w(v)$, $N(X)=\{(u,v) \in E \mid u \in X\}$ is the set of vertices pointed to by $X$, and $c(X)=\sum_{v \in X} c(v)$.
\end{definition}

It is easy to verify that $g$ is non-negative, monotone and submodular, i.e., $\gamma=1$. Note that this problem is actually the dual norm of vertex cover, but we still use this name for consistency with~\citep{harshaw2019submodular}.

Next, we give a negative answer by proving that the GSEMO with $f_1=g-c$ requires exponential expected running time to achieve the desired approximation guarantee on a specific example of directed vertex cover with costs.

\begin{example}\label{example-cover}
The parameters of directed vertex cover with costs in Definition~\ref{def-vertex} are set as: the directed graph $G=(V,E)$ is shown in Figure~\ref{fig_graph}, the weights satisfy $\forall v \in V: w(v)=1$, the costs satisfy $c(v_1)=n-3n/\log n$, $\forall i \geq 2: c(v_i)=1/\log n$, and the budget $k=n$, where the base of the logarithm is 2.
\end{example}

\begin{figure}[t!]\centering
\begin{minipage}[c]{0.3\linewidth}\centering
  \includegraphics[width=\linewidth]{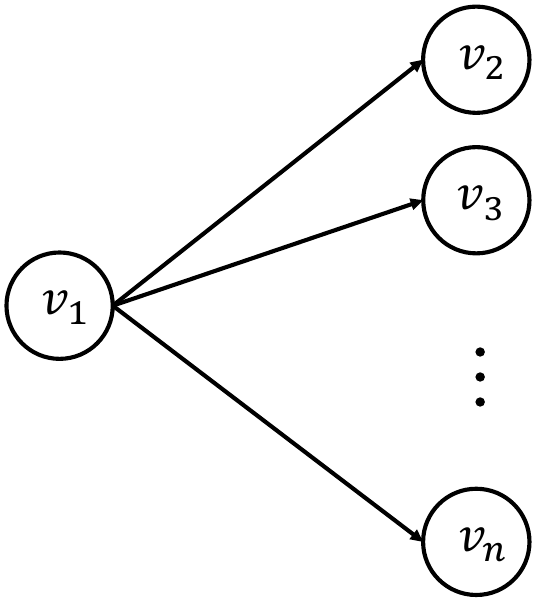}
\end{minipage}
\caption{An example graph of directed vertex cover with costs.}\label{fig_graph}
\end{figure}

The objective function $(g(X)-c(X))$ of this example can be calculated as follows:
\begin{align}\label{example-f}
\forall X \subseteq V,\; g(X)-c(X)=
\begin{cases}
(3n-|X|+1)/\log n  & \text{if}\;\; v_1 \in X,\\
|X|\cdot (1-1/\log n) & \text{otherwise}.
\end{cases}
\end{align}
Thus, when $v_1 \in X$, the best solution is $\{v_1\}$, which has the objective value $3n/\log n$; when $v_1 \notin X$, the best solution is $V \setminus \{v_1\}$, which has the objective value $(n-1)(1-1/\log n)$. Assume that $n \geq 64$. We then have $(n-1)(1-1/\log n) > 3n/\log n$, implying that the optimal solution $\bm{x}^*$ is $V \setminus \{v_1\}$.

Theorem~\ref{theo-example} shows that for this example, the GSEMO using $f_1=g-c$ fails to find a solution $\bm{x}$ with $|\bm{x}| \leq k$ and $g(\bm{x})-c(\bm{x}) \geq (1-e^{-\gamma}) \cdot g(\bm{x}^*)-c(\bm{x}^*)$ in polynomial expected time. The proof idea is that the GSEMO has some probability to start from the specific solution $\{v_1\}$, which requires to be flipped by many bits simultaneously in mutation for making improvement, and thus leads to exponential running time.

\begin{theorem}\label{theo-example}
For Example~\ref{example-cover} with $n \geq 64$, when using $f_1=g-c$, the expected running time until the GSEMO finding a solution $\bm{x}$ with $|\bm{x}| \leq k$ and $g(\bm{x})-c(\bm{x}) \geq (1-e^{-\gamma}) \cdot g(\bm{x}^*)-c(\bm{x}^*)$ is at least $2^n$.
\end{theorem}
\begin{proof}
For convenience of analysis, assume that $3n/ \log n$ is an integer. Consider that the initial solution of the GSEMO is $\{v_1\}$, occurring with probability $1/2^n$ due to uniform selection. By Eq.~(\refeq{example-f}), $f_1(\{v_1\})=3n/\log n$, and for any $X$ with $X \neq \{v_1\} \wedge |X| \leq 3n/\log n$, $f_1(X) < f_1(\{v_1\})$. This implies that in the bi-objective formulation $(f_1(X)=g(X)-c(X), f_2(X)=-|X|)$, the solution $\{v_1\}$ dominates any solution with size no larger than $3n/\log n$, except the empty solution $\bm{0}$ and $\{v_1\}$ itself. Thus, to generate a solution $X$ with $f_1(X)=g(X)-c(X) > 3n/ \log n $, it is necessary to flip at least $3n/\log n$ bits simultaneously when mutating a solution in line~5 of the GSEMO. As the probability of flipping at least $3n/\log n$ bits simultaneously in mutation is at most $\binom{n}{3n/\log n}/ n^{3n/\log n} \leq 2^n/2^{3n}=1/2^{2n}$, the expected running time of the GSEMO until finding a solution $X$ with $g(X)-c(X) > 3n/ \log n $, when starting from $\{v_1\}$, is at least $2^{2n}$. Combining the probability $1/2^n$ of the initial solution being $\{v_1\}$, the expected running time of the GSEMO until finding a solution $X$ with $g(X)-c(X) > 3n/ \log n$ is at least $2^n$.

As the optimal solution $\bm{x}^*$ is $V \setminus \{v_1\}$, we have $g(\bm{x}^*)=n-1$ and $c(\bm{x}^*)=(n-1)/\log n$. Thus, $3n / \log n = \Theta(1/\log n) \cdot g(\bm{x}^*) -c(\bm{x}^*)$. Because $\gamma =1$ for the application of directed vertex cover with costs, it must require more running time for the GSEMO to find a solution $X$ with $g(X)-c(X) \geq (1-e^{-\gamma}) \cdot g(\bm{x}^*)-c(\bm{x}^*)$ than $g(X)-c(X) > 3n/ \log n$. Thus, the theorem holds.
\end{proof}

The above analysis relies on a specific initialization with the solution $10^{n-1}$, i.e., $\{v_1\}$. Though the occurring probability is only $1/2^n$, it is sufficient to prove the exponential expected running time of the GSEMO. We also note that with a constant probability, the initial solution $\bm{x}$ satisfies that $x_1=0$ (i.e., $v_1$ is not selected) and $|\bm{x}|>3n/((\log n) -1)$. It is known from Eq.~(\refeq{example-f}) that for any $\bm{x}$ with $x_1=0$ and $|\bm{x}|>3n/((\log n) -1)$, $g(\bm{x})-c(\bm{x}) = |\bm{x}|\cdot (1-1/\log n)> 3n / \log n$, implying that such an $\bm{x}$ is incomparable with $10^{n-1}$ (which has $g(10^{n-1})-c(10^{n-1}) =3n / \log n$) in the bi-objective reformulation. In fact, any solution $\bm{x}$ with $x_1=0$ and $|\bm{x}|>3n/((\log n) -1)$ is Pareto optimal, which cannot be dominated by other solutions. Let $\hat{\bm{x}}$ denote the solution with the largest number of 1-bits in the population. Thus, in this case, the GSEMO can find Pareto optimal solutions with more 1-bits by selecting $\hat{\bm{x}}$ and flipping only some of its 0-bits except the first one. By repeating this process, the GSEMO can efficiently find a solution $\bm{x}$ with $x_1=0$ and $|\bm{x}| \geq (1-1/e)(n-1)$, satisfying $g(\bm{x})-c(\bm{x})=|\bm{x}|\cdot (1-1/\log n) \geq (1-1/e)(n-1)-(n-1)/\log n =(1-e^{-\gamma}) \cdot g(\bm{x}^*)-c(\bm{x}^*)$, i.e., reaching the desired approximation guarantee.

\section{Empirical Study}\label{sec-experiment}

In this section, we empirically examine the performance of the GSEMO by comparing it with the following algorithms:\\
$\bullet$\ \ {\bf DG}~\citep{harshaw2019submodular} is the distorted greedy algorithm, as presented in Algorithm~\ref{alg:dg}. It is the previous algorithm, achieving the best known polynomial-time approximation guarantee.\\
$\bullet$\ \ {\bf SDG}~\citep{harshaw2019submodular} is the stochastic version of DG, as presented in Algorithm~\ref{alg:sdg}. The SDG with $\epsilon\in \{0.1,0.2\}$ will be compared, which are denoted by SDG(0.1) and SDG(0.2), respectively.\\
$\bullet$\ \ {\bf Multi-SDG} runs the SDG multiple times independently, and returns the best found solution. For each independent run of the SDG, the value of parameter $\epsilon$ is uniformly sampled from $[0.1,0.5]$ at random.\\
$\bullet$\ \ {\bf GSEMO$_{\bm{g-c}}$} is similar to the GSEMO. The only difference is the setting of $f_1$. The GSEMO$_{g-c}$ sets $f_1(\bm{x})$ as the original objective function $g(\bm{x})-c(\bm{x})$, while the GSEMO adopts the distorted objective function, i.e., $f_1(\bm{x}) = (1-\frac{\gamma}{k})^{k-|\bm{x}|}g(\bm{x})-c(\bm{x})+\frac{|\bm{x}|}{k}c(\bm{1})$.\\
$\bullet$\ \ {\bf NSGA-II} is similar to the GSEMO except the employed MOEA. Here, the popular NSGA-II~\citep{deb2002fast} instead of the GSEMO is employed to optimize the reformulated bi-objective problem Eq.~(\ref{eq-bi-problem}).

Note that when implementing the MOEA-based algorithms (i.e., GSEMO, GSEMO$_{g-c}$ and NSGA-II), the solutions with size at least $(k+3)$ (i.e., the overly infeasible solutions) are excluded by setting their first objective $f_1$ to $-\infty$, in order to improve the efficiency. The number of iterations of the GSEMO is set to $\lceil ek^2n \rceil$. The GSEMO performs one function evaluation in each iteration. For the fairness of comparison, the Multi-SDG, GSEMO$_{g-c}$ and NSGA-II run until the number of function evaluations reaches $\lceil ek^2n \rceil$, so that the same computational budget is used. For the NSGA-II, we employ the bit-wise mutation and uniform crossover operators, and use the parameter setting: the population size is 100, and the probabilities of performing mutation and crossover in each iteration are 0.1 and 1.0, respectively. The 100 initial solutions of the NSGA-II are randomly generated. Note that the initial solution of the GSEMO and GSEMO$_{g-c}$ is set to the all-0s solution $\bm{0}$ in the experiments.

We compare these algorithms on two applications of the considered problem Eq.~(\ref{eq-problem}): Bayesian experimental design with costs, and directed vertex cover with costs, where the function $g$ is approximately and exactly submodular, respectively. Because all the algorithms except DG are randomized algorithms, we repeat the run 20 times independently, and report the average $(g-c)$ values and the standard deviation.

\subsection{Bayesian Experimental Design with Costs}

Let $\mathbf{V}=[\bm{v}_1,\bm{v}_2,\ldots,\bm{v}_n] \in \mathbb{R}^{d\times n}$ denote a measurement matrix, and $\mathbf{V}_X \in \mathbb{R}^{d \times |X|}$ denote the submatrix of $\mathbf{V}$ with its columns indexed by $X \subseteq \{1,2,\ldots,n\}$. In Bayesian experimental design, the goal is to select measurements $\mathbf{V}_X$ to maximize the quality of parameter estimation.~\cite{krause2008near} considered the Bayesian A-optimality objective function, in order to maximally reduce the variance of the posterior distribution over parameters in linear models, i.e., $\bm{y}_X=\mathbf{V}^{\mathrm{T}}_{X} \bm{\theta}+\bm{\zeta}_{X}$, where $\bm{\theta} \in \mathbb{R}^d$ and $\bm{\zeta}_{X}\in \mathbb{R}^{|X|}$ are the parameter and noise vectors, respectively. Each measurement $\bm{v}_i$ corresponds to one experiment, which can be performed to obtain a noisy linear observation, but also introduces a cost $c_i$. To have a low cost,~\cite{harshaw2019submodular} considered the problem of Bayesian experimental design with costs, presented as follows.

\begin{definition}[Bayesian Experimental Design with Costs~\citep{harshaw2019submodular}]\label{def-bayesian}
Given a measurement matrix $\mathbf{V}=[\bm{v}_1,\bm{v}_2,\ldots,\bm{v}_n] \in \mathbb{R}^{d\times n}$, a linear model $\bm{y}_X=\mathbf{V}^{\mathrm{T}}_{X} \bm{\theta}+\bm{\zeta}_{X}$, costs $c_1,c_2,\ldots,c_n$, and a budget $k$, where $\bm{\theta}$ has a Gaussian prior distribution $\bm{\theta} \sim \mathcal{N}(0,\mathbf{\Sigma})$, the Gaussian i.i.d. noise $\bm{\zeta}_{X} \sim \mathcal{N}(0,\sigma^2\mathbf{I}_{|X|})$, and $\mathbf{I}_{|X|}$ denotes the identity matrix of size $|X|$, to find a submatrix $\mathbf{V}_{X}$ of at most $k$ columns such that
\begin{align}
\arg\max\nolimits_{X \subseteq \{1,2,\ldots,n\}}\quad g(X)-c(X) \qquad \text{s.t.}\qquad |X|\leq k,
\end{align}
where $g(X)={\rm tr}(\mathbf{\Sigma})-{\rm tr}((\mathbf{\Sigma}^{-1}+\sigma^{-2}\mathbf{V}_X\mathbf{V}^{\mathrm{T}}_X)^{-1})$ is the Bayesian A-optimality function, $\rm{tr}(\cdot)$ denotes the trace of a matrix, and $c(X)=\sum_{i \in X}c_i$ is the cost function.
\end{definition}

It has been shown~\citep{harshaw2019submodular} that $g$ is non-negative, monotone, and approximately submodular with $\gamma \geq (1+(s^2/\sigma^2)\lambda_{\max}(\mathbf{\Sigma}))^{-1}$, where $s=\max_{i \in \{1,2,\ldots,n\}} \|\bm{v}_i\|_2$, and $\lambda_{\max}(\cdot)$ denotes the largest eigenvalue of a square matrix. Note that as the exact computation of $\gamma$ is difficult, this lower bound of $\gamma$ will be used in the implementation of the compared algorithms.

We use two data sets\footnote{\url{http://www.csie.ntu.edu.tw/~cjlin/libsvmtools/datasets/}}: \textit{housing} and \textit{segment}. The former has 506 instances and 14 features, i.e., $n=506$ and $d=14$, and the latter has 2,310 instances and 19 features, i.e., $n=2,310$ and $d=19$. Each feature vector is normalized to have mean 0 and variance 1. The covariance matrix $\mathbf{\Sigma}$ of the Gaussian prior distribution of $\bm{\theta}$ is set to $\mathbf{A}\mathbf{D}\mathbf{A}^{\mathrm{T}}$, where each entry of $\mathbf{A}$ is randomly drawn from the standard Gaussian distribution $\mathcal{N}(0,1)$ and $\mathbf{D}$ is a diagonal matrix with the $i$-th entry on the diagonal equal to $(i/d)^2$. The cost $c_i$ is set to $\alpha \cdot g(\{i\})$, where $\alpha=0.8$.

For the \textit{housing} data set, we use $\sigma\in \{3d,4d,7d\}$ to generate three instances with the lower bound of $\gamma$ equal to $0.456$, $0.567$ and $0.800$, respectively, representing different approximately submodular degrees. For the \textit{segment} data set, we use $\sigma\in \{6d,11d,20d\}$ to generate three instances with the lower bound of $\gamma$ equal to $0.281$, $0.621$ and $0.840$, respectively. The budget $k$ is set from 5 to 20. Note that~\cite{harshaw2019submodular} also used the \textit{housing} data set in their experiments, and our experimental setting is similar to theirs, except that we use different values of $\sigma$ in order to generate instances with different approximately submodular degrees.

\begin{figure*}[t!]\centering
\begin{minipage}[c]{0.8\linewidth}\centering
        \includegraphics[width=\linewidth]{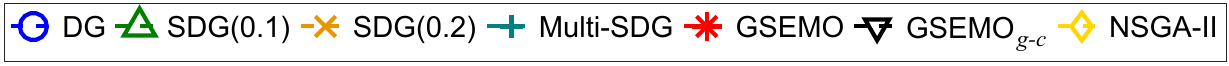}
\end{minipage}\\\vspace{0.2em}
\begin{minipage}[c]{0.32\linewidth}\centering
        \includegraphics[width=1\linewidth]{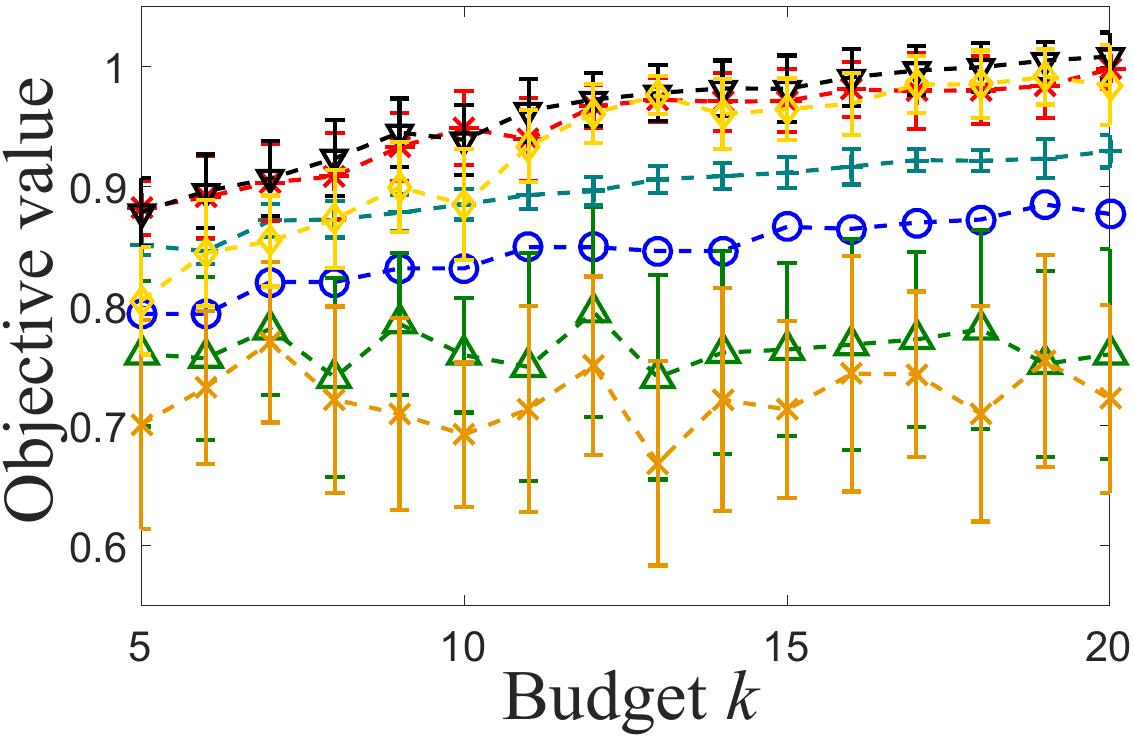}
\end{minipage}
\begin{minipage}[c]{0.32\linewidth}\centering
        \includegraphics[width=1\linewidth]{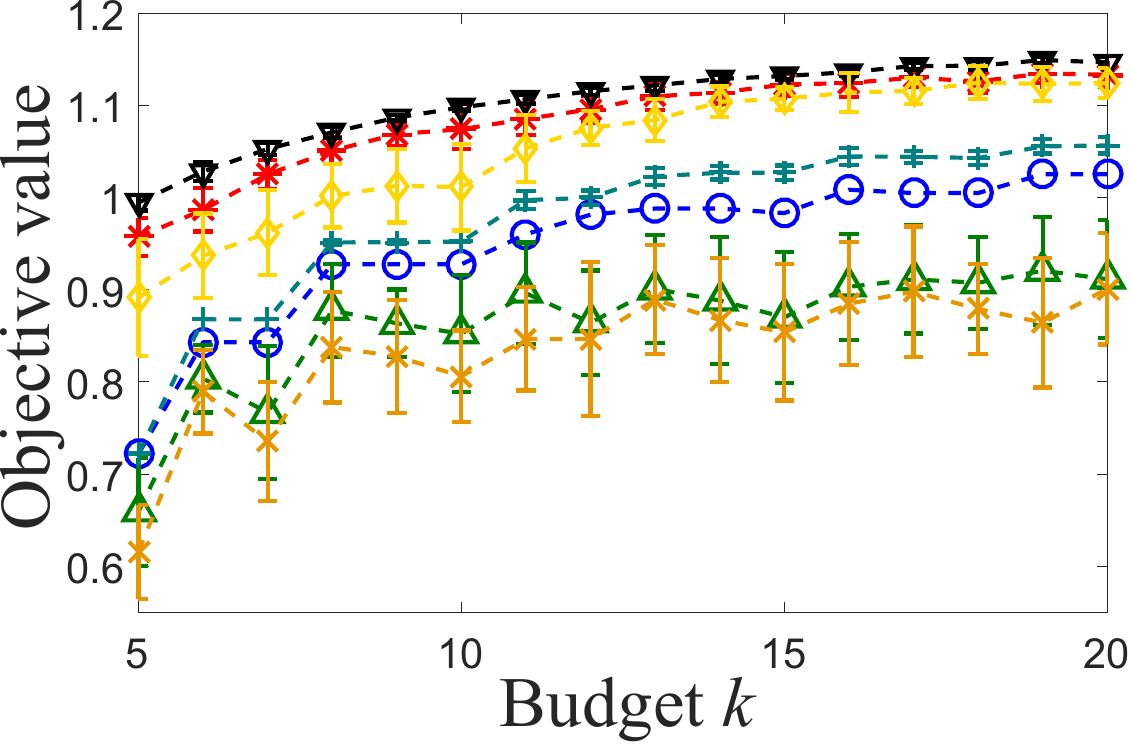}
\end{minipage}
\begin{minipage}[c]{0.32\linewidth}\centering
        \includegraphics[width=1\linewidth]{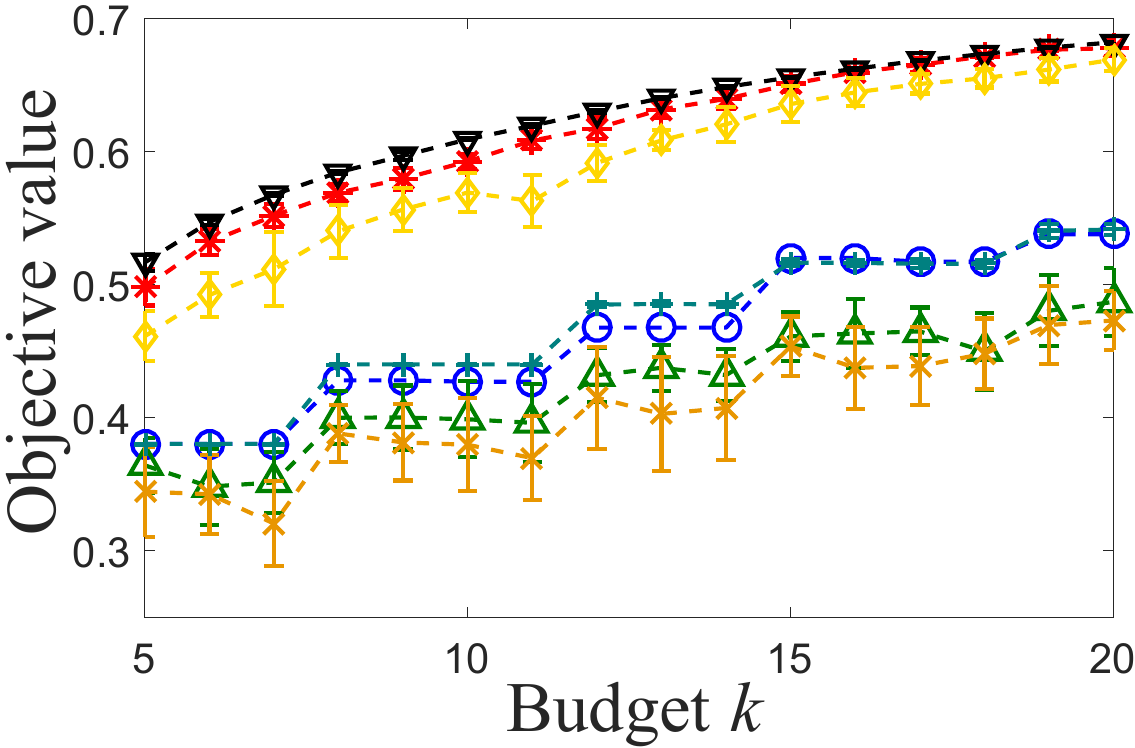}
\end{minipage}\\\vspace{0.3em}
\begin{minipage}[c]{0.32\linewidth}\centering
    \small(a) $\gamma=0.456$
\end{minipage}
\begin{minipage}[c]{0.32\linewidth}\centering
    \small(b) $\gamma=0.567$
\end{minipage}
\begin{minipage}[c]{0.32\linewidth}\centering
    \small(c) $\gamma=0.800$
\end{minipage}\vspace{-0.3em}
\caption{Comparison on the application of Bayesian experimental design with costs. The data set \textit{housing} (506\#inst, 14\#feat) is used to generate three problem instances with different values of $\gamma$.}\label{fig_BO_housing}
\end{figure*}

\begin{figure*}[t!]\centering
\begin{minipage}[c]{0.8\linewidth}\centering
        \includegraphics[width=\linewidth]{figures/legend_horizontal}
\end{minipage}\\\vspace{0.2em}
\begin{minipage}[c]{0.32\linewidth}\centering
        \includegraphics[width=1\linewidth]{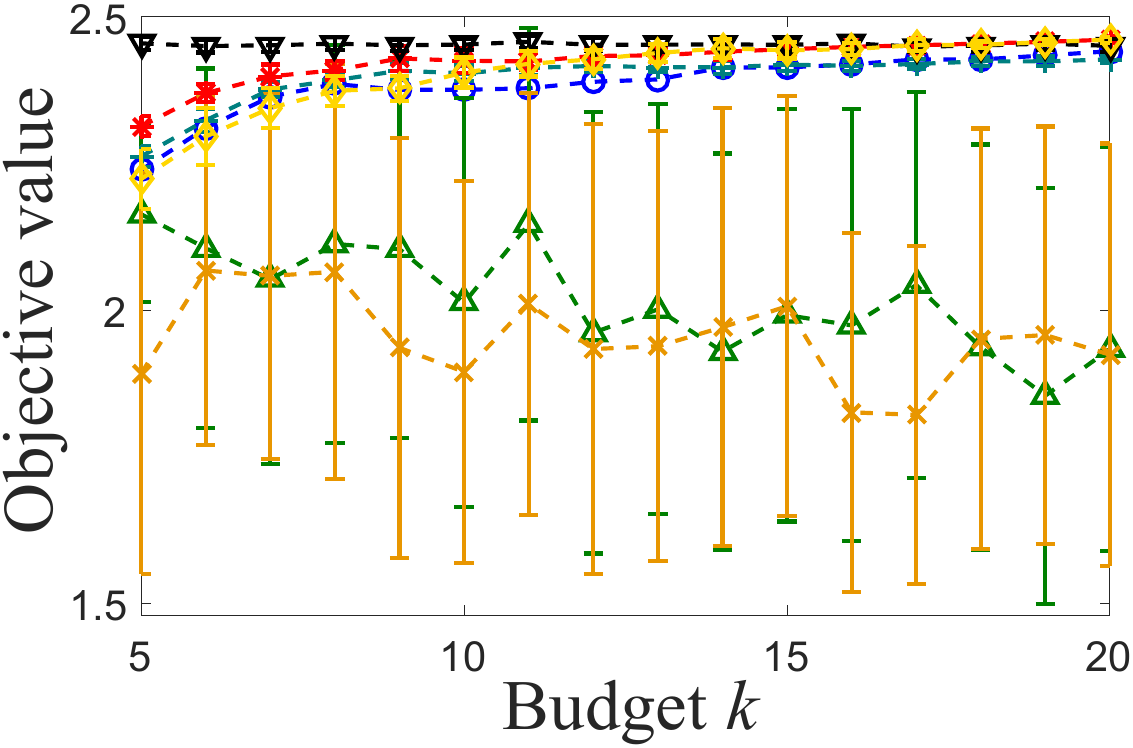}
\end{minipage}
\begin{minipage}[c]{0.32\linewidth}\centering
        \includegraphics[width=1\linewidth]{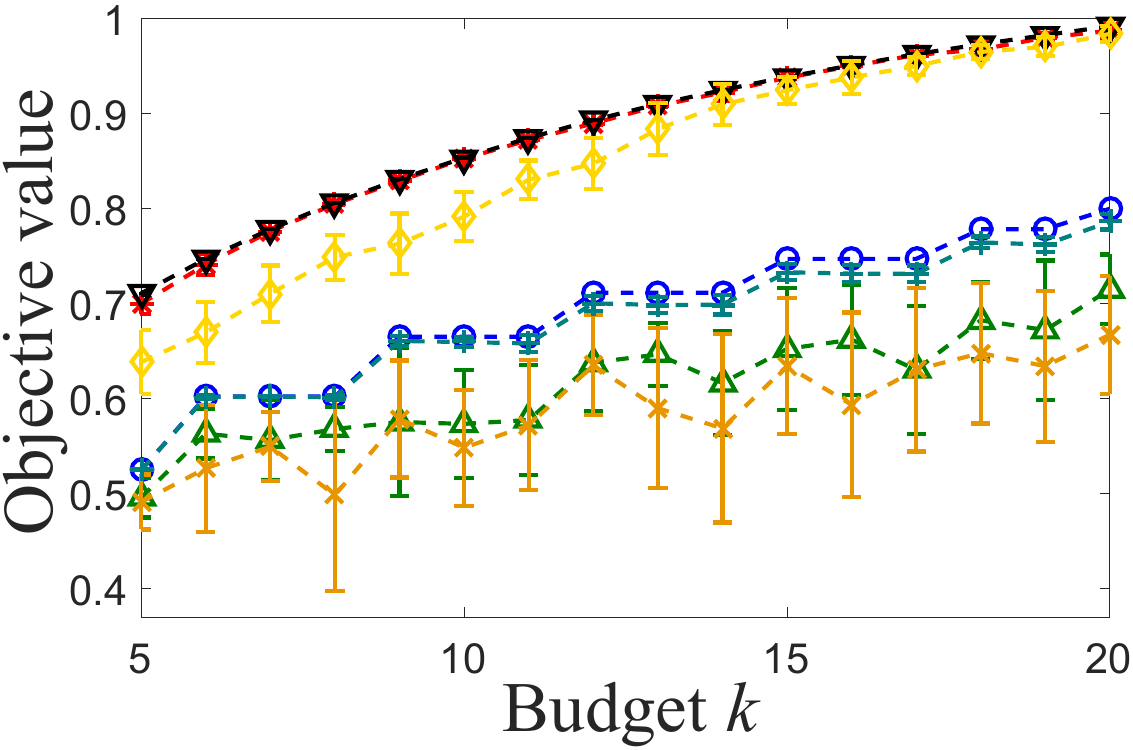}
\end{minipage}
\begin{minipage}[c]{0.32\linewidth}\centering
        \includegraphics[width=1\linewidth]{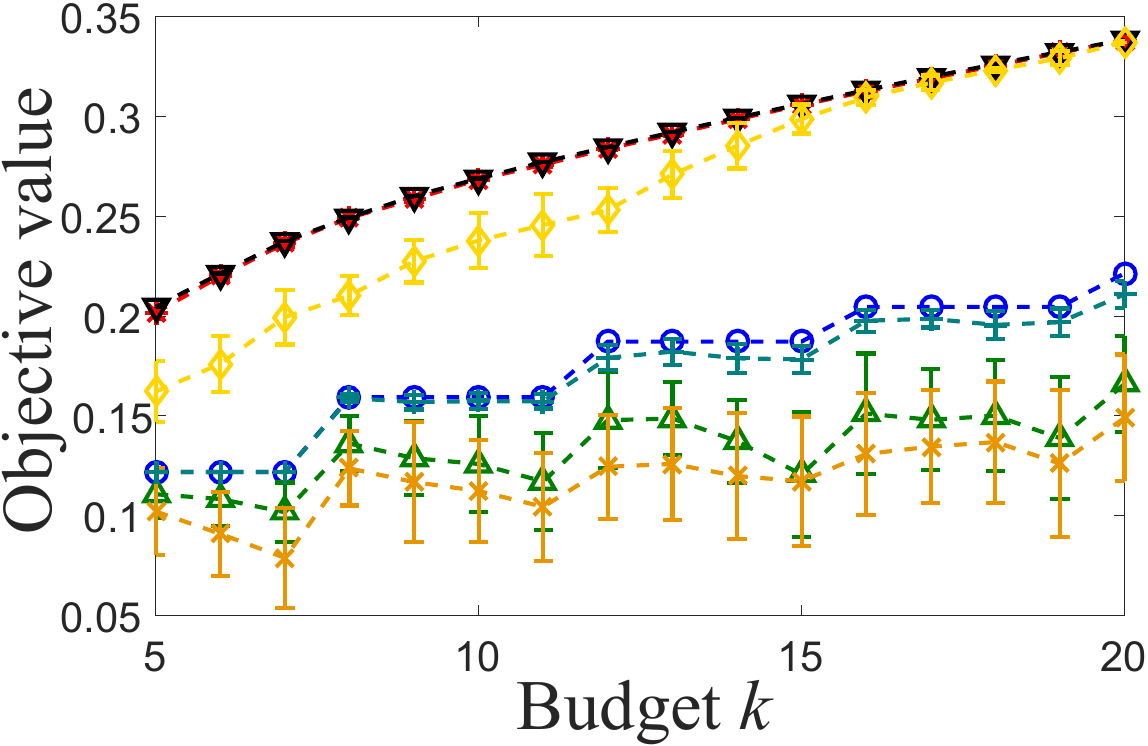}
\end{minipage}\\\vspace{0.3em}
\begin{minipage}[c]{0.32\linewidth}\centering
    \small(a) $\gamma=0.281$
\end{minipage}
\begin{minipage}[c]{0.32\linewidth}\centering
    \small(b) $\gamma=0.621$
\end{minipage}
\begin{minipage}[c]{0.32\linewidth}\centering
    \small(c) $\gamma=0.840$
\end{minipage}\vspace{-0.3em}
\caption{Comparison on the application of Bayesian experimental design with costs. The data set \textit{segment} (2,310\#inst, 19\#feat) is used to generate three problem instances with different values of $\gamma$.}\label{fig_BO_segment}
\end{figure*}

The results are plotted in Figures~\ref{fig_BO_housing} and~\ref{fig_BO_segment}. Note that the standard deviation of some algorithms (e.g., the GSEMO in Figure~\ref{fig_BO_housing}(b)) can be very small, which is because almost the same good solutions are found in 20 runs of one algorithm. As expected, the DG is better than the SDG, and the SDG becomes worse as $\epsilon$ increases. This is because to decrease the running time, a smaller set of candidate items is examined for selection in each iteration of the SDG, which may degrade the performance. The Multi-SDG is also better than the SDG, because the Multi-SDG outputs the best solution found in multiple independent runs of the SDG.

It can be clearly observed that the MOEA-based algorithms (i.e., GSEMO, GSEMO$_{g-c}$ and NSGA-II) are better than the greedy-style algorithms (i.e., DG, SDG(0.1), SDG(0.2) and Multi-SDG), showing the advantage of bi-objective reformulation. Compared with the GSEMO, the NSGA-II performs worse in most cases. As the solutions with size at least $(k+3)$ are excluded in the optimization process, the population can contain at most $(k+3)$ non-dominated solutions, corresponding to the subset size $\{0,1,\ldots,k+2\}$, respectively. Note that any two solutions with the same size are comparable. We test the budget $k$ from 5 to 20, while the population size of the NSGA-II is 100. This implies that the population of the NSGA-II may contain many redundant dominated solutions, thus leading to the bad performance. The GSEMO will not encounter this issue, because it always contains only non-dominated solutions generated-so-far. Another possible reason for the inferior performance of the NSGA-II is the relatively low probability (which is 0.1) of employing mutation in each iteration. Compared with the GSEMO$_{g-c}$ using the original objective function $(g-c)$ as $f_1$, the GSEMO using the distorted objective function as $f_1$ does not show the advantage, and can even be a little worse sometimes. Note that this does not contradict the theoretical analysis, because the GSEMO$_{g-c}$ may not encounter the worst-case examples in practice.

\begin{table*}[t!]\centering
\caption{The objective values (mean+std.) of the compared algorithms on the application of Bayesian experimental design with costs, using the data set \textit{housing} with $k=15$, $\gamma \in \{0.456, 0.567, 0.800\}$ and $\alpha \in \{0,0.1,\ldots,1\}$. For each $\alpha$, the largest values are bolded. In the rows of ``\#best", the largest values are bolded. The average rank of each algorithm on all $\alpha$ is computed, where the smallest values are bolded. The ``\#direct win" denotes the number of $\alpha$ on which the GSEMO has a larger objective value than the corresponding algorithm ($1$ tie is counted as $0.5$ win), where significant cells by the sign-test~\citep{demsar:06} with confidence level $0.05$ are bolded.\vspace{6pt}}\label{table_bayesian_housing}
\small
\newsavebox{\tablebox}
\begin{lrbox}{\tablebox}
\begin{tabular}{c|r|rrrr|rr}
\hline
\multicolumn{8}{c}{}\\[-8pt]
\multicolumn{8}{c}{$\gamma=0.456$}\\
\hline
&&&&&&&\\[-8pt]
$\alpha$ & \multicolumn{1}{c|}{GSEMO} & \multicolumn{1}{c}{DG} & \multicolumn{1}{c}{SDG(0.1)} & \multicolumn{1}{c}{SDG(0.2)} & \multicolumn{1}{c|}{Multi-SDG} & \multicolumn{1}{c}{GSEMO$_{g-c}$} & \multicolumn{1}{c}{NSGA-II} \\
\hline
&&&&&&&\\[-8pt]
0    &\bf{12.894$\pm$0.000} &\bf{12.894}  &12.378$\pm$0.285  &12.068$\pm$0.318        &  12.578$\pm$0.188&\bf{12.894$\pm$0.000}   &  12.872$\pm$0.025\\
0.1  &\bf{10.786$\pm$0.000} &\bf{10.786}  & 10.429$\pm$0.160 & 10.224$\pm$0.230       &  10.621$\pm$0.094&\bf{10.786$\pm$0.000}   &  10.773$\pm$0.018\\
0.2  &\bf{8.761$\pm$0.000}  &\bf{8.761}   &8.532$\pm$0.094   & 8.304$\pm$0.178        & 8.628$\pm$0.075 &\bf{8.761$\pm$0.000}    &  8.756$\pm$0.009 \\
0.3  &\bf{6.877$\pm$0.000}  &\bf{6.877}   &6.648$\pm$0.109   & 6.480$\pm$0.124        & 6.785$\pm$0.057 &  6.876$\pm$0.002&  6.874$\pm$0.005\\
0.4  &\bf{5.250$\pm$0.003}  &\bf{5.250}   &4.998$\pm$0.093   & 4.894$\pm$0.111        &  5.125$\pm$0.041 &\bf{5.250$\pm$0.002} &\bf{5.250$\pm$0.005}\\
0.5  & 3.784$\pm$0.008   &\bf{3.794}   & 3.606$\pm$0.077  & 3.532$\pm$0.072        &  3.716$\pm$0.037 &  3.789$\pm$0.007 &3.790$\pm$0.008\\
0.6  & 2.645$\pm$0.008   &  2.518   & 2.427$\pm$0.071  & 2.400$\pm$0.076        & 2.573$\pm$0.022 &  2.646$\pm$0.006 &\bf{2.648$\pm$0.004}\\
0.7  & 1.719$\pm$0.028   &  1.575   & 1.505$\pm$0.094  & 1.471$\pm$0.102        & 1.629$\pm$0.021 &\bf{1.728$\pm$0.022}&\bf{1.728$\pm$0.020}\\
0.8  & 0.972$\pm$0.031   &  0.866   & 0.789$\pm$0.061  & 0.723$\pm$0.082        &  0.915$\pm$0.011 &\bf{0.990$\pm$0.024}&  0.980$\pm$0.022\\
0.9  & 0.402$\pm$0.011   &  0.371   & 0.268$\pm$0.054  & 0.267$\pm$0.057        & 0.376$\pm$0.006 &\bf{0.409$\pm$0.006}&  0.397$\pm$0.011\\
1    & 0.004$\pm$0.002   &  0.000    & 0.000$\pm$0.000 &0.000$\pm$0.000         & 0.000$\pm$0.000       &\bf{0.006$\pm$0.001}&  0.000$\pm$0.000\\
\hline
&&&&&&&\\[-8pt]
\#best &   \multicolumn{1}{c|}{5} &  \multicolumn{1}{c}{6} &  \multicolumn{1}{c}{0}  &  \multicolumn{1}{c}{0} &  \multicolumn{1}{c|}{0} &  \multicolumn{1}{c}{\bf{8}} &   \multicolumn{1}{c}{3}  \\
\hline
&&&&&&&\\[-8pt]
average rank &   \multicolumn{1}{c|}{2.455} &  \multicolumn{1}{c}{3.273} &  \multicolumn{1}{c}{5.909}  &  \multicolumn{1}{c}{6.818} &  \multicolumn{1}{c|}{4.636} &  \multicolumn{1}{c}{\bf1.909} &   \multicolumn{1}{c}{3.000}  \\
\hline
\multicolumn{2}{c|}{}&&&&&&\\[-8pt]
\multicolumn{2}{c|}{GSEMO: \#direct win} & \multicolumn{1}{c}{7.5} & \multicolumn{1}{c}{\bf11}  & \multicolumn{1}{c}{\bf11} & \multicolumn{1}{c|}{\bf11} & \multicolumn{1}{c}{3} &  \multicolumn{1}{c}{6.5} \\
\hline
\hline
\multicolumn{8}{c}{}\\[-8pt]
\multicolumn{8}{c}{$\gamma=0.567$}\\
\hline
&&&&&&&\\[-8pt]
$\alpha$ & \multicolumn{1}{c|}{GSEMO} & \multicolumn{1}{c}{DG} & \multicolumn{1}{c}{SDG(0.1)} & \multicolumn{1}{c}{SDG(0.2)} & \multicolumn{1}{c|}{Multi-SDG} & \multicolumn{1}{c}{GSEMO$_{g-c}$} & \multicolumn{1}{c}{NSGA-II} \\
\hline
&&&&&&&\\[-8pt]
0    & \bf{15.578$\pm$0.000} &\bf{15.578}  &14.657$\pm$0.382  &14.298$\pm$0.371  &15.201$\pm$0.227&\bf{15.578$\pm$0.000}   &  15.571$\pm$0.027\\
0.1  &\bf{13.363$\pm$0.000}  &\bf{13.363}  &12.763$\pm$0.205  & 12.358$\pm$0.351 & 13.076$\pm$0.192&\bf{13.363$\pm$0.000}   &  13.362$\pm$0.003\\
0.2  &\bf{11.179$\pm$0.000}  &\bf{11.179}  & 10.742$\pm$0.183 & 10.411$\pm$0.271      & 10.911$\pm$0.173&\bf{11.179$\pm$0.000}   &  11.178$\pm$0.006 \\
0.3  &\bf{9.041$\pm$0.000}  &\bf{9.041}   &8.645$\pm$0.188   & 8.406$\pm$0.258  & 8.783$\pm$0.146 &\bf{9.041$\pm$0.000}    &  9.040$\pm$0.004\\
0.4  &\bf{6.943$\pm$0.000}   &  6.942   & 6.625$\pm$0.143  & 6.509$\pm$0.175  & 6.783$\pm$0.085 &\bf{6.943$\pm$0.000}    &  6.941$\pm$0.004\\
0.5  &\bf{5.053$\pm$0.000}   &  5.033   & 4.801$\pm$0.097  & 4.760$\pm$0.095   &4.942$\pm$0.048 &\bf{5.053$\pm$0.000}    &  5.050$\pm$0.007\\
0.6  &3.446$\pm$0.003   &  3.385   & 3.138$\pm$0.075  & 3.104$\pm$0.097   &3.311$\pm$0.027 &\bf{3.448$\pm$0.002}     &  3.435$\pm$0.012\\
0.7  &\bf{2.138$\pm$0.009}   &  2.000   & 1.857$\pm$0.083  & 1.811$\pm$0.112   &1.975$\pm$0.019 &  2.133$\pm$0.004&  2.129$\pm$0.013\\
0.8  & 1.120$\pm$0.013   &  0.983   & 0.894$\pm$0.066  & 0.851$\pm$0.070   & 1.00$\pm$0.010  &\bf{1.132$\pm$0.003}&  1.102$\pm$0.024\\
0.9  & 0.432$\pm$0.014   &  0.386   & 0.316$\pm$0.038  & 0.298$\pm$0.046   & 0.409$\pm$0.004 &\bf{0.443$\pm$0.005}&  0.416$\pm$0.026\\
1    & 0.002$\pm$0.001   &  0.000       & 0.000$\pm$0.000          & 0.000$\pm$0.000       &0.000$\pm$0.000        &\bf{0.004$\pm$0.000}    &  0.000$\pm$0.000\\
\hline
&&&&&&&\\[-8pt]
\#best &   \multicolumn{1}{c|}{7} &  \multicolumn{1}{c}{4} &  \multicolumn{1}{c}{0}  &  \multicolumn{1}{c}{0} &  \multicolumn{1}{c|}{0} &  \multicolumn{1}{c}{\bf10} &   \multicolumn{1}{c}{0}  \\
\hline
&&&&&&&\\[-8pt]
average rank &   \multicolumn{1}{c|}{1.818} &  \multicolumn{1}{c}{3.455} &  \multicolumn{1}{c}{5.909}  &  \multicolumn{1}{c}{6.818} &  \multicolumn{1}{c|}{4.818} &  \multicolumn{1}{c}{\bf 1.545} &   \multicolumn{1}{c}{3.636}  \\
\hline
\multicolumn{2}{c|}{}&&&&&&\\[-8pt]
\multicolumn{2}{c|}{GSEMO: \#direct win} & \multicolumn{1}{c}{\bf9} & \multicolumn{1}{c}{\bf11}  & \multicolumn{1}{c}{\bf11} & \multicolumn{1}{c|}{\bf11} & \multicolumn{1}{c}{4} &  \multicolumn{1}{c}{\bf11} \\
\hline
\hline
\multicolumn{8}{c}{}\\[-8pt]
\multicolumn{8}{c}{$\gamma=0.800$}\\
\hline
&&&&&&&\\[-8pt]
$\alpha$ & \multicolumn{1}{c|}{GSEMO} & \multicolumn{1}{c}{DG} & \multicolumn{1}{c}{SDG(0.1)} & \multicolumn{1}{c}{SDG(0.2)} & \multicolumn{1}{c|}{Multi-SDG} & \multicolumn{1}{c}{GSEMO$_{g-c}$} & \multicolumn{1}{c}{NSGA-II} \\
\hline
&&&&&&&\\[-8pt]
0    & \bf{8.688$\pm$0.000}  &\bf{8.688}   &8.392$\pm$0.165  &8.070$\pm$0.270        & 8.545$\pm$0.105&\bf{8.688$\pm$0.000}    &  8.680$\pm$0.018\\
0.1  &\bf{7.510$\pm$0.000}   &\bf{7.510}   & 7.189$\pm$0.192 &7.047$\pm$0.246         & 7.434$\pm$0.073&\bf{7.510$\pm$0.000}    &  7.506$\pm$0.011\\
0.2  &\bf{6.339$\pm$0.000}   &\bf{6.339}  & 6.122$\pm$0.094 & 6.024$\pm$0.137       &  6.243$\pm$0.062&\bf{6.339$\pm$0.000}    &  6.334$\pm$0.010 \\
0.3  &\bf{5.181$\pm$0.000}   &\bf{5.181}   &5.030$\pm$0.084   & 4.887$\pm$0.161        & 5.126$\pm$0.042 &\bf{5.181$\pm$0.000}    &  5.180$\pm$0.001\\
0.4  &\bf{4.075$\pm$0.000}   &\bf{4.075}   & 3.962$\pm$0.050  & 3.821$\pm$0.079        & 4.017$\pm$0.036 &\bf{4.075$\pm$0.000}    &\bf{4.075$\pm$0.001}\\
0.5  &\bf{3.046$\pm$0.000}   &  2.895   & 2.738$\pm$0.053  & 2.710$\pm$0.079        & 2.839$\pm$0.033 & \bf{3.046$\pm$0.000}    &\bf{3.046$\pm$0.000} \\
0.6  &\bf{2.095$\pm$0.000}  &  1.856   & 1.714$\pm$0.051  & 1.697$\pm$0.049        & 1.806$\pm$0.026 &\bf{2.095$\pm$0.000}    &  2.091$\pm$0.006\\
0.7  & 1.274$\pm$0.003   &  1.050   & 0.963$\pm$0.051  & 0.944$\pm$0.043        &  1.031$\pm$0.013 &\bf{1.277$\pm$0.000}     &  1.269$\pm$0.004\\
0.8  & 0.650$\pm$0.008   &  0.520   & 0.463$\pm$0.026  & 0.461$\pm$0.023        &  0.509$\pm$0.006 &\bf{0.656$\pm$0.000}     &  0.640$\pm$0.008\\
0.9  & 0.216$\pm$0.004   &  0.165   & 0.128$\pm$0.017  & 0.122$\pm$0.022        &  0.165$\pm$0.003  &\bf{0.226$\pm$0.000}    &  0.213$\pm$0.007\\
1    &\bf{0.000$\pm$0.000}   &\bf{0.000}   &\bf{0.000$\pm$0.000}         &\bf{0.000$\pm$0.000}      &\bf{0.000$\pm$0.000}      &\bf{0.000$\pm$0.000}    &\bf{0.000$\pm$0.000}\\
\hline
&&&&&&&\\[-8pt]
\#best &   \multicolumn{1}{c|}{8} &  \multicolumn{1}{c}{6} &  \multicolumn{1}{c}{1}  &  \multicolumn{1}{c}{1} &  \multicolumn{1}{c|}{1} &  \multicolumn{1}{c}{\bf11} &   \multicolumn{1}{c}{3}  \\
\hline
&&&&&&&\\[-8pt]
average rank &   \multicolumn{1}{c|}{2.182} &  \multicolumn{1}{c}{3.182} &  \multicolumn{1}{c}{5.818}  &  \multicolumn{1}{c}{6.727} &  \multicolumn{1}{c|}{4.864} &  \multicolumn{1}{c}{\bf1.909} &   \multicolumn{1}{c}{3.318}  \\
\hline
\multicolumn{2}{c|}{}&&&&&&\\[-8pt]
\multicolumn{2}{c|}{GSEMO: \#direct win} & \multicolumn{1}{c}{8} & \multicolumn{1}{c}{\bf10.5}  & \multicolumn{1}{c}{\bf10.5} & \multicolumn{1}{c|}{\bf10.5} & \multicolumn{1}{c}{4} &  \multicolumn{1}{c}{\bf9.5} \\
\hline
\end{tabular}
\end{lrbox}
\scalebox{0.74}{\usebox{\tablebox}}
\end{table*}

\begin{table*}[t!]\centering
\caption{The objective values (mean+std.) of the compared algorithms on the application of Bayesian experimental design with costs, using the data set \textit{segment} with $k=15$, $\gamma \in \{0.281, 0.621, 0.840\}$ and $\alpha \in \{0,0.1,\ldots,1\}$. For each $\alpha$, the largest values are bolded. In the rows of ``\#best", the largest values are bolded. The average rank of each algorithm on all $\alpha$ is computed, where the smallest values are bolded. The ``\#direct win" denotes the number of $\alpha$ on which the GSEMO has a larger objective value than the corresponding algorithm ($1$ tie is counted as $0.5$ win), where significant cells by the sign-test~\citep{demsar:06} with confidence level $0.05$ are bolded.\vspace{6pt}}\label{table_bayesian_segment}
\small
\begin{lrbox}{\tablebox}
\begin{tabular}{c|r|rrrr|rr}
\hline
\multicolumn{8}{c}{}\\[-8pt]
\multicolumn{8}{c}{$\gamma=0.281$}\\
\hline
&&&&&&&\\[-8pt]
$\alpha$ & \multicolumn{1}{c|}{GSEMO} & \multicolumn{1}{c}{DG} & \multicolumn{1}{c}{SDG(0.1)} & \multicolumn{1}{c}{SDG(0.2)} & \multicolumn{1}{c|}{Multi-SDG} & \multicolumn{1}{c}{GSEMO$_{g-c}$} & \multicolumn{1}{c}{NSGA-II} \\
\hline
&&&&&&&\\[-8pt]
0    &\bf{29.738$\pm$0.000} &\bf{29.738}   &28.688$\pm$0.528 &27.873$\pm$0.839      &29.122$\pm$0.248  &\bf{29.738$\pm$0.000}  &  29.564$\pm$0.110\\
0.1  &\bf{24.886$\pm$0.000} &\bf{24.886}   &23.852$\pm$0.440 &23.330$\pm$0.873&24.467$\pm$0.256  &\bf{24.886$\pm$0.000} &  24.779$\pm$0.099\\
0.2  &\bf{20.489$\pm$0.000} &\bf{20.489}   &19.706$\pm$0.368 &19.146$\pm$0.419       &20.113$\pm$0.195  &\bf{20.489$\pm$0.000} &  20.417$\pm$0.081 \\
0.3  &\bf{16.302$\pm$0.000} & 16.272   &15.473$\pm$0.482  &15.329$\pm$0.474       &16.066$\pm$0.147  &\bf{16.302$\pm$0.000}   &  16.255$\pm$0.064\\
0.4  &\bf{12.685$\pm$0.090}  & 12.563   &11.973$\pm$0.296  &11.870$\pm$0.359       &12.544$\pm$0.113  &  12.648$\pm$0.090   &  12.657$\pm$0.093\\
0.5  &9.426$\pm$0.139   & 9.259    &8.881$\pm$0.388   &8.808$\pm$0.323        &9.390$\pm$0.080    &  9.443$\pm$0.136  &\bf{9.520$\pm$0.086} \\
0.6  &6.636$\pm$0.062  &  6.536   &5.871$\pm$0.371   &5.947$\pm$0.431        &6.608$\pm$0.036   &  6.599$\pm$0.071  &\bf{6.638$\pm$0.060}\\
0.7  &4.279$\pm$0.025   &  4.242   &3.813$\pm$0.430   &3.548$\pm$0.427        &4.243$\pm$0.018   & 4.281$\pm$0.020  &\bf{4.289$\pm$0.028}\\
0.8  &2.448$\pm$0.012   &  2.414   &1.872$\pm$0.347   &1.667$\pm$0.246        &2.419$\pm$0.011   &\bf{2.451$\pm$0.009}  &  2.444$\pm$0.013\\
0.9  &\bf{1.074$\pm$0.006}   &  1.057   &0.748$\pm$0.196   &0.716$\pm$0.196        &1.044$\pm$0.006   &\bf{1.074$\pm$0.006}      &  1.063$\pm$0.013\\
1    &0.001$\pm$0.001   &  0.000       & 0.000$\pm$0.000          & 0.000$\pm$0.000                &  0.000$\pm$0.000   &\bf{0.003$\pm$0.000}           &  0.001$\pm$0.001\\
\hline
&&&&&&&\\[-8pt]
\#best &   \multicolumn{1}{c|}{6} &  \multicolumn{1}{c}{3} &  \multicolumn{1}{c}{0}  &  \multicolumn{1}{c}{0} &  \multicolumn{1}{c|}{0} &  \multicolumn{1}{c}{\bf7} &   \multicolumn{1}{c}{3}  \\
\hline
&&&&&&&\\[-8pt]
average rank &   \multicolumn{1}{c|}{2.045} &  \multicolumn{1}{c}{3.864} &  \multicolumn{1}{c}{6.045}  &  \multicolumn{1}{c}{6.773} &  \multicolumn{1}{c|}{4.591} &  \multicolumn{1}{c}{\bf2.000} &   \multicolumn{1}{c}{2.682}  \\
\hline
\multicolumn{2}{c|}{}&&&&&&\\[-8pt]
\multicolumn{2}{c|}{GSEMO: \#direct win} & \multicolumn{1}{c}{\bf9.5} & \multicolumn{1}{c}{\bf11}  & \multicolumn{1}{c}{\bf11} & \multicolumn{1}{c|}{\bf11} & \multicolumn{1}{c}{4.5} &  \multicolumn{1}{c}{7.5} \\
\hline
\hline
\multicolumn{8}{c}{}\\[-8pt]
\multicolumn{8}{c}{$\gamma=0.621$}\\
\hline
&&&&&&&\\[-8pt]
$\alpha$ & \multicolumn{1}{c|}{GSEMO} & \multicolumn{1}{c}{DG} & \multicolumn{1}{c}{SDG(0.1)} & \multicolumn{1}{c}{SDG(0.2)} & \multicolumn{1}{c|}{Multi-SDG} & \multicolumn{1}{c}{GSEMO$_{g-c}$} & \multicolumn{1}{c}{NSGA-II} \\
\hline
&&&&&&&\\[-8pt]
0    &\bf{10.152$\pm$0.000} &\bf{10.152}  &9.409$\pm$0.543 &8.896$\pm$0.633       &9.880$\pm$0.147   &\bf{10.152$\pm$0.000}  &  10.127$\pm$0.037\\
0.1  &\bf{8.804$\pm$0.000}    & \bf{8.804}   &8.275$\pm$0.381 &7.900$\pm$0.426 &8.616$\pm$0.112   &  \bf{8.804$\pm$0.000} &  8.785$\pm$0.029\\
0.2  &\bf{7.479$\pm$0.000}    &\bf{7.479}   &7.095$\pm$0.238 &6.570$\pm$0.534        &7.321$\pm$0.105   &\bf{7.479 $\pm$0.000} &  7.445$\pm$0.034 \\
0.3  &\bf{6.173$\pm$0.000}    &\bf{6.173}   &5.876$\pm$0.160  &5.620$\pm$0.298        &6.051$\pm$0.060   &\bf{6.173$\pm$0.000}   &  6.164$\pm$0.011\\
0.4  &\bf{4.941$\pm$0.000}    &\bf{4.941}   &4.641$\pm$0.143  &4.461$\pm$0.246        &4.816$\pm$0.060   &\bf{4.941$\pm$0.090}   &  4.924$\pm$0.018\\
0.5  &\bf{3.740$\pm$0.000}    &\bf{3.740}   &3.459$\pm$0.145  &3.360$\pm$0.167        &3.668$\pm$0.050   &\bf{3.740$\pm$0.136}  &  3.738$\pm$0.005\\
0.6  &\bf{2.645$\pm$0.000}    & 2.521   &2.331$\pm$0.073  &2.257$\pm$0.112        &2.459$\pm$0.035   &\bf{2.645$\pm$0.071}  &  2.643$\pm$0.004\\
0.7  &\bf{1.713$\pm$0.001}   & 1.490   &1.361$\pm$0.059  &1.281$\pm$0.097        &1.454$\pm$0.020   &\bf{1.713$\pm$0.001}  &  1.712$\pm$0.002\\
0.8  &0.938$\pm$0.002   & 0.747   &0.643$\pm$0.062  &0.600$\pm$0.105        &0.731$\pm$0.008   &\bf{0.939$\pm$0.000}  &  0.927$\pm$0.015\\
0.9  &0.349$\pm$0.004   & 0.285   &0.223$\pm$0.055  &0.211$\pm$0.050        &0.283$\pm$0.002   &\bf{0.360$\pm$0.001}      &  0.347$\pm$0.003\\
1    &\bf{0.000$\pm$0.000}   &\bf{0.000}   &\bf{0.000$\pm$0.000}   &\bf{0.000$\pm$0.000}  &\bf{0.000$\pm$0.000}    &\bf{0.000$\pm$0.000} &\bf{0.000$\pm$0.000}\\
\hline
&&&&&&&\\[-8pt]
\#best &   \multicolumn{1}{c|}{9} &  \multicolumn{1}{c}{7} &  \multicolumn{1}{c}{1}  &  \multicolumn{1}{c}{1} &  \multicolumn{1}{c|}{1} &  \multicolumn{1}{c}{\bf11} &   \multicolumn{1}{c}{1}  \\
\hline
&&&&&&&\\[-8pt]
average rank &   \multicolumn{1}{c|}{2.091} &  \multicolumn{1}{c}{2.909} &  \multicolumn{1}{c}{5.818}  &  \multicolumn{1}{c}{6.727} &  \multicolumn{1}{c|}{4.909} &  \multicolumn{1}{c}{\bf1.909} &   \multicolumn{1}{c}{3.636}  \\
\hline
\multicolumn{2}{c|}{}&&&&&&\\[-8pt]
\multicolumn{2}{c|}{GSEMO: \#direct win} & \multicolumn{1}{c}{7.5} & \multicolumn{1}{c}{\bf10.5}  & \multicolumn{1}{c}{\bf10.5} & \multicolumn{1}{c|}{\bf10.5} & \multicolumn{1}{c}{4.5} &  \multicolumn{1}{c}{\bf10.5} \\
\hline
\hline
\multicolumn{8}{c}{}\\[-8pt]
\multicolumn{8}{c}{$\gamma=0.840$}\\
\hline
&&&&&&&\\[-8pt]
$\alpha$ & \multicolumn{1}{c|}{GSEMO} & \multicolumn{1}{c}{DG} & \multicolumn{1}{c}{SDG(0.1)} & \multicolumn{1}{c}{SDG(0.2)} & \multicolumn{1}{c|}{Multi-SDG} & \multicolumn{1}{c}{GSEMO$_{g-c}$} & \multicolumn{1}{c}{NSGA-II} \\
\hline
&&&&&&&\\[-8pt]
0    & \bf{2.208$\pm$0.000}  &\bf{2.208}   &2.017$\pm$0.102  &1.804$\pm$0.155       &2.113$\pm$0.059   &  {\bf2.208$\pm$0.000}  &2.197$\pm$0.016\\
0.1  &\bf{1.966$\pm$0.000}   &\bf{1.966}   &1.764$\pm$0.101 &1.655$\pm$0.196       &1.877$\pm$0.057  &  {\bf1.966$\pm$0.000} &1.960$\pm$0.009\\
0.2  &\bf{1.725$\pm$0.000}   &\bf{1.725}   &1.546$\pm$0.113 &1.478$\pm$0.107       &1.664$\pm$0.050  &  {\bf1.725$\pm$0.000} &1.722$\pm$0.004\\
0.3  &\bf{1.484$\pm$0.000}   &\bf{1.484}   &1.354$\pm$0.087  &1.186$\pm$0.135       &1.446$\pm$0.024  &  {\bf1.484$\pm$0.000}  &1.480$\pm$0.006\\
0.4  &\bf{1.245$\pm$0.000}   &\bf{1.245}   &1.143$\pm$0.058  &1.060$\pm$0.096       &1.192$\pm$0.034  &  {\bf1.245$\pm$0.000}  &1.242$\pm$0.004\\
0.5  &\bf{1.006$\pm$0.000}   &0.952        &0.872$\pm$0.046  &0.753$\pm$0.073       &0.917$\pm$0.018  &  {\bf1.006$\pm$0.000}  &1.003$\pm$0.006\\
0.6  &\bf{0.768$\pm$0.000}   &0.623        &0.516$\pm$0.049  &0.468$\pm$0.060       &0.594$\pm$0.017  &  {\bf0.768$\pm$0.000}  &0.767$\pm$0.002\\
0.7  &\bf{0.531$\pm$0.000}   &0.392        &0.317$\pm$0.032  &0.299$\pm$0.051       &0.377$\pm$0.013  &  {\bf0.531$\pm$0.000}  &0.530$\pm$0.003\\
0.8  &\bf{0.306$\pm$0.000}   &0.187        &0.141$\pm$0.024   &0.117$\pm$0.031      &0.179$\pm$0.006  &  {\bf0.306$\pm$0.000}  &0.300$\pm$0.006\\
0.9  &0.116$\pm$0.000        &0.054        &0.038$\pm$0.010   &0.031$\pm$0.014      &0.054$\pm$0.000  &  {\bf0.118$\pm$0.000}  &0.104$\pm$0.006\\
1    &\bf{0.000$\pm$0.000}   &\bf{0.000}   &\bf{0.000$\pm$0.000}       &\bf{0.000$\pm$0.000}          &\bf{0.000$\pm$0.000}    &\bf{0.000$\pm$0.000}    &\bf{0.000$\pm$0.000}\\
\hline
&&&&&&&\\[-8pt]
\#best &   \multicolumn{1}{c|}{10} &  \multicolumn{1}{c}{6} &  \multicolumn{1}{c}{1}  &  \multicolumn{1}{c}{1} &  \multicolumn{1}{c|}{1} &  \multicolumn{1}{c}{\bf11} &   \multicolumn{1}{c}{1}  \\
\hline
&&&&&&&\\[-8pt]
average rank &   \multicolumn{1}{c|}{2.000} &  \multicolumn{1}{c}{3.136} &  \multicolumn{1}{c}{5.818}  &  \multicolumn{1}{c}{6.727} &  \multicolumn{1}{c|}{4.864} &  \multicolumn{1}{c}{\bf1.909} &   \multicolumn{1}{c}{3.545} \\
\hline
\multicolumn{2}{c|}{}&&&&&&\\[-8pt]
\multicolumn{2}{c|}{GSEMO: \#direct win} & \multicolumn{1}{c}{8} & \multicolumn{1}{c}{\bf{10.5}}  & \multicolumn{1}{c}{\bf{10.5}} & \multicolumn{1}{c|}{\bf{10.5}} & \multicolumn{1}{c}{5} &  \multicolumn{1}{c}{\bf{10.5}} \\
\hline
\end{tabular}
\end{lrbox}
\scalebox{0.73}{\usebox{\tablebox}}
\end{table*}

The above experiments compare the algorithms under different values of the budget $k$. By fixing $k=15$, we then consider different values of $\alpha \in \{0,0.1,\ldots,1\}$, which is used to construct the costs $c_i=\alpha \cdot g(\{i\})$. The results on the data sets \textit{housing} and \textit{segment} are shown in Tables~\ref{table_bayesian_housing} and~\ref{table_bayesian_segment}, respectively. To compare the algorithms, we employ several criteria: the number of best (i.e., \#best), the number of direct win (i.e., \#direct win) that is a pairwise comparison followed by the sign-test~\citep{demsar:06}, and the rank~\citep{demsar:06}. From the rows of ``\#best", we can observe that the GSEMO and GSEMO$_{g-c}$ achieve the best performance in most cases. From the rows of ``average rank", it can be observed that the compared algorithms have a similar performance rank as in Figures~\ref{fig_BO_housing} and~\ref{fig_BO_segment}. The SDG algorithms are the worst, between which the SDG(0.1) is better than the SDG(0.2). The DG and Multi-SDG perform better than the SDG, but are clearly worse than the GSEMO and GSEMO$_{g-c}$. The NSGA-II is worse than the GSEMO and GSEMO$_{g-c}$, and can even be worse than the DG. By the sign-test with confidence level 0.05, the GSEMO is significantly better than the SDG and Multi-SDG in all cases, significantly better than the DG in 2/6 cases, and significantly better than the NSGA-II in 4/6 cases. The GSEMO and GSEMO$_{g-c}$ have no significant difference in all cases.

The GSEMO requires $O(n^2(\log n+k))$ expected number of iterations in theory as shown in Theorem~\ref{theo-main}, and runs for $\lceil ek^2n \rceil$ iterations in the experiments. Note that one iteration of the GSEMO costs one objective function evaluation. We want to examine how efficient the GSEMO can be in practice. Thus, we select the DG and SDG for the baseline, and plot the curve of objective value over the running time for the GSEMO, as shown in Figures~\ref{fig_BO-time-housing} and~\ref{fig_BO-time-segment}. The budget $k$ is set to 20. Note that the running time is considered in the number of objective function evaluations, and one unit on the $x$-axis corresponds to $kn$ evaluations, the running time of the DG. We can observe that the GSEMO quickly obtains a better $(g-c)$ value, implying that the GSEMO can be efficient in practice. We also examine the running time in CPU seconds. For the GSEMO, we record the running time until finding a solution at least as good as that obtained by the DG. The results are shown in Figures~\ref{fig_BO-cputime-housing} and~\ref{fig_BO-cputime-segment}. As expected, the SDG is the fastest, and becomes more and more faster as $\epsilon$ increases. The GSEMO is not so efficient as observed in Figures~\ref{fig_BO-time-housing} and~\ref{fig_BO-time-segment}, because the objective function evaluation is very fast in our experiments, and thus the cost of performing mutation and updating the population cannot be neglected but increases the running time substantially.

\begin{figure*}[t!]\centering
\begin{minipage}[c]{0.4\linewidth}\centering
        \includegraphics[width=\linewidth]{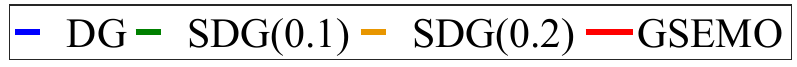}
\end{minipage}\\\vspace{0.2em}
\begin{minipage}[c]{0.32\linewidth}\centering
        \includegraphics[width=1\linewidth]{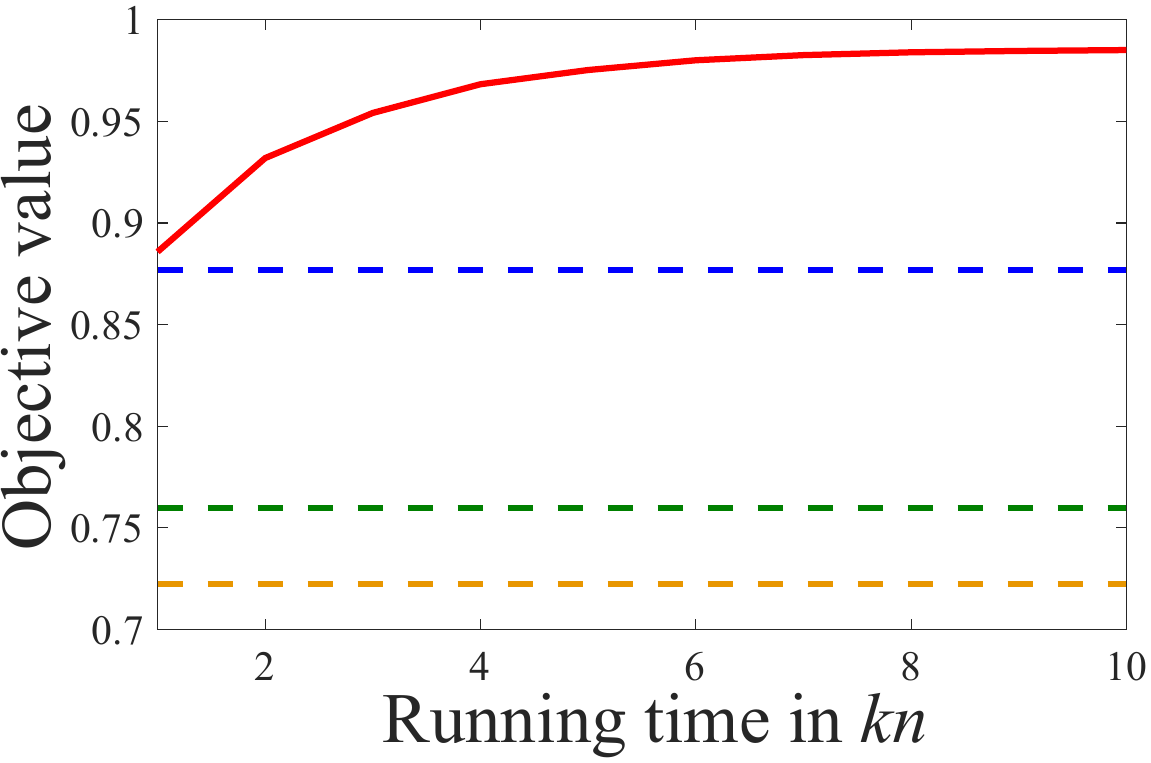}
\end{minipage}
\begin{minipage}[c]{0.32\linewidth}\centering
        \includegraphics[width=1\linewidth]{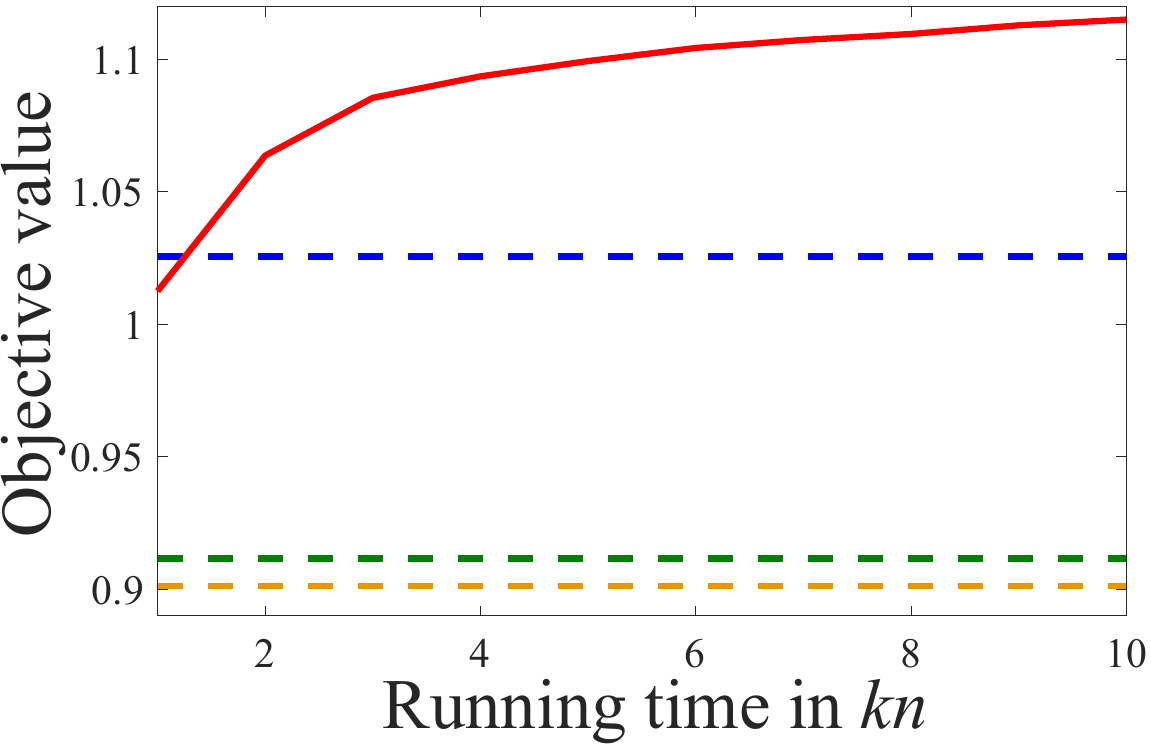}
\end{minipage}
\begin{minipage}[c]{0.32\linewidth}\centering
        \includegraphics[width=1\linewidth]{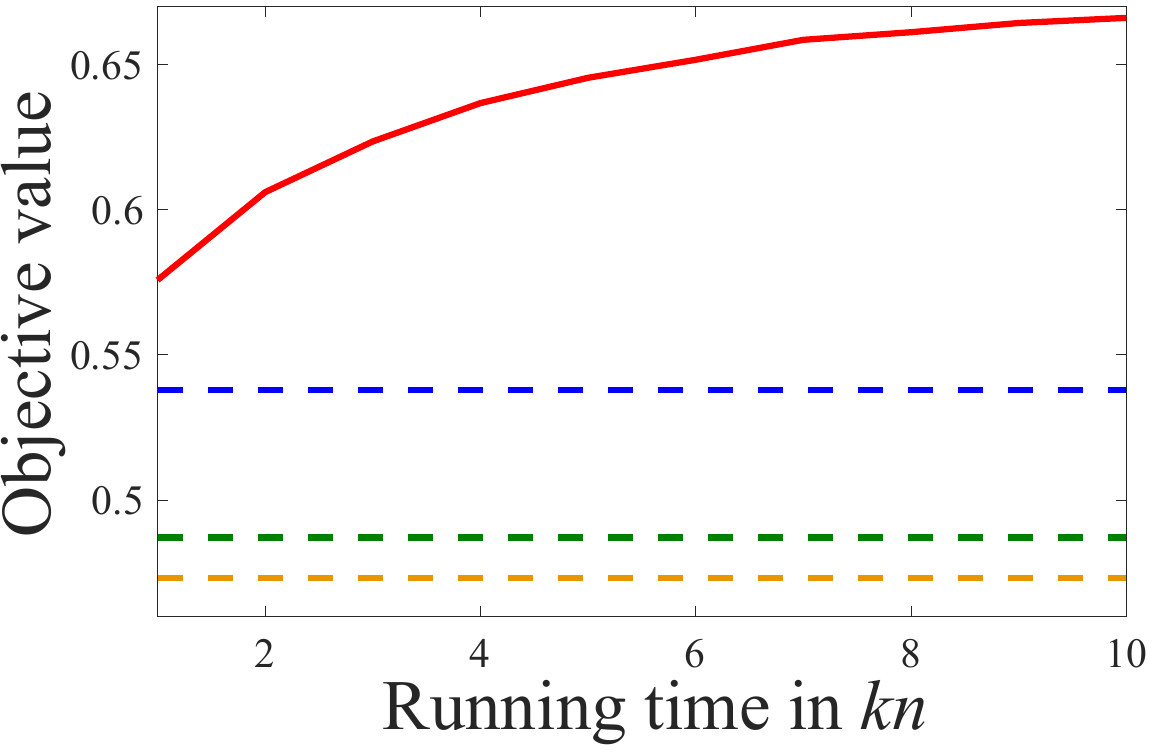}
\end{minipage}\\\vspace{0.3em}
\begin{minipage}[c]{0.32\linewidth}\centering
    \small(a) $\gamma=0.456$
\end{minipage}
\begin{minipage}[c]{0.32\linewidth}\centering
    \small(b) $\gamma=0.567$
\end{minipage}
\begin{minipage}[c]{0.32\linewidth}\centering
    \small(c) $\gamma=0.800$
\end{minipage}\vspace{-0.3em}
\caption{Performance vs. running time (i.e., \#objective evaluations) of the GSEMO on Bayesian experimental design with costs using the data set \textit{housing}, where $k=20$.}\label{fig_BO-time-housing}
\end{figure*}

\begin{figure*}[t!]\centering
\begin{minipage}[c]{0.4\linewidth}\centering
        \includegraphics[width=\linewidth]{figures/time_legend}
\end{minipage}\\\vspace{0.2em}
\begin{minipage}[c]{0.32\linewidth}\centering
        \includegraphics[width=1\linewidth]{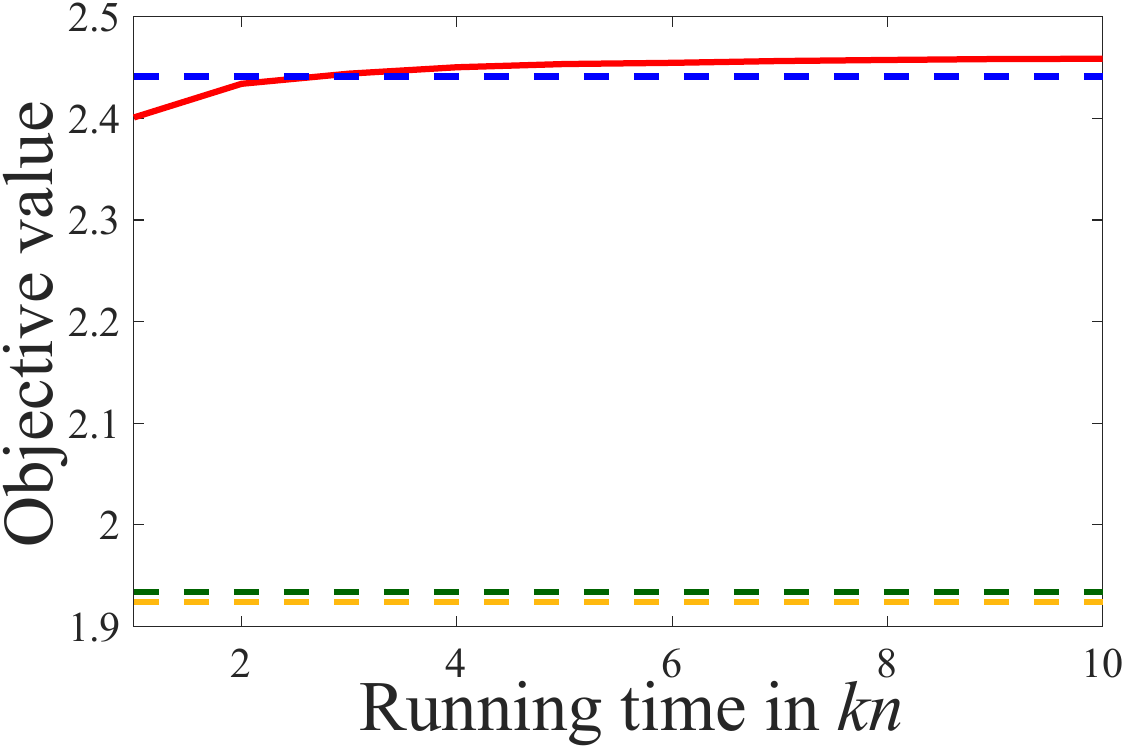}
\end{minipage}
\begin{minipage}[c]{0.32\linewidth}\centering
        \includegraphics[width=1\linewidth]{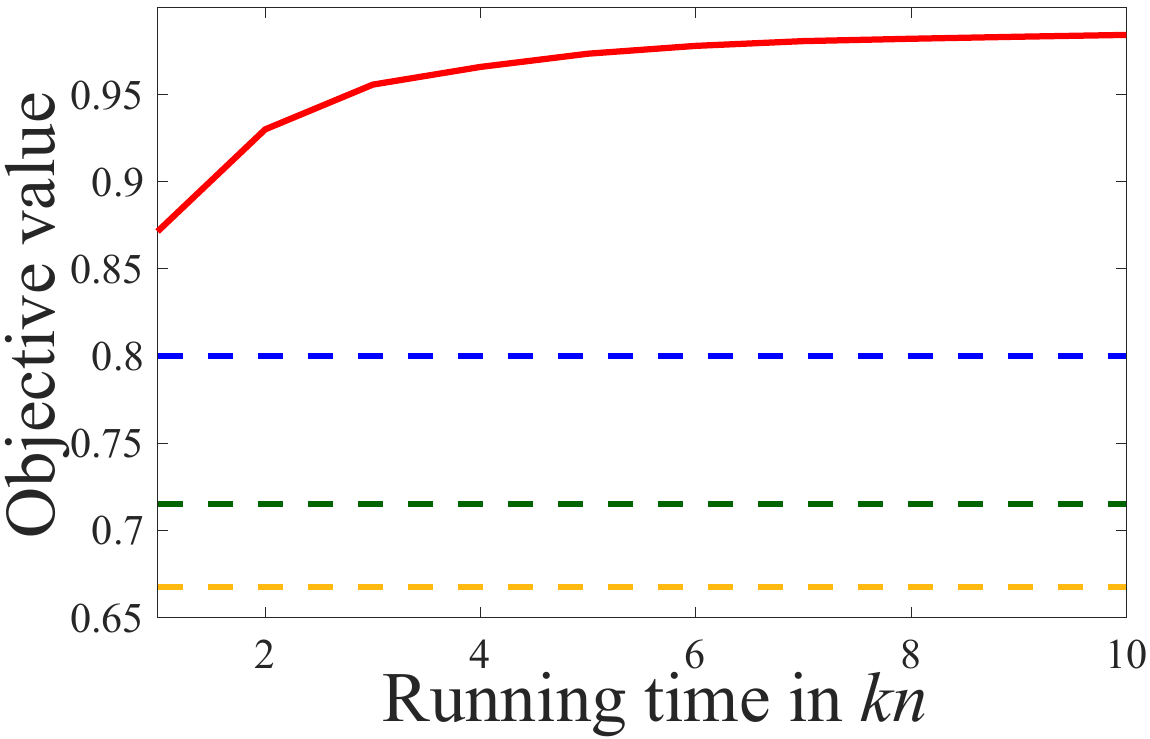}
\end{minipage}
\begin{minipage}[c]{0.32\linewidth}\centering
        \includegraphics[width=1\linewidth]{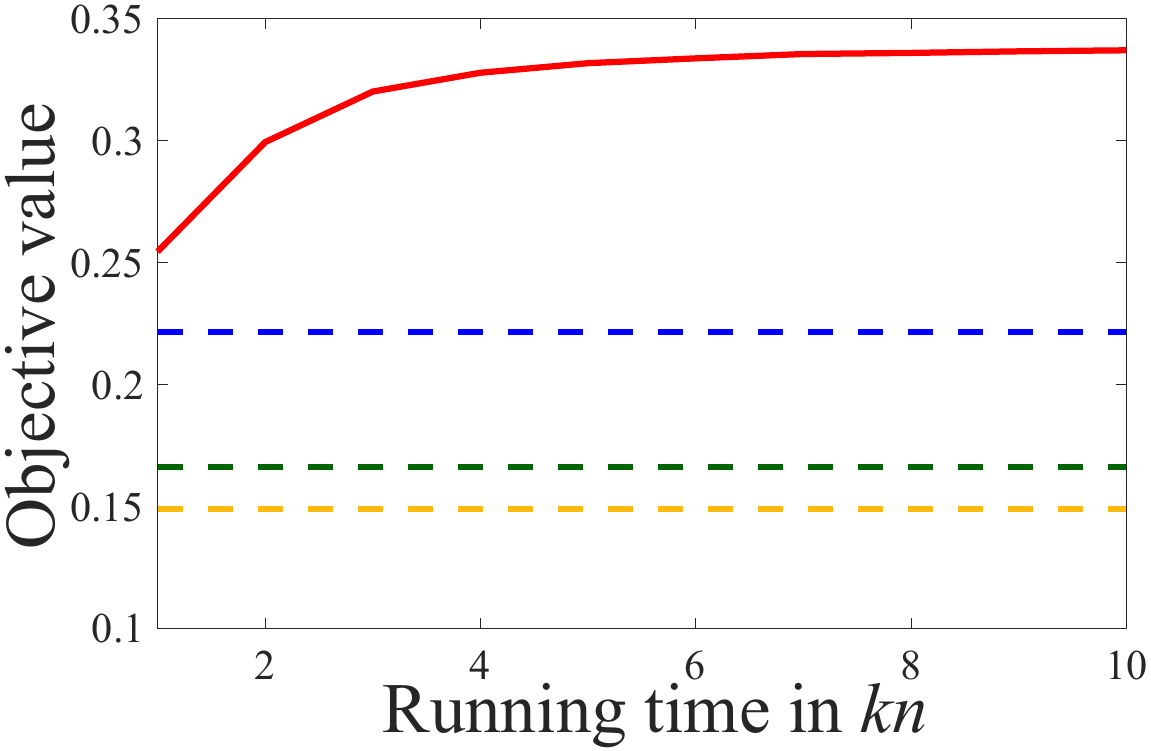}
\end{minipage}\\\vspace{0.3em}
\begin{minipage}[c]{0.32\linewidth}\centering
    \small(a) $\gamma=0.281$
\end{minipage}
\begin{minipage}[c]{0.32\linewidth}\centering
    \small(b) $\gamma=0.621$
\end{minipage}
\begin{minipage}[c]{0.32\linewidth}\centering
    \small(c) $\gamma=0.840$
\end{minipage}\vspace{-0.3em}
\caption{Performance vs. running time (i.e., \#objective evaluations) of the GSEMO on Bayesian experimental design with costs using the data set \textit{segment}, where $k=20$.}\label{fig_BO-time-segment}
\end{figure*}

\begin{figure*}[t!]\centering
\begin{minipage}[c]{0.32\linewidth}\centering
        \includegraphics[width=1\linewidth]{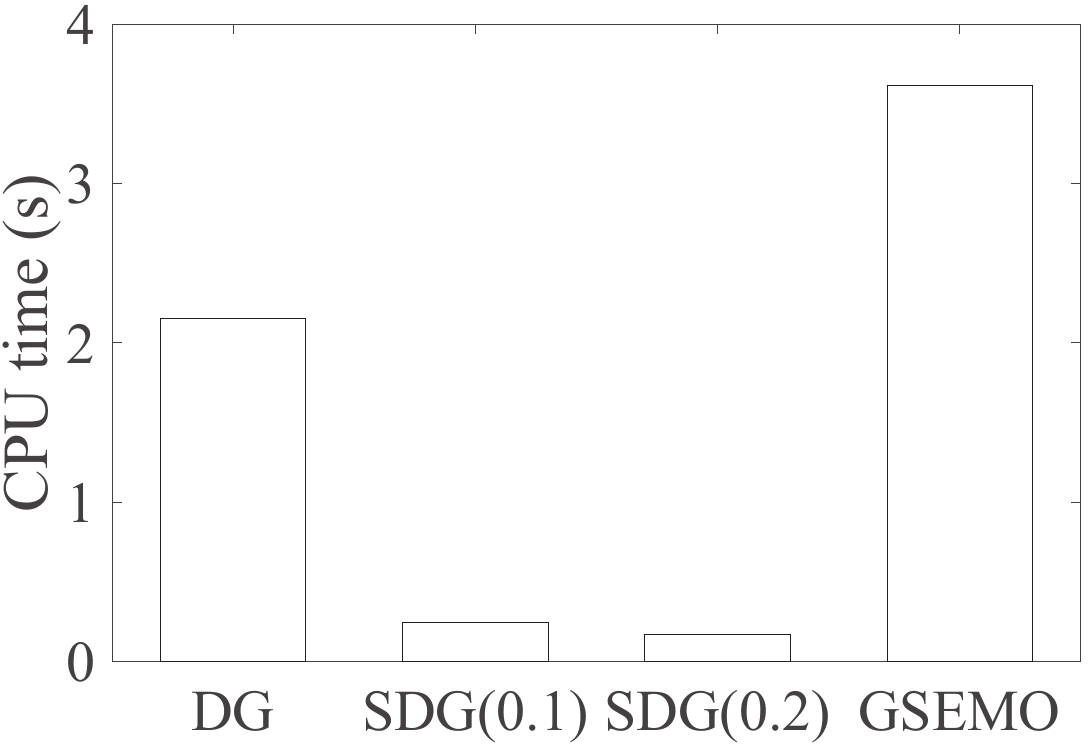}
\end{minipage}
\begin{minipage}[c]{0.32\linewidth}\centering
        \includegraphics[width=1\linewidth]{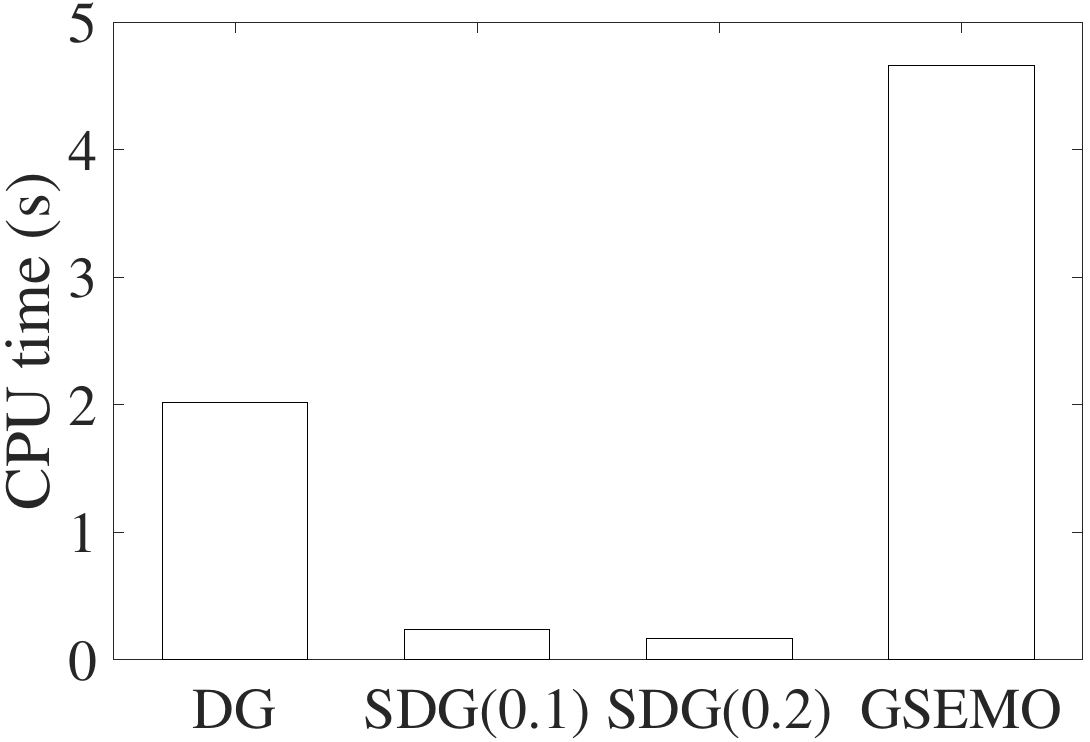}
\end{minipage}
\begin{minipage}[c]{0.32\linewidth}\centering
        \includegraphics[width=1\linewidth]{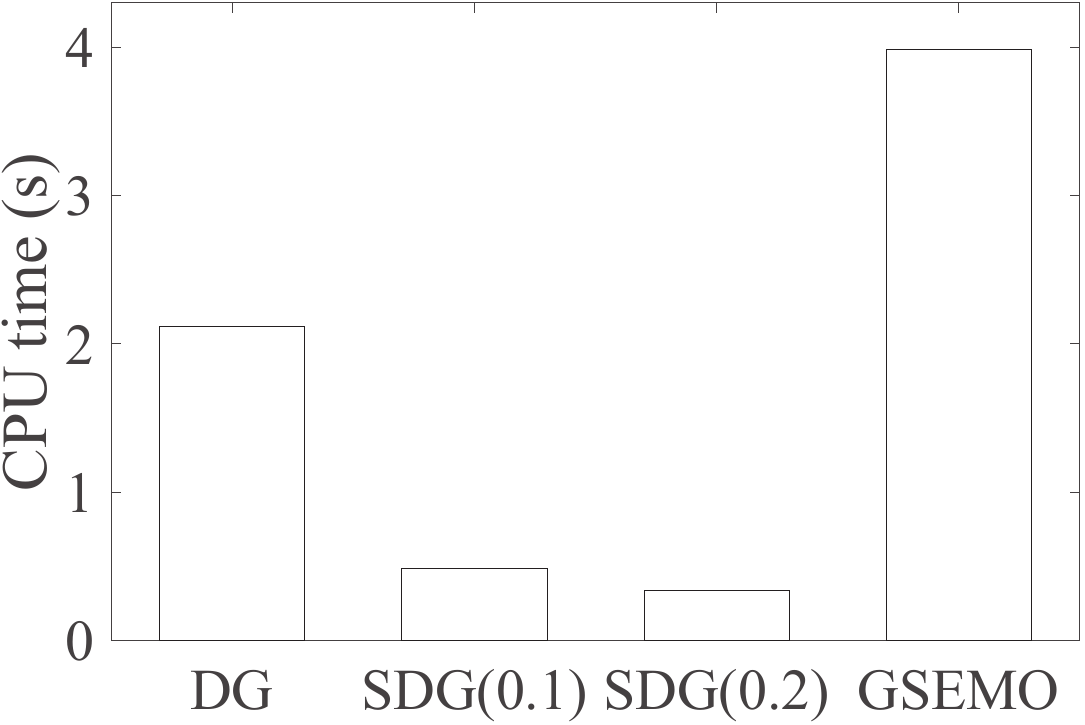}
\end{minipage}\\\vspace{0.3em}
\begin{minipage}[c]{0.32\linewidth}\centering
    \small(a) $\gamma=0.456$
\end{minipage}
\begin{minipage}[c]{0.32\linewidth}\centering
    \small(b) $\gamma=0.567$
\end{minipage}
\begin{minipage}[c]{0.32\linewidth}\centering
    \small(c) $\gamma=0.800$
\end{minipage}\vspace{-0.3em}
\caption{CPU running time of the DG, SDG(0.1), SDG(0.2) and GSEMO on Bayesian experimental design with costs using the data set \textit{housing}, where $k=20$.}\label{fig_BO-cputime-housing}
\end{figure*}

\begin{figure*}[t!]\centering
\begin{minipage}[c]{0.32\linewidth}\centering
        \includegraphics[width=1\linewidth]{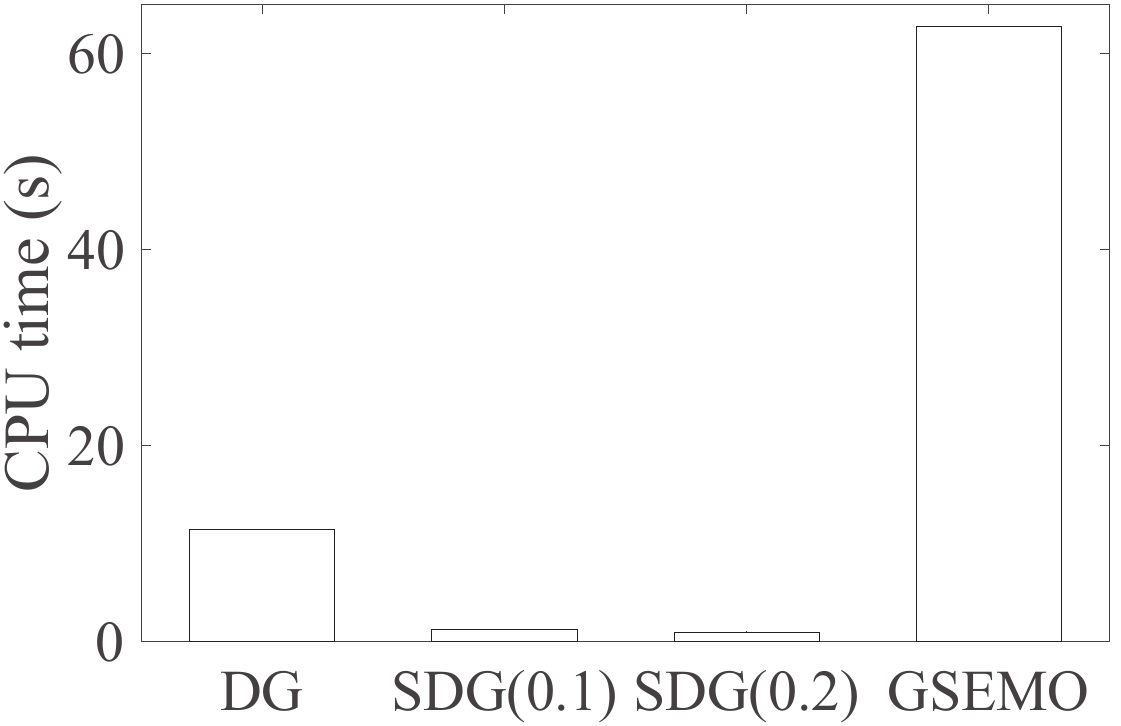}
\end{minipage}
\begin{minipage}[c]{0.32\linewidth}\centering
        \includegraphics[width=1\linewidth]{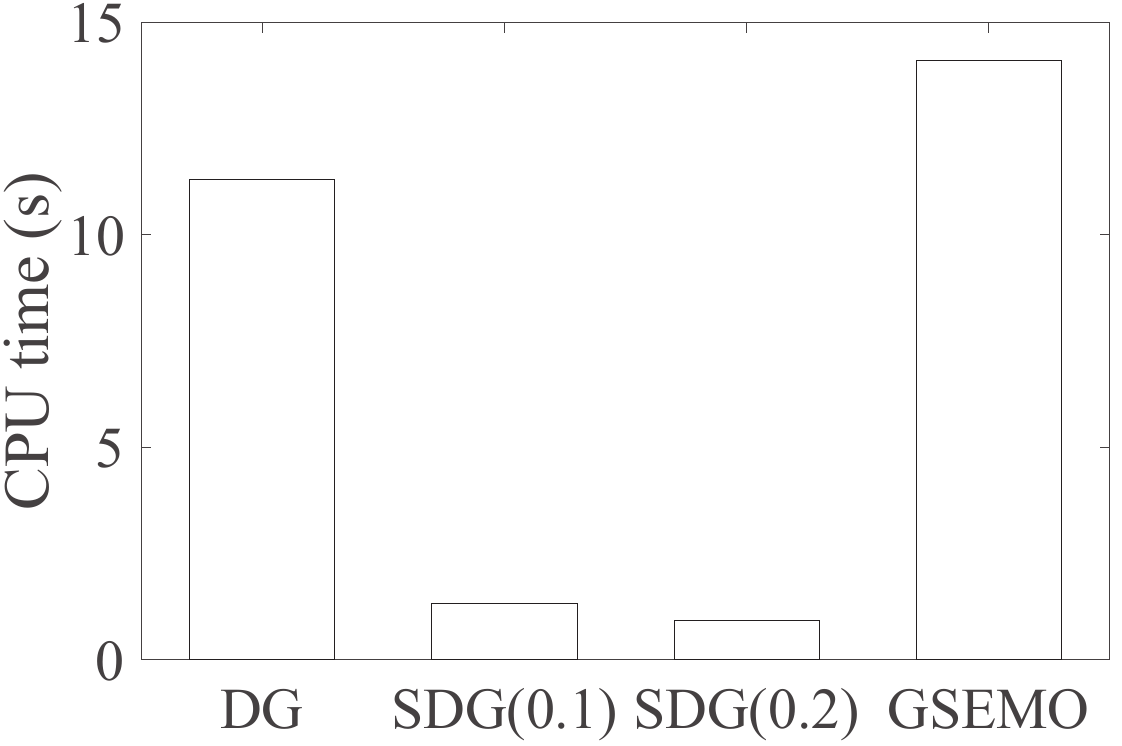}
\end{minipage}
\begin{minipage}[c]{0.32\linewidth}\centering
        \includegraphics[width=1\linewidth]{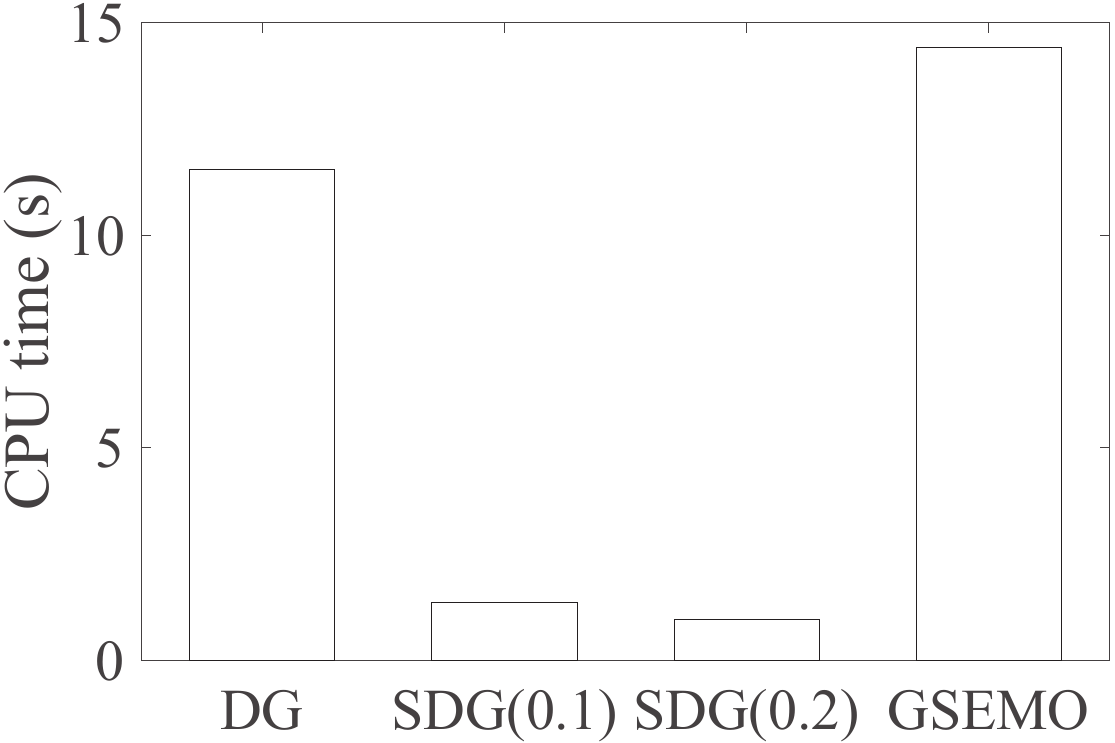}
\end{minipage}\\\vspace{0.3em}
\begin{minipage}[c]{0.32\linewidth}\centering
    \small(a) $\gamma=0.281$
\end{minipage}
\begin{minipage}[c]{0.32\linewidth}\centering
    \small(b) $\gamma=0.621$
\end{minipage}
\begin{minipage}[c]{0.32\linewidth}\centering
    \small(c) $\gamma=0.840$
\end{minipage}\vspace{-0.3em}
\caption{CPU running time of the DG, SDG(0.1), SDG(0.2) and GSEMO on Bayesian experimental design with costs using the data set \textit{segment}, where $k=20$.}\label{fig_BO-cputime-segment}
\end{figure*}

\subsection{Directed Vertex Cover with Costs}

Next, we want to compare the algorithms in cases where the function $g$ is submodular. For this purpose, we use the application of directed vertex cover with costs (as presented in Definition~\ref{def-vertex}), where the function $g$ is exactly submodular. As $g$ is submodular, $\gamma=1$ is used when implementing the algorithms.

We use three graphs\footnote{\url{https://snap.stanford.edu/data/} and \url{https://turing.cs.hbg.psu.edu/txn131/vertex_cover.html}}. \textit{email-Eu-core} is a directed graph with 1,005 vertices and 25,571 edges, generated using email data from a large European research institution. Each vertex represents one person, and edge $(u,v)$ means that person $u$ sent at least one email to person $v$. \textit{frb45-21-mis} and \textit{frb53-24-mis} are two benchmark graphs for vertex cover. Each of them contains five instances, and we use the first one. \textit{frb45-21-mis} contains 945 vertices and 59,186 edges, and \textit{frb53-24-mis} contains 1,272 vertices and 94,227 edges. For these two benchmark graphs, each edge is treated as two directed edges. The weight of each vertex $v$ is set to 1, i.e., $\forall v \in V: w(v)=1$. The cost of each vertex $v$ is set to $1+\max\{d(v)-q,0\}$, i.e., $\forall v\in V: c(v)=1+\max\{d(v)-q,0\}$, where $d(v)$ denotes the out-degree of $v$. Note that~\cite{harshaw2019submodular} also used the \textit{email-Eu-core} data set in their experiments, and our setting is as same as theirs.

By fixing $q=6$, the comparison results under different values of the budget $k \in \{10,20,\ldots,100\}$ are plotted in Figure~\ref{fig_vc}. By fixing $k=60$, the results under different values of $q \in \{1,2,\ldots,12\}$ are shown in Table~\ref{table_cover}. We can observe the similar performance rank of the algorithms as observed for the application of Bayesian experimental design with costs, except that the performance of the GSEMO$_{g-c}$ degrades largely sometimes. On the \textit{frb53-24-mis} data set, the GSEMO$_{g-c}$ is significantly worse than the GSEMO by the sign-test with confidence level $0.05$ in Table~\ref{table_cover}. In fact, it can be observed from Figure~\ref{fig_vc}(c) that the GSEMO$_{g-c}$ is even worse than the DG. This empirical observation verifies our theoretical analysis that the GSEMO$_{g-c}$ fails to achieve an approximation guarantee as good as the GSEMO, i.e., the GSEMO$_{g-c}$ can perform badly in worst cases. Furthermore, the advantage of the GSEMO over other algorithms is more clear for this application. For example, the GSEMO is always significantly better than the DG, SDG, Multi-SDG and NSGA-II in Table~\ref{table_cover}. We also examine the actual running time of the GSEMO to be better than the DG in Figures~\ref{fig_vc-time} and~\ref{fig_vc-cputime}. Compared with that observed in the experiments of Bayesian experimental design with costs (i.e., in Figures~\ref{fig_BO-time-housing}-\ref{fig_BO-cputime-segment}), the GSEMO requires more running time. However, it still takes much less time than the running time upper bound $O(n^2(\log n +k))$ derived in theory.

\begin{figure*}[t!]\centering
\begin{minipage}[c]{0.8\linewidth}\centering
        \includegraphics[width=\linewidth]{figures/legend_horizontal}
\end{minipage}\\\vspace{0.2em}
\begin{minipage}[c]{0.32\linewidth}\centering
        \includegraphics[width=1\linewidth]{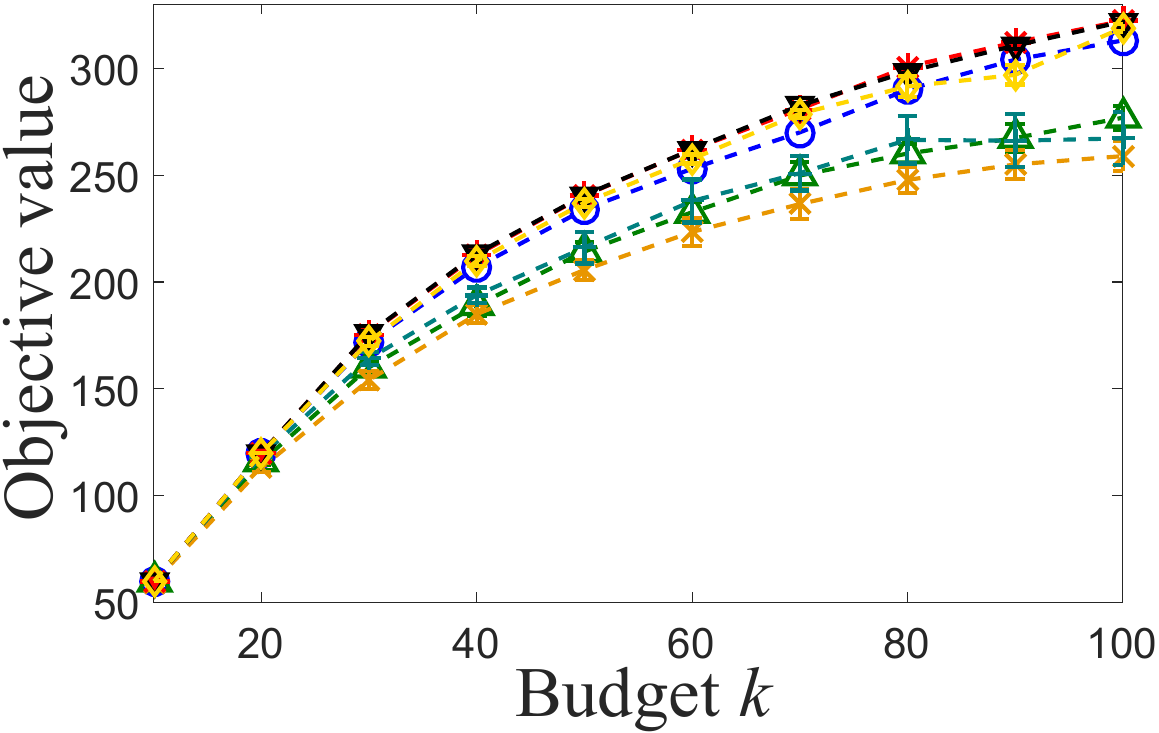}
\end{minipage}
\begin{minipage}[c]{0.32\linewidth}\centering
        \includegraphics[width=1\linewidth]{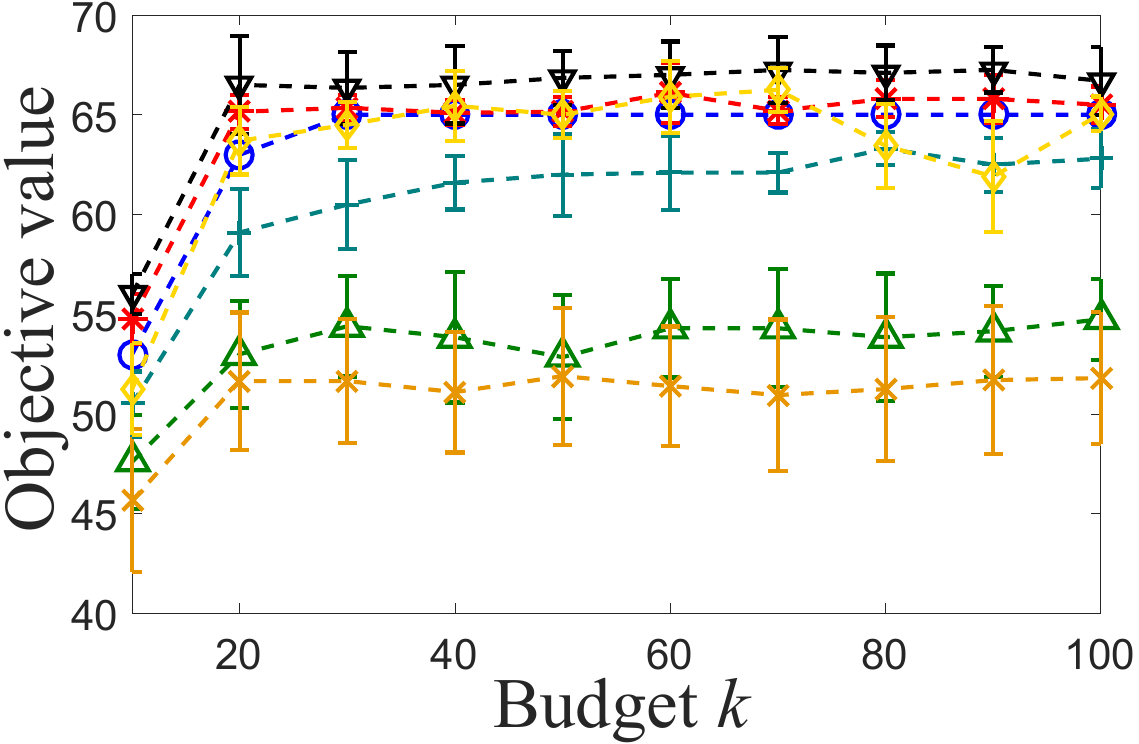}
\end{minipage}
\begin{minipage}[c]{0.32\linewidth}\centering
        \includegraphics[width=1\linewidth]{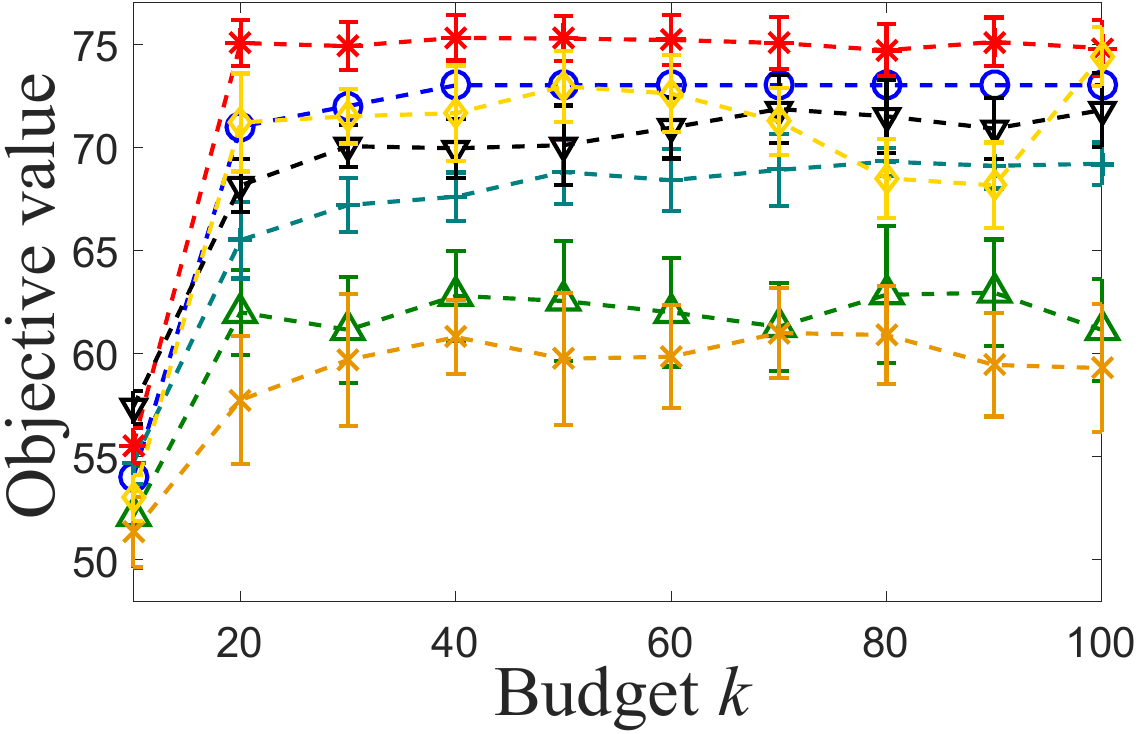}
\end{minipage}\\\vspace{0.3em}
\begin{minipage}[c]{0.32\linewidth}\centering
    \small(a) \textit{email-Eu-core}
\end{minipage}
\begin{minipage}[c]{0.32\linewidth}\centering
    \small(b) \textit{frb45-21-mis}
\end{minipage}
\begin{minipage}[c]{0.32\linewidth}\centering
    \small(c) \textit{frb53-24-mis}
\end{minipage}\vspace{-0.3em}
\caption{Comparison on the application of directed vertex cover with costs. The three data sets \textit{email-Eu-core}, \textit{frb45-21-mis} and \textit{frb53-24-mis} are used to generate three problem instances. Note that $\gamma$ is always 1.}\label{fig_vc}
\end{figure*}

\begin{table*}[t!]\centering
\caption{The objective values (mean+std.) of the compared algorithms on the application of directed vertex cover with costs, using the data sets \textit{email-Eu-core}, \textit{frb45-21-mis} and \textit{frb53-24-mis} with $q \in \{1,2,\ldots,12\}$. For each $q$, the largest values are bolded. In the rows of ``\#best", the largest values are bolded. The average rank of each algorithm on all $q$ is computed, where the smallest values are bolded. The ``\#direct win" denotes the number of $q$ on which the GSEMO has a larger objective value than the corresponding algorithm ($1$ tie is counted as $0.5$ win), where significant cells by the sign-test~\citep{demsar:06} with confidence level $0.05$ are bolded.\vspace{6pt}}\label{table_cover}
\small
\begin{lrbox}{\tablebox}
\begin{tabular}{c|r|rrrr|rr}
\hline
\multicolumn{8}{c}{}\\[-8pt]
\multicolumn{8}{c}{\textit{email-Eu-core}}\\
\hline
&&&&&&&\\[-8pt]
$q$ & \multicolumn{1}{c|}{GSEMO} & \multicolumn{1}{c}{DG} & \multicolumn{1}{c}{SDG(0.1)} & \multicolumn{1}{c}{SDG(0.2)} & \multicolumn{1}{c|}{Multi-SDG} & \multicolumn{1}{c}{GSEMO$_{g-c}$} & \multicolumn{1}{c}{NSGA-II} \\
\hline
&&&&&&&\\[-8pt]
1 &\bf{60.00$\pm$0.000}&  42    &40.35$\pm$1.014  &38.35$\pm$1.711   &41.40$\pm$1.020 &\bf{60.00$\pm$0.000}    &  59.40$\pm$0.490\\
2 &\bf{118.70$\pm$0.557}   &  115   &90.35$\pm$2.798  & 83.55$\pm$4.183   &91.15$\pm$6.239& 118.20$\pm$1.030    &  115.60$\pm$1.530\\
3 & 169.40$\pm$0.860   &   166  &142.15$\pm$3.825 & 130.85$\pm$4.564 &142.35$\pm$6.725&\bf{169.60$\pm$0.663}    &  166.70$\pm$1.552 \\
4 &\bf{196.85$\pm$0.910}  &  191   &168.20$\pm$3.995   &159.05$\pm$3.905   &173.90$\pm$6.098 &  196.30$\pm$1.100   &  192.90$\pm$1.546 \\
5 &227.65$\pm$1.014   &  222   &199.25$\pm$4.170 &190.3$\pm$4.681    &199.05$\pm$8.743& \bf{228.15$\pm$1.152}    &  224.30$\pm$1.833\\
6 & 261.70$\pm$1.382   &  253   & 233.05$\pm$3.598 &222.10$\pm$6.640  &233.30$\pm$7.721 &\bf{262.05$\pm$1.244}   &  257.70$\pm$1.764\\
7 &\bf{298.95$\pm$0.805}  &  289   & 266.35$\pm$5.102 &256.10$\pm$4.795    &269.70$\pm$8.379 &  297.85$\pm$1.526   &  295.75$\pm$2.321 \\
8 &\bf{328.85$\pm$1.526}  &  321   &295.70$\pm$4.473   & 285.80$\pm$7.298   &297.00$\pm$10.640  &  328.30$\pm$1.735    &  326.10$\pm$2.468\\
9 &\bf{360.35$\pm$1.152}  &  351   & 323.95$\pm$5.581 &313.05$\pm$5.268   &324.00$\pm$7.918   &  359.50$\pm$1.987   &  356.90$\pm$1.814 \\
10&\bf{391.15$\pm$0.792}  &  386   &352.00$\pm$4.919 &335.95$\pm$7.871   &354.80$\pm$10.921&  390.00$\pm$1.643   &  386.70$\pm$2.124\\
11&\bf{417.65$\pm$1.652} &  412   & 373.70$\pm$6.543  & 362.25$\pm$8.526  &379.60$\pm$10.519&  416.60$\pm$2.853          &  412.65$\pm$1.768\\
12&\bf{445.40$\pm$1.428}   &  432   & 396.45$\pm$6.217 & 383.70$\pm$6.092   &399.40$\pm$11.872&  443.55$\pm$2.179        &  439.35$\pm$3.103\\
\hline
&&&&&&&\\[-8pt]
\#best &   \multicolumn{1}{c|}{\bf 9} &  \multicolumn{1}{c}{0} &  \multicolumn{1}{c}{0}  &  \multicolumn{1}{c}{0} &  \multicolumn{1}{c|}{0} &  \multicolumn{1}{c}{4} &   \multicolumn{1}{c}{0}  \\
\hline
&&&&&&&\\[-8pt]
average rank &   \multicolumn{1}{c|}{\bf 1.292} &  \multicolumn{1}{c}{4.000} &  \multicolumn{1}{c}{5.917}  &  \multicolumn{1}{c}{7.000} &  \multicolumn{1}{c|}{5.083} &  \multicolumn{1}{c}{1.708} &   \multicolumn{1}{c}{3.000}  \\
\hline
\multicolumn{2}{c|}{}&&&&&&\\[-8pt]
\multicolumn{2}{c|}{GSEMO: \#direct win} & \multicolumn{1}{c}{\bf12} & \multicolumn{1}{c}{\bf12}  & \multicolumn{1}{c}{\bf12} & \multicolumn{1}{c|}{\bf12} & \multicolumn{1}{c}{8.5} &  \multicolumn{1}{c}{\bf12} \\
\hline
\hline
\multicolumn{8}{c}{}\\[-8pt]
\multicolumn{8}{c}{\textit{frb45-21-mis}}\\
\hline
&&&&&&&\\[-8pt]
$q$ & \multicolumn{1}{c|}{GSEMO} & \multicolumn{1}{c}{DG} & \multicolumn{1}{c}{SDG(0.1)} & \multicolumn{1}{c}{SDG(0.2)} & \multicolumn{1}{c|}{Multi-SDG} & \multicolumn{1}{c}{GSEMO$_{g-c}$} & \multicolumn{1}{c}{NSGA-II} \\
\hline
&&&&&&&\\[-8pt]
1 &9.25$\pm$0.536      & 8    &5.30$\pm$0.843  &4.95$\pm$0.589  &7.20$\pm$0.510 &\bf{9.55$\pm$0.497}    &  7.10$\pm$0.768\\
2 & 18.60$\pm$0.663    & 18   &12.60$\pm$1.463 &12.70$\pm$1.584  &16.65$\pm$0.853&\bf{19.15$\pm$0.726}    &  17.50$\pm$1.072\\
3 &29.35$\pm$0.572     &  28  &22.15$\pm$1.711 &21.00$\pm$1.924  &27.40$\pm$0.800&\bf{29.90$\pm$0.436}    &  29.20$\pm$1.122 \\
4 &\bf{41.15$\pm$1.108}     &  37  &31.35$\pm$2.475  &30.50$\pm$2.802  &38.20$\pm$1.288&41.05$\pm$0.865    &  40.10$\pm$1.375\\
5 &\bf{53.30$\pm$0.954}     &  52  &42.35$\pm$2.700  &41.40$\pm$2.458  &49.70$\pm$1.345&52.80$\pm$1.077     &  52.45$\pm$0.865\\
6 &65.80$\pm$1.208     & 65   &54.30$\pm$3.051  &52.15$\pm$2.725  &61.95$\pm$1.658&\bf{66.45$\pm$1.658}    &  65.80$\pm$1.778\\
7 &78.05$\pm$0.973     & 77   &64.30$\pm$3.132  &64.15$\pm$2.971  &73.60$\pm$1.715&\bf{78.20$\pm$2.337}    &  78.15$\pm$1.314\\
8 &91.35$\pm$0.654     & 88   &76.95$\pm$3.248  &74.45$\pm$4.068  &86.85$\pm$1.982&\bf{91.75$\pm$1.410}    &  90.25$\pm$1.374\\
9 &\bf{105.20$\pm$0.872}    &102   &89.35$\pm$4.175  &86.55$\pm$4.904  &99.60$\pm$2.010&105.00$\pm$1.844    &  105.15$\pm$0.963\\
10 &\bf{119.65$\pm$0.910}   &118   &102.85$\pm$2.886 &98.25$\pm$4.097  &111.65$\pm$2.515&117.000$\pm$2.470   &  119.30$\pm$1.187\\
11&\bf{130.90$\pm$1.091}    &127   &111.40$\pm$4.398  &110.50$\pm$4.006 &124.60$\pm$2.888&129.00$\pm$1.581  &  129.65$\pm$1.388\\
12&\bf{144.15$\pm$0.910}    &141   &126.35$\pm$4.246 &122.65$\pm$5.597 &136.40$\pm$2.035&142.75$\pm$2.586   &  142.80$\pm$1.288\\
\hline
&&&&&&&\\[-8pt]
\#best &   \multicolumn{1}{c|}{\bf6} &  \multicolumn{1}{c}{0} &  \multicolumn{1}{c}{0}  &  \multicolumn{1}{c}{0} &  \multicolumn{1}{c|}{0} &  \multicolumn{1}{c}{\bf6} &   \multicolumn{1}{c}{0}  \\
\hline
&&&&&&&\\[-8pt]
average rank &   \multicolumn{1}{c|}{\bf 1.625} &  \multicolumn{1}{c}{3.833} &  \multicolumn{1}{c}{6.083}  &  \multicolumn{1}{c}{6.917} &  \multicolumn{1}{c|}{4.833} &  \multicolumn{1}{c}{1.917} &   \multicolumn{1}{c}{2.792}  \\
\hline
\multicolumn{2}{c|}{}&&&&&&\\[-8pt]
\multicolumn{2}{c|}{GSEMO: \#direct win} & \multicolumn{1}{c}{\bf12} & \multicolumn{1}{c}{\bf12}  & \multicolumn{1}{c}{\bf12} & \multicolumn{1}{c|}{\bf12} & \multicolumn{1}{c}{6} &  \multicolumn{1}{c}{\bf12} \\
\hline
\hline
\multicolumn{8}{c}{}\\[-8pt]
\multicolumn{8}{c}{\textit{frb53-24-mis}}\\
\hline
&&&&&&&\\[-8pt]
$q$ & \multicolumn{1}{c|}{GSEMO} & \multicolumn{1}{c}{DG} & \multicolumn{1}{c}{SDG(0.1)} & \multicolumn{1}{c}{SDG(0.2)} & \multicolumn{1}{c|}{Multi-SDG} & \multicolumn{1}{c}{GSEMO$_{g-c}$} & \multicolumn{1}{c}{NSGA-II} \\
\hline
&&&&&&&\\[-8pt]
1 &10.00$\pm$0.000 &  10    &6.95$\pm$0.589  &7.00$\pm$0.548   &8.95$\pm$0.384 &\bf{10.30$\pm$0.458}    &  8.50$\pm$0.742\\
2 & 20.45$\pm$0.497  &  19    &16.45$\pm$1.161 &15.85$\pm$1.236 &19.65$\pm$0.477&\bf{21.10$\pm$0.539}    &  19.20$\pm$0.872\\
3 &\bf{33.60$\pm$0.490}   &  33    &27.00$\pm$1.517 &26.25$\pm$1.997 &31.60$\pm$0.663&33.50$\pm$0.806    &  32.50$\pm$1.396 \\
4 &\bf{48.50$\pm$0.742}   &  46    &39.05$\pm$2.692  &37.25$\pm$1.920 &45.75$\pm$1.090&47.10$\pm$0.943   &  47.55$\pm$1.244\\
5 &\bf{60.95$\pm$0.865}   &  58    &49.15$\pm$2.351  &47.80$\pm$2.502 &56.15$\pm$1.062&58.95$\pm$0.865   &  59.10$\pm$1.136\\
6 &\bf{74.75$\pm$1.220}   & 73     &62.90$\pm$3.223  &59.55$\pm$3.584 &68.60$\pm$1.562&71.30$\pm$1.520   &  72.95$\pm$1.658\\
7 &\bf{89.60$\pm$0.490}   &  87    &75.55$\pm$2.765  &74.00$\pm$3.479 &83.05$\pm$1.802&85.70$\pm$2.002   &  88.65$\pm$1.682\\
8 &\bf{105.50$\pm$0.592}  &  104   &90.20$\pm$3.234  &89.45$\pm$3.612 &98.55$\pm$2.269&102.05$\pm$1.631   &  104.05$\pm$1.161\\
9 &\bf{122.60$\pm$0.583}  &  121   &107.45$\pm$2.655 &102.75$\pm$3.223&114.40$\pm$1.772&118.45$\pm$1.936   &  119.85$\pm$1.276\\
10&\bf{140.75$\pm$0.829} &  136   &124.30$\pm$3.148 &118.75$\pm$3.998&130.80$\pm$2.462&136.10$\pm$2.844   &  138.25$\pm$1.894\\
11&\bf{157.95$\pm$0.218}  & 157    &137.95$\pm$4.006 &133.70$\pm$4.371&147.15$\pm$3.054&153.30$\pm$2.170  &  156.70$\pm$1.487\\
12&\bf{172.60$\pm$1.068}  & 168    &151.60$\pm$3.555 &143.65$\pm$4.567&160.30$\pm$3.084 &167.40$\pm$1.985   &  170.85$\pm$1.652\\
\hline
&&&&&&&\\[-8pt]
\#best &   \multicolumn{1}{c|}{\bf 10} &  \multicolumn{1}{c}{0} &  \multicolumn{1}{c}{0}  &  \multicolumn{1}{c}{0} &  \multicolumn{1}{c|}{0} &  \multicolumn{1}{c}{2} &   \multicolumn{1}{c}{0}  \\
\hline
&&&&&&&\\[-8pt]
average rank &   \multicolumn{1}{c|}{\bf 1.208} &  \multicolumn{1}{c}{3.125} &  \multicolumn{1}{c}{6.083}  &  \multicolumn{1}{c}{6.917} &  \multicolumn{1}{c|}{4.750} &  \multicolumn{1}{c}{3.083} &   \multicolumn{1}{c}{2.833}  \\
\hline
\multicolumn{2}{c|}{}&&&&&&\\[-8pt]
\multicolumn{2}{c|}{GSEMO: \#direct win} & \multicolumn{1}{c}{\bf11.5} & \multicolumn{1}{c}{\bf12}  & \multicolumn{1}{c}{\bf12} & \multicolumn{1}{c|}{\bf12} & \multicolumn{1}{c}{\bf10} &  \multicolumn{1}{c}{\bf12} \\
\hline
\end{tabular}
\end{lrbox}
\scalebox{0.73}{\usebox{\tablebox}}
\end{table*}

\begin{figure*}[t!]\centering
\begin{minipage}[c]{0.4\linewidth}\centering
        \includegraphics[width=\linewidth]{figures/time_legend}
\end{minipage}\\\vspace{0.2em}
\begin{minipage}[c]{0.32\linewidth}\centering
        \includegraphics[width=\linewidth]{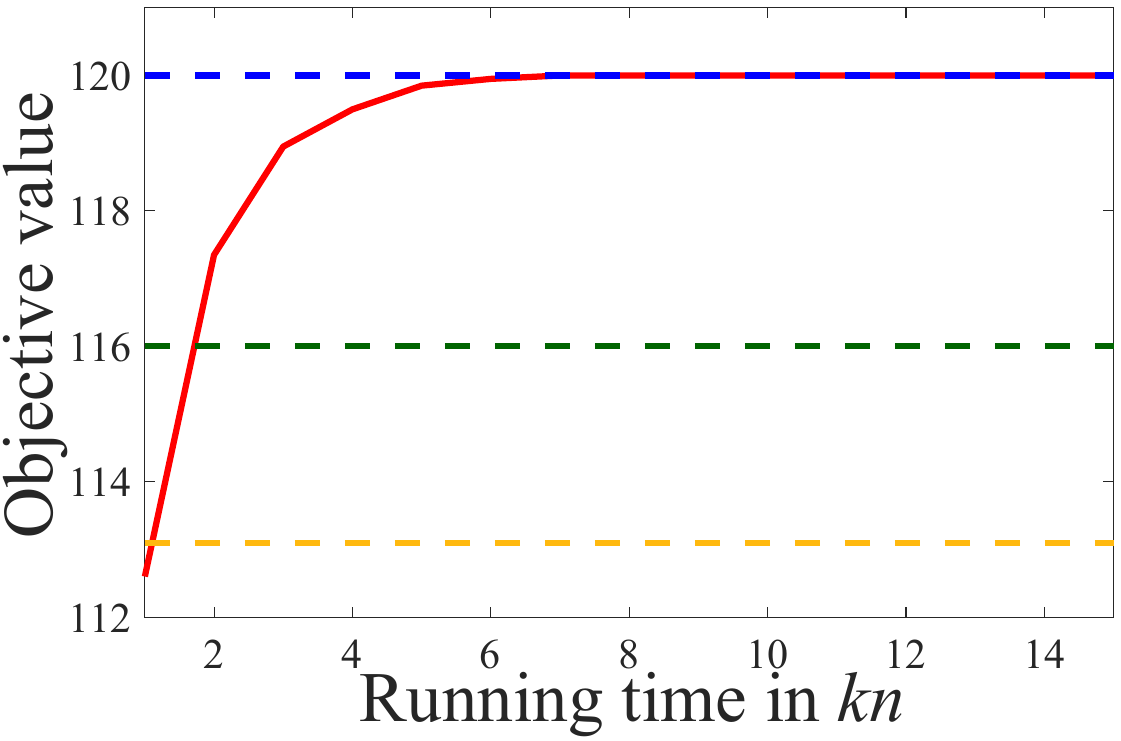}
\end{minipage}
\begin{minipage}[c]{0.32\linewidth}\centering
        \includegraphics[width=\linewidth]{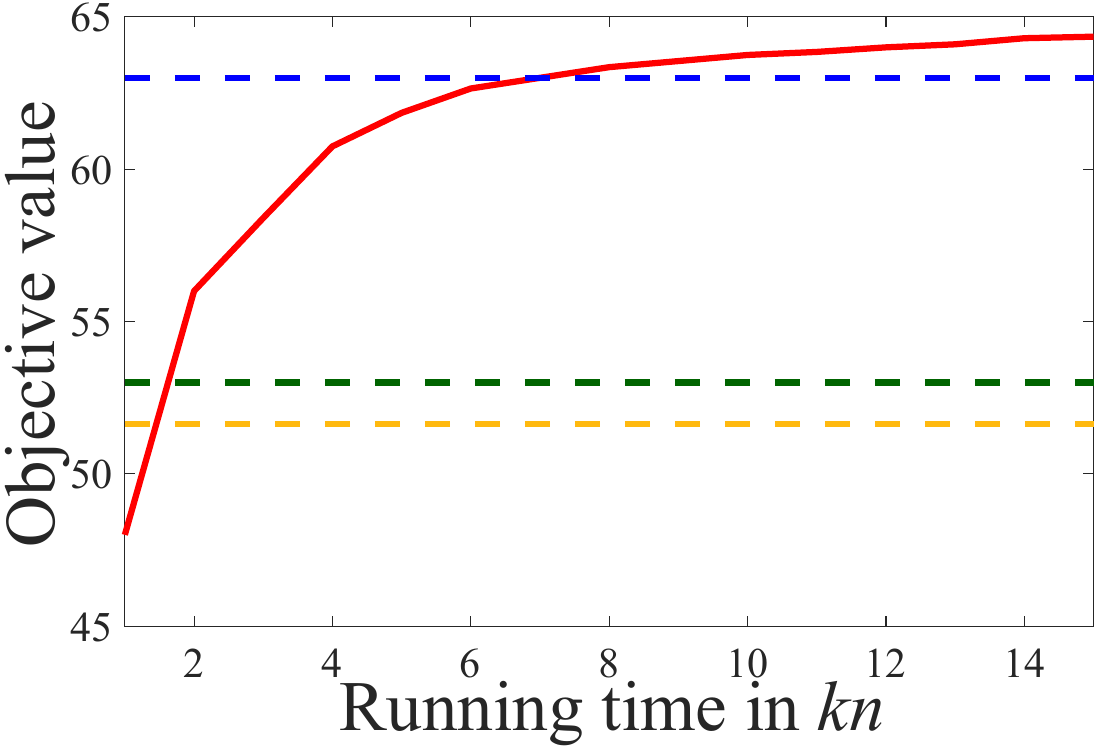}
\end{minipage}
\begin{minipage}[c]{0.32\linewidth}\centering
        \includegraphics[width=\linewidth]{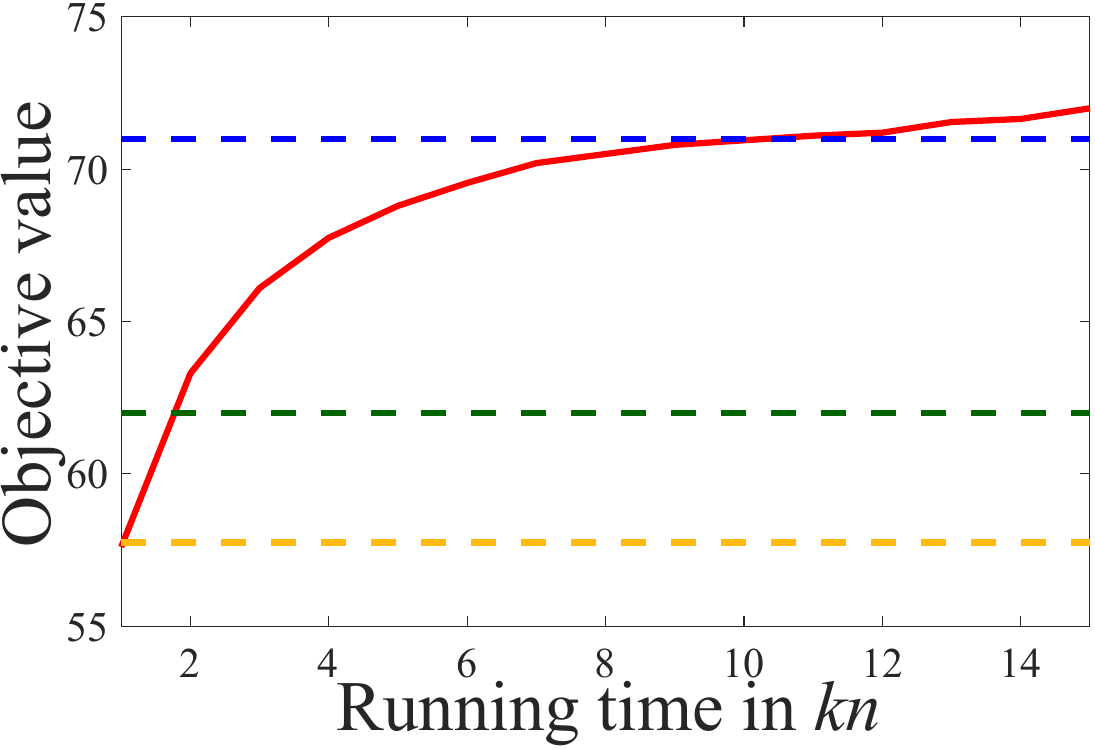}
\end{minipage}\\\vspace{0.3em}
\begin{minipage}[c]{0.32\linewidth}\centering
    \small(a) \textit{email-Eu-core}
\end{minipage}
\begin{minipage}[c]{0.32\linewidth}\centering
    \small(b) \textit{frb45-21-mis}
\end{minipage}
\begin{minipage}[c]{0.32\linewidth}\centering
    \small(c) \textit{frb53-24-mis}
\end{minipage}\vspace{-0.3em}
\caption{Performance vs. running time (i.e., \#objective evaluations) of the GSEMO on directed vertex cover with costs, where $k=20$.}\label{fig_vc-time}
\end{figure*}

\begin{figure*}[t!]\centering
\begin{minipage}[c]{0.32\linewidth}\centering
        \includegraphics[width=\linewidth]{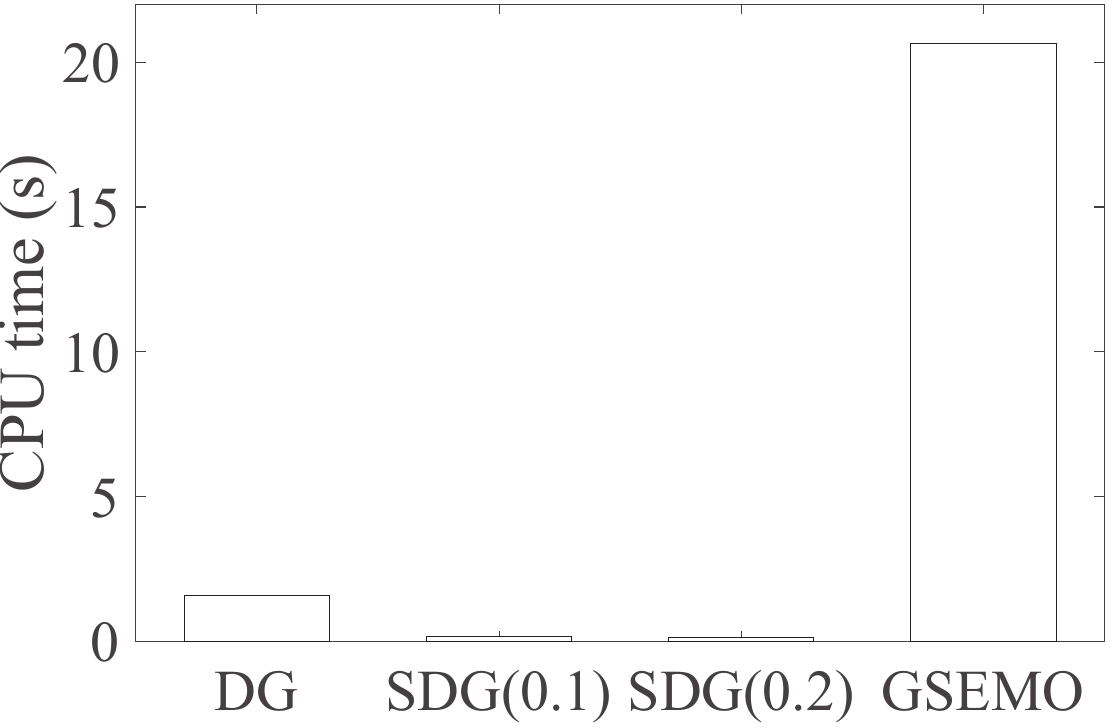}
\end{minipage}
\begin{minipage}[c]{0.32\linewidth}\centering
        \includegraphics[width=\linewidth]{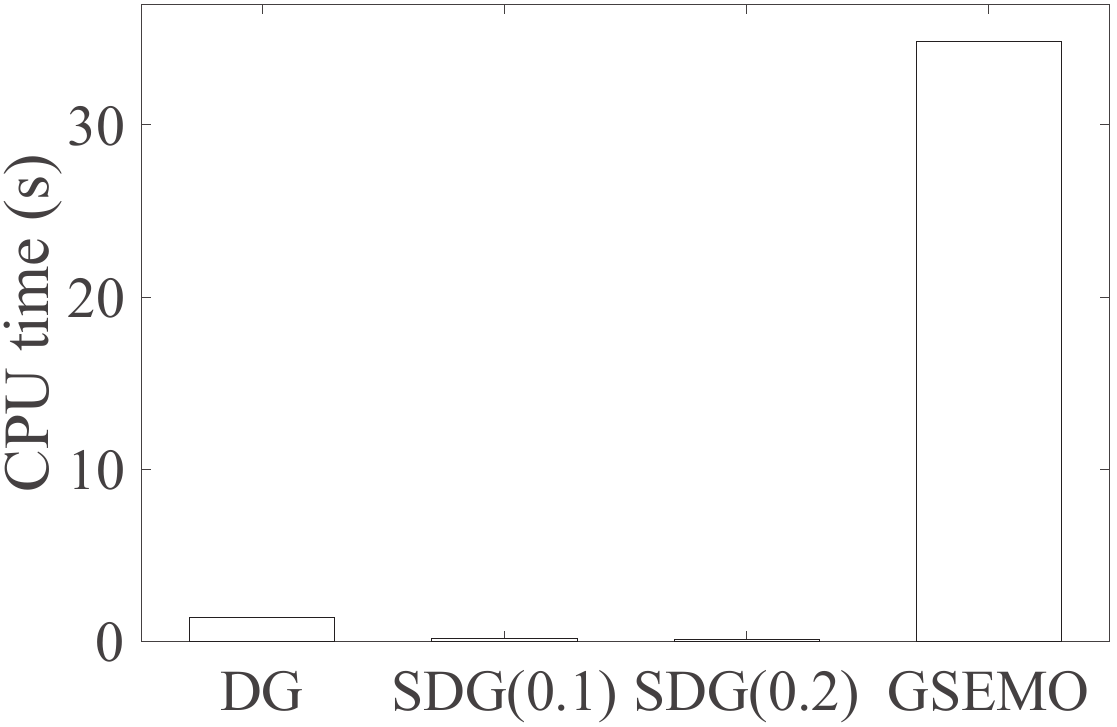}
\end{minipage}
\begin{minipage}[c]{0.32\linewidth}\centering
        \includegraphics[width=\linewidth]{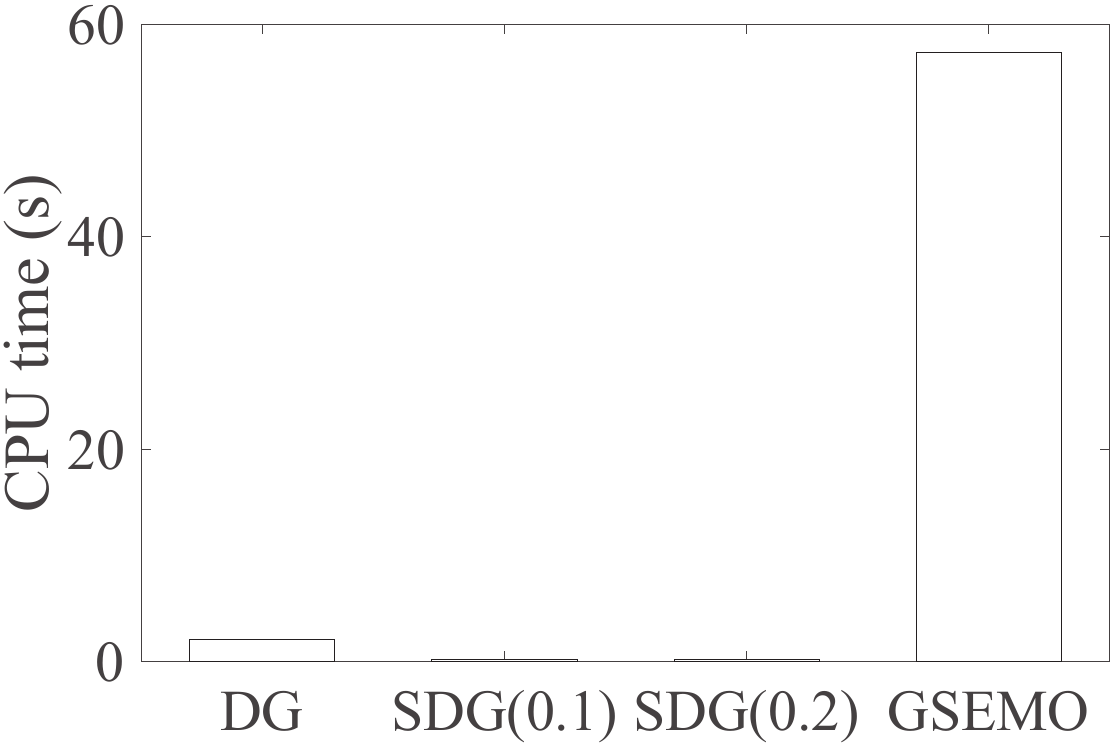}
\end{minipage}\\\vspace{0.3em}
\begin{minipage}[c]{0.32\linewidth}\centering
    \small(a) \textit{email-Eu-core}
\end{minipage}
\begin{minipage}[c]{0.32\linewidth}\centering
    \small(b) \textit{frb45-21-mis}
\end{minipage}
\begin{minipage}[c]{0.32\linewidth}\centering
    \small(c) \textit{frb53-24-mis}
\end{minipage}\vspace{-0.3em}
\caption{CPU running time of the DG, SDG(0.1), SDG(0.2) and GSEMO on directed vertex cover with costs, where $k=20$.}\label{fig_vc-cputime}
\end{figure*}

\section{Conclusion}\label{sec-conlusion}

In this paper, we analyze the approximation performance of the GSEMO for solving the problem $\arg\max\nolimits_{X \subseteq V} g(X)-c(X)$ s.t. $|X| \leq k$, where $g(X)$ is a non-negative monotone approximately submodular function and $c(X)$ is a non-negative modular function, and thus $(g(X)-c(X))$ can be non-monotone and non-submodular in general. We prove that by maximizing the distorted objective function $(1-\gamma/k)^{k-|X|}g(X)-c(X)+(|X|/k)c(V)$ and the subset size $|X|$ simultaneously, the GSEMO within $O(n^2(\log n+k))$ expected running time can find a subset $X$ satisfying that $|X|\leq k$ and $g(X)-c(X) \geq (1-e^{-\gamma})\cdot g(X^*) -c(X^*)$, where $\gamma$ is the submodularity ratio of $g$ and $X^*$ denotes an optimal subset. This reaches the best-known polynomial-time approximation guarantee, previously obtained by the distorted greedy algorithm. Furthermore, we prove that by maximizing the original objective function $(g(X)-c(X))$ and $|X|$ simultaneously, the GSEMO fails to achieve such an approximation guarantee in polynomial running time. Experimental results on the applications of Bayesian experimental design and directed vertex cover show the superior performance of the GSEMO over several distorted greedy-based algorithms and the popular NSGA-II algorithm. They also show that the performance of the GSEMO using the original objective function $(g(X)-c(X))$ can degrade largely sometimes, verifying the theoretical analysis.

Submodular optimization is originally defined for set functions, where a solution is a subset. Now it has been extended to the situations where a solution is a multiset or a sequence. Thus, it is expected to examine the performance of EAs for these extensions of submodular optimization. There has been some preliminary efforts toward this direction, e.g.,~\citep{qian2018sequence,qian2018constrained,qian2018multiset}. It is also interesting to study the behavior of EAs for submodular optimization under uncertain environments. For example, it has been proved that the GSEMO can maintain the approximation guarantee efficiently under dynamic constraints~\citep{aaai2019dynamic,aaai2021dynamic}, and can be robust against noise~\citep{qian2017subset,qian2019distributed}.

\section{Acknowledgments}

The author wants to thank the editor/associate editor and anonymous reviewers for their helpful comments and suggestions. This work was supported by the NSFC (62022039) and the Jiangsu NSF (BK20201247).

\small

\bibliographystyle{apalike}
\bibliography{ECJ-2019-116R2}

\end{document}